\documentclass[accepted]{uai2026} % after acceptance, for a revised version; 
% also before submission to see how the non-anonymous paper would look like 
                        
%% There is a class option to choose the math font
% \documentclass[mathfont=ptmx]{uai2026} % ptmx math instead of Computer
                                         % Modern (has noticeable issues)
%\documentclass[mathfont=newtx]{uai2026} % newtx fonts (improves upon
                                          % ptmx; less tested, no support)
% NOTE: Only keep *one* line above as appropriate, as it will be replaced
%       automatically for papers to be published. Do not make any other
%       change above this note for an accepted version.

%% Choose your variant of English; be consistent
\usepackage[american]{babel}
% \usepackage[british]{babel}

%% Some suggested packages, as needed:
\usepackage{natbib} % has a nice set of citation styles and commands
    \bibliographystyle{plainnat}
    
\usepackage{mathtools} % amsmath with fixes and additions
\usepackage{booktabs} % commands to create good-looking tables
\usepackage{tikz} % nice language for creating drawings and diagrams

\usepackage{url}

\usepackage{amsmath}
\usepackage{amsthm}
\usepackage{amssymb}
\usepackage{algorithm}
\usepackage{algpseudocode}

\DeclareMathOperator*{\argmin}{arg\,min}

\usepackage{subfig}
\usepackage{multirow}
\usepackage[utf8]{inputenc}
\usepackage{url}
\usepackage[overload]{empheq}
\usepackage{cleveref} 
\usepackage{mathtools}
\usepackage{amsmath}
\usepackage{amssymb}
\usepackage{mathtools}
\usepackage{amsthm}
\usepackage{comment}
\usepackage{algorithm}
\usepackage{algorithmicx}
\usepackage{algpseudocode}
\usepackage{braket}
\usepackage{relsize}
\usepackage{adjustbox}
\usepackage{lmodern}
\theoremstyle{plain}
\usepackage{pifont}
\usepackage{enumitem}

\usepackage[utf8]{inputenc} % allow utf-8 input
\usepackage{url}            % simple URL typesetting
\usepackage{booktabs}       % professional-quality tables
\usepackage{amsfonts}       % blackboard math symbols
\usepackage{nicefrac}       % compact symbols for 1/2, etc.
\usepackage{microtype}      % microtypography

%%%%%%%%%%%%%%%%%%%%%%%%%%%%%%%%
% THEOREMS
%%%%%%%%%%%%%%%%%%%%%%%%%%%%%%%%
\theoremstyle{plain}
\newtheorem{theorem}{Theorem}[section]
\newtheorem{proposition}[theorem]{Proposition}
\newtheorem{lemma}[theorem]{Lemma}
\newtheorem{corollary}[theorem]{Corollary}
\theoremstyle{definition}
\newtheorem{definition}[theorem]{Definition}
\newtheorem{assumption}[theorem]{Assumption}
\theoremstyle{remark}
\newtheorem{remark}[theorem]{Remark}

\newcommand{\abs}[1]{\left | #1 \right | }

\newcommand{\E}{\mathbb{E}}
  % Example custom command
\newcommand{\kp}{\mathsf P}
\newcommand{\cp}{\mathcal{P}}
\newcommand{\e}{\mathbf e}
\newcommand{\mcs}{\mathcal{S}}
\newcommand{\mca}{\mathcal{A}}

\newcommand{\mE}{\mathbb{E}}

\DeclareMathOperator*{\cO}{\mathcal{O}}

%% Provided macros
% \smaller: Because the class footnote size is essentially LaTeX's \small,
%           redefining \footnotesize, we provide the original \footnotesize
%           using this macro.
%           (Use only sparingly, e.g., in drawings, as it is quite small.)

%% Self-defined macros
 % just an example

\title{Efficient Q-Learning and Actor-Critic Methods for Robust Average-Reward Reinforcement Learning}

% The standard author block has changed for UAI 2026 to provide
% more space for long author lists and allow for complex affiliations
%
% All author information is authomatically removed by the class for the
% anonymous submission version of your paper, so you can already add your
% information below.
%
% Add authors
\author[1]{Yang Xu}
\author[1]{Swetha Ganesh}
\author[1]{Vaneet Aggarwal}
% \author[3]{Further~Coauthor}
% \author[1]{Further~Coauthor}
% \author[3]{Further~Coauthor}
% \author[3,1]{Further~Coauthor}
% Add affiliations after the authors
\affil[1]{%
    Purdue University\\
    West Lafayette, Indiana, USA 47907
}
  
  \begin{document}
\maketitle

\begin{abstract}
We study model-free methods for distributionally robust infinite-horizon average-reward Markov decision processes (MDPs). We present non-asymptotic convergence analyses of $Q$-learning and actor–critic algorithms for robust average-reward MDPs under contamination, total-variation distance, and Wasserstein uncertainty sets. A key ingredient of our analysis is showing that the optimal robust Bellman operator is a strict contraction with respect to a carefully designed semi-norm. This property enables a stochastic approximation update that learns the optimal robust $Q$-function with $\tilde{\mathcal{O}}(\epsilon^{-2})$ dependence on the target accuracy. We also establish robust TD convergence bounds whose constants are uniform over all stationary policies, yielding an efficient data-driven routine for robust critic estimation. Building on this, we introduce an actor–critic algorithm that learns an $\epsilon$-optimal robust policy with $\tilde{\mathcal{O}}(\epsilon^{-2})$ dependence on the target accuracy. We provide numerical simulations  to illustrate the qualitative behavior of the proposed algorithms. Our results contribute to the theoretical foundations of robust planning under model misspecification, and to model-free approaches for building robust long-run policies directly from simulation data.
\end{abstract}

\section{Introduction}

Reinforcement learning (RL) has produced impressive results in fields such as robotics, finance, and health care by allowing agents to discover effective actions through interaction with their environments. Yet in many practical settings direct interaction is unsafe, prohibitively costly, or constrained by strict data budgets \cite{sunderhauf2018limits, hofer2021sim2real}. Practitioners therefore train agents within simulated environments before deploying them into the physical world. This approach is a common choice for robotic control and autonomous driving. The unavoidable gap between a simulator and reality can nonetheless erode performance once the policy encounters dynamics that were absent during training. Robust RL mitigates this risk by framing learning as an optimization problem over a family of transition models, aiming for reliable behaviour under the most adverse member of that family. In this work, we focus on the setting where a planner interacts only with a fixed nominal simulator but seeks a policy that is robust to transition uncertainty after deployment.

Reinforcement learning with an infinite horizon is usually studied under two reward formulations: the discounted formulation, which geometrically discounts future rewards, and the average-reward formulation, which maximizes the long-run mean return. Although the discounted objective has been widely used, its bias toward immediate gains can produce shortsighted policies in tasks that demand sustained efficiency. The average-reward formulation naturally fits these domains because each decision shapes cumulative performance over time. Typical examples include fleet management in ride-hailing, production scheduling in factories, and network resource allocation, where planners must optimize long-run throughput or service quality under uncertain dynamics. Yet the literature on robust reinforcement learning remains largely unexplored under the average-reward criterion.

Existing works on the robust average-reward RL is still sparse. Existing studies on analyzing $Q$-learning updates provide only asymptotic convergence results \citep{wang2023model, wang2024robust}. \citep{sunpolicy2024} examines the iteration complexity of vanilla policy gradient for the same problem but assumes an oracle that yields the exact robust $Q$ functions. Sample complexity studies on model-based line of work converts the average-reward objective into an equivalent discounted task and then applies planning algorithms \citep{wang2023robust, chen2025sample}. \citep{roch2025finite} provides a model-free non-asymptotic convergence method by bounding the sample complexity of a robust Halpern iteration with recursive sampling. Non-asymptotic results for standard model-free control methods such as $Q$-learning or actor–critic therefore remain open.

Tackling the robust average-reward RL problem is significantly harder than solving its standard non-robust counterpart. In actor-critic and $Q$-learning algorithms, the learner needs to evaluate the value function, the $Q$ function, and the long-run average reward, all under the most adverse transition kernel in the uncertainty set, while only observing samples from the single nominal kernel. We consider three common families of uncertainty sets: contamination sets, total-variation (TV) distance uncertainty sets, and Wasserstein distance uncertainty sets. In the non-robust setting, one can estimate values and rewards directly from trajectories, but robustness introduces an extra optimization layer in which an adversary reshapes the dynamics. Classical estimators \citep{wei2020model, agarwal2021theory, jin2018q} disregard this adversarial aspect and therefore fail. Progress requires new algorithms that can reason about the worst-case dynamics using only the data generated by the nominal model.

\subsection{Challenges and Contributions}
\label{sec:challenges-contributions}

Extending model-free methods such as $Q$-learning and actor-critic algorithms to the robust average-reward setting raises two principal obstacles. First, the optimal robust Bellman operator for $Q$–learning no longer inherits a contraction from a discount factor, and until now no norm nor semi-norm was known to shrink its one–step error. Second, actor–critic methods require accurate policy–gradient estimates, yet these depend on a robust $Q$ function that cannot be obtained with finite–sample accuracy using naive rollouts. Although recent work of \citep{xu2025finite} studies a value–based critic with non–asymptotic error bounds, its result is for a single fixed policy with policy-dependent constants. Thus, its results cannot directly address policy improvements and therefore leaves both obstacles open.

This paper overcomes these difficulties for both $Q$–learning and actor–critic control by
developing a uniform contraction framework for robust average-reward Bellman operators and
by integrating data-driven critic errors into robust mirror-descent policy optimization.  Our main contributions are as follows:

\begin{itemize}
    \item \textbf{Uniform contraction of the optimal robust Bellman operator.}
    Under Assumption~\ref{ass:Qlearning} together with the policy-sequence overlap condition in Assumption~\ref{assump:B}, we construct a semi-norm on the
    $Q$-function space and prove that the optimal robust average–reward Bellman operator for
    $Q$-learning is a strict contraction with a factor $\gamma_H < 1$ that is uniform over all deterministic policies and admissible uncertainty sets
    (Theorem~\ref{thm:Q-learningcontraction}). 
    \item \textbf{Robust $Q$–learning with non-asymptotic sample complexity.}
    Leveraging this uniform contraction, we design a model–free robust $Q$–learning algorithm and
    show that the iterates converge to the optimal robust $Q$–function at a rate that yields a
    sample complexity of $\tilde{\mathcal{O}} \bigl(\epsilon^{-2}\bigr)$ for returning an
    $\epsilon$–optimal robust $Q$ function
    (Theorem~\ref{thm:QleariningComplexity}).  
    All constants in this bound are independent of the particular policy sequence, in contrast to
    prior robust average-reward evaluation results.
\item \textbf{Robust actor–critic with noisy critic and uniform guarantees.}
    We develop a robust actor–critic algorithm that uses a critic subroutine to estimate the $Q$-function and updates the policy via mirror ascent. Different from \citep{xu2025finite}, our analysis is based on a one-step semi-norm contraction of the robust Bellman operator, with the contraction factor being uniform across all stationary policies (Lemma \ref{lem:robust_seminorm-contraction}). This yields critic error bounds whose rates and constants are independent of the policy sequence (Theorem \ref{thm:uniform-V}), which is essential to analyze actor–critic updates that change the policy over time.   Incorporating our uniform critic bounds into the mirror-ascent analysis, we show that the robust actor converges to a near-optimal robust policy in $\mathcal{O}(\log T)$ iterations, yielding total sample complexity $\tilde{\mathcal O}(|\mathcal S||\mathcal A|\epsilon^{-2})$ for contamination uncertainty and $\tilde{\mathcal O}((|\mathcal S||\mathcal A|)^2\epsilon^{-2})$ for TV and Wasserstein uncertainty under the plain-average support estimator in Algorithm~\ref{alg:Qsampling}
    (Theorem~\ref{thm:pg-bound}).  
    This extends the average-reward robust mirror-descent framework of \citep{sunpolicy2024}, which assumes exact robust gradients, to the practical setting with finite-sample critic errors.

\end{itemize}

Conceptually, our work provides the first model-free robust planners for average-reward MDPs whose entire pipeline admits non-asymptotic sample-complexity guarantees, without assuming an oracle for robust $Q$-functions or gradients.

\section{Related Work}

The convergence guarantees of robust average reward  RL have been studied by the following works. \cite{wang2023robust,chen2025sample} both take a model-based perspective, approximating robust average-reward MDPs with discounted MDPs and proving uniform convergence of the robust discounted value function as the discount factor approaches one. \cite{roch2025finite} proposed a model-free robust Halpern iteration that alternates between a Halpern step in the quotient space and a recursively coupled Monte-Carlo sampler to estimate the worst-case Bellman operator. In particular, \cite{chen2025sample, roch2025finite} both obtained the sample complexity of $\tilde{\cO}(\epsilon^{-2})$. Regarding $Q$-learning and actor-critic based methods, \cite{wang2023model} proposed a model-free approach by developing a temporal difference learning framework for robust $Q$-learning algorithm, proving asymptotic convergence using ODE methods in stochastic approximation, martingale theory, and multi-level Monte Carlo estimators to handle non-linearity in the robust Bellman operator. \cite{sunpolicy2024, wang2025provable} developed iteration complexities of $\tilde{\cO}(\epsilon^{-1})$ for variants of vanilla policy gradient algorithm of average reward robust MDPs, both assuming direct queries of the robust $Q$ functions. Thus, \cite{sunpolicy2024, wang2025provable} do not establish explicit convergence rate guarantees in terms of sample complexity.

Finite-sample guarantees are well established in distributionally robust RL under infinite horizon discounted reward settings. with the key recent works on actor-critic methods \cite{wang2022policy, zhou2024natural, li2022first,li2023first, kumar2023policy, kuang2022learning} and $Q$-learning methods \cite{wang2023finite, wang2024sample} focusing on solving the robust Bellman equation by finding the unique fixed-point solution. This approach is made feasible by the strict contraction property of the robust Bellman operator, which arises due to the presence of a discount factor $\gamma < 1$. However, this fundamental approach does not directly extend to the robust average-reward setting, where the absence of a discount factor removes the contraction property under any norm. As a result, techniques that succeed in $Q$-learning and actor-critic algorithms for the discounted case do not transfer directly to robust average-reward RL. Our semi-norm construction can be viewed as recovering a contraction-type structure in the average-reward robust setting where discounting is unavailable.

There has been a rich literature on non-robust average-reward RL with finite-sample guarantees for $Q$-learning and actor–critic algorithms.
These methods typically exploit the linearity of the non-robust Bellman operator \citep{zhang2021finite,ganesh2025sharper}, span-contraction properties \citep{chen2025nonQ,zhang2021finite}, or direct sampling of policy gradients \citep{wei2020model,bai2024regret,xu2025global,ganesh2025regret}, together with ergodicity assumptions. In the robust setting, the Bellman operator becomes non-linear due to the inner maximization over uncertainty sets, and robustness is defined with respect to worst-case transitions that are not directly observed. Consequently, the existing techniques for non-robust average-reward RL do not directly carry over; our work can be viewed as extending contraction-based analyses to this more challenging robust regime.

\section{Setting} 
\label{sec:ramdp}
Consider a robust MDP with state space $\mcs$, action space $\mca$, and deterministic reward function $r$ with $R_{\rm sp}\coloneqq \max_{s,a} r(s,a)-\min_{s,a} r(s,a)$, where $|\mcs|=S$ and $|\mca|=A$. We focus on the standard reward-span normalization of $r\in[0,1]$ and  $R_{\rm sp}\leq 1$. Throughout, $\Delta(\mcs)$ denotes the probability simplex over $\mcs$, and $\mathbf e$ denotes the all-ones vector. The transition kernel is assumed to belong to an uncertainty set $\mathcal P$. At each time step, the environment transits to the next state according to an arbitrary transition kernel $\kp\in\cp$. In this paper, we focus on compact $(s,a)$-rectangular uncertainty sets \citep{nilim2004robustness,iyengar2005robust}, i.e., $\mathcal{P}=\bigotimes_{s,a} \mathcal{P}^a_s$, where $\mathcal{P}^a_s \subseteq \Delta(\mcs)$. This rectangular structure is standard in robust MDPs because it preserves dynamic programming and tractable Bellman operators, but it also rules out correlated perturbations across different state-action pairs. In the contraction proofs below we use convexity of each row set $\mathcal P_s^a$; this property holds for the contamination, total variation, and Wasserstein uncertainty sets studied in this paper. Popular uncertainty sets include those defined by the contamination model \citep{hub65,wang2022policy}, total variation \cite{lim2013reinforcement}, and Wasserstein distance \cite{gao2022distributionally}.
We consider the worst-case average-reward over the uncertainty set of MDPs. Specifically, we define the  robust average-reward of a policy $\pi$ as 
\begin{align}\label{eq:Vdef}
    g^\pi_\cp(s)\coloneqq \min_{\kappa\in\bigotimes_{n\geq 0} \mathcal{P}} \lim_{T\to\infty}\mathbb{E}_{\pi,\kappa}\left[\frac{1}{T}\sum^{T-1}_{t=0}r_t|S_0=s\right],
\end{align}
where $\kappa=(\mathsf P_0,\mathsf P_1...)\in\bigotimes_{n\geq 0} \mathcal{P}$ is an arbitrary sequence of kernels with $\mathsf P_t \in \cp$. It was shown in \cite{wang2023robust} that the worst case under the time-varying model is equivalent to the one under the stationary model:
\begin{align}\label{eq:5}
    g^\pi_\cp(s)= \min_{\kp\in\mathcal{P}} \lim_{T\to\infty}\mathbb{E}_{\pi,\kp}\left[\frac{1}{T}\sum^{T-1}_{t=0}r_t|S_0=s\right].
\end{align}
Therefore, we limit our focus to the stationary model. We refer to the minimizers of \eqref{eq:5} as the worst-case transition kernels for the policy $\pi$, and denote the set of all possible worst-case transition kernels by $\Omega^\pi_g$, i.e., $\Omega^\pi_g \triangleq \{\kp\in\cp: g^\pi_\kp=g^\pi_\cp \}$, where $g^\pi_\kp$ denotes the average reward of policy $\pi$ under the single transition $\kp\in\cp$:
\begin{align}
    g_\kp^\pi(s)\triangleq \lim_{T\to\infty} \mE_{\pi,\kp}\bigg[\frac{1}{T}\sum^{T-1}_{n=0} r_t|S_0=s \bigg].
\end{align}

In this paper, we make similar assumptions for our algorithms regarding ergodicity, which are explicitly stated in Assumption \ref{ass:Qlearning} and Assumption \ref{ass:AC}. The ergodicity assumption is commonly used in the current robust average reinforcement learning literature \cite{sunpolicy2024,wang2025provable}. Under ergodicity of the nominal distribution, with suitable radius restrictions of the uncertainty set, we can derive ergodicity for all kernels in the entire uncertainty set. This indicates that the average reward is independent of the starting state, i.e., for any $\kp\in\cp$ and all $s,s' \in \mcs$,  we have  $ g^\pi_\kp(s) = g^\pi_\kp(s')$. We thus drop the dependence on the initial state and denote $g^\pi_\kp$ as the robust average reward. 
Furthermore, for facilitating analysis, define the mixing time of any $p\in\mathcal{P}$ under policy $\pi$ by
\begin{equation}
    t^p_{\mathrm{mix}}\coloneqq\arg\min_{t \geq 1} \left\{ \max_{\mu_0} \left\| (\mu_0 p_{\pi}^{t})^{\top} - \nu^{\top} \right\|_1 \leq \frac{1}{2} \right\},
\end{equation}
where $p_\pi$ is the finite state Markov chain induced by $\pi$, $\mu_0$ is any initial probability distribution on $\mcs$, and $\nu$ is its invariant distribution. Define the span semi-norm as $\|v\|_{\rm sp}= \max_s v(s)-\min_s v(s)$, and let $h^{\pi,p}$ denote the average-reward relative value function of policy $\pi$ under transition kernel $p$. By Lemma~\ref{lem:wanglemma9}, if $p_\pi$ is irreducible and aperiodic, then
\begin{equation} \label{eq:boundofVsp}
    t^p_{\mathrm{mix}} < +\infty
    \quad \text{and} \quad
    \|h^{\pi,p}\|_{\mathrm{sp}}
    \leq 4 R_{\rm sp} t^p_{\mathrm{mix}}
    \leq 4 R_{\rm sp} t_{\mathrm{mix}} .
\end{equation}
Under the reward normalization $R_{\rm sp}\le 1$, this gives
$\|h^{\pi,p}\|_{\mathrm{sp}}\le 4t_{\mathrm{mix}}$.
Here
\begin{equation*}
t_{\mathrm{mix}} \coloneqq \sup_{p\in\mathcal{P}, \pi\in\Pi} t^p_{\mathrm{mix}}.
\end{equation*}
Under the ergodicity assumptions stated in Assumptions~\ref{ass:Qlearning} and~\ref{ass:AC}, together with the stated radius conditions, the chains are uniformly geometrically ergodic. Hence $t_{\mathrm{mix}}$ is finite and admits a uniform bound over the admissible pairs, which also justifies dropping the dependence on the initial state for the average reward.

We focus on the model-free setting, where only samples from the nominal MDP denoted as $\tilde{\kp}$ (the centroid of the uncertainty set) are available. We now formally define the robust value function $ V^\pi_{\kp_V}$ by connecting it with the following robust Bellman equation: 

\begin{theorem}[Robust Bellman Equation, Theorem 3.1 in \cite{wang2023model}]\label{thm:robust Bellman} 
For a fixed policy $\pi$ and for each $ s \in \mathcal{S}$, define the Robust Bellman operator with scalar $g$ as follows
\begin{equation} \label{eq:bellmanoperator}
    \mathbf{T}^\pi_g(V)(s) = \sum_{a} \pi(a|s) \big[ r(s,a) - g +  \sigma_{\cp^a_s}(V) \big],
\end{equation}
where $\sigma_{\cp^a_s}(V) \coloneqq \min_{p\in\cp^a_s} p^\top V$. If $(g,V)$ is a solution to the robust Bellman equation:
\begin{equation}\label{eq:bellman}
    V(s) = \mathbf{T}^\pi_g(V)(s), \quad \forall s \in \mathcal{S},
\end{equation}
then the scalar $g$ corresponds to the robust average reward, i.e., $g = g^\pi_\cp$, and the worst-case transition kernel $\kp_V$ belongs to the set of minimizing transition kernels, i.e., $\kp_V \in \Omega^\pi_g$, where 
$\Omega^\pi_g \triangleq \{ \kp \in \cp : g^\pi_\kp = g^\pi_\cp \} $. Furthermore, $ V = V^\pi_{\kp_V} + c \mathbf{e}$ for some $c \in \mathbb{R}$ where $ V^\pi_{\kp_V}$ is defined as the relative value function of the policy $\pi$ under the single transition $\kp_V$ as follows:
\begin{align}\label{eq:relativevaluefunction}
    V^\pi_{\kp_V}(s)\triangleq \mE_{\pi,\kp_V}\bigg[\sum^\infty_{t=0} (r_t-g^\pi_{\kp_V})|S_0=s \bigg].
\end{align}
where $S_0$ is the initial state.
\end{theorem}
Theorem \ref{thm:robust Bellman} implies that the robust Bellman equation \eqref{eq:bellman} identifies both the worst-case average reward $g$ and a corresponding value function $V$ that is determined only up to an additive constant. In particular, $\sigma_{\cp^a_s}(V)$ represents the worst-case transition effect over the uncertainty set $\cp^a_s$.

For the critic and actor analysis, we fix the representative $V^\pi$ of the robust relative value function by the same anchor state $s_0$ used in Algorithm~\ref{alg:RobustTD}. Thus $V^\pi(s_0)=0$. Given this representative, define the rowwise Bellman-minimizer set
\begin{equation}
\mathcal M_{s,a}(V^\pi)
\triangleq
\argmin_{p\in\cp_s^a} p^\top V^\pi .
\label{eq:rowwise-minimizer-set}
\end{equation}
Since each row set is compact and $p\mapsto p^\top V^\pi$ is continuous, $\mathcal M_{s,a}(V^\pi)$ is nonempty. By rectangularity, we may choose a transition kernel $\kp_V^\pi\in\cp$ whose row at every state-action pair satisfies
\begin{equation}
\kp_V^\pi(\cdot|s,a)\in \mathcal M_{s,a}(V^\pi),\qquad \forall (s,a)\in\mcs\times\mca .
\label{eq:rowwise-selector}
\end{equation}
Throughout the actor-critic section, $\kp_V^\pi$ denotes such a rowwise Bellman-minimizing selector. This convention removes the ambiguity caused by nonunique worst-case kernels on rows that may not be used by $\pi$.

The robust one-step-deviation $Q$-function is defined by
\begin{align}
\label{eq:robustQdef}
Q^\pi(s,a)
&\triangleq r(s,a)-g^\pi_\cp+\sigma_{\cp_s^a}(V^\pi) \nonumber\\
&=r(s,a)-g^\pi_\cp+\big(\kp_V^\pi(\cdot|s,a)\big)^\top V^\pi .
\end{align}
Equivalently, under the selector $\kp_V^\pi$ in \eqref{eq:rowwise-selector},
\begin{align}
\label{eq:robustQexpectation}
Q^\pi(s,a)
=\mE_{\pi,\kp_V^\pi}\bigg[\sum^\infty_{t=0}(r_t-g^\pi_\cp)\,\bigg|\,S_0=s,A_0=a\bigg].
\end{align}
In the following, we present the explicit forms of $\sigma_{\cp^a_s}(V)$ for different compact uncertainty sets, with $(s,a)-$rectangular structures. 

\noindent \textbf{Contamination Uncertainty Set}\label{sec:con}
The $\delta$-contamination uncertainty set is
$
    \cp^a_s=\{(1-\delta)\tilde{\kp}^a_s+\delta q: q\in\Delta(\mcs) \}, 
$
where $0<\delta<1$ is the radius \citep{hub65}. Under this uncertainty set, the support function can be computed as 
\begin{equation}\label{eq:contamination}
    \sigma_{\cp^a_s}(V)=(1-\delta)(\tilde{\kp}^a_s)^\top V+\delta \min_s V(s),
\end{equation}
and this is linear in the nominal transition kernel $\kp^a_s$.  A direct approach \citep{wang2023model,xu2025finite} is to use the transition to the subsequent state to construct our estimator:
\begin{align}\label{eq:contaminationquery}
    \hat{\sigma}_{\cp^a_s}(V)\triangleq (1-\delta) V(s')+\delta\min_x V(x),
\end{align}
where $s'$ is a subsequent state sample after $(s,a)$. 

\noindent \textbf{Total Variation Uncertainty Set.}
The total variation uncertainty set is  
$
    \cp^a_s=\{q\in\Delta(\mcs): \frac{1}{2}\|q-\tilde{\kp}^a_s\|_1\leq \delta \},
$
the support function can be computed using its dual function \cite{iyengar2005robust}:
\begin{align}\label{eq:tv dual}
    \sigma_{\cp^a_s}(V)=\max_{\mu \in \mathbb R_+^{S}}\big((\tilde{\kp}^a_s)^\top(V-\mu)-\delta \|V-\mu\|_{\mathrm{sp}}  \big).
\end{align}
Here $\mu$ is a vector in $\mathbb R_+^S$. Equivalently, one may write the dual as a maximization over $z\leq V$ after setting $z=V-\mu$.

\textbf{Wasserstein Distance Uncertainty Sets.}
Consider the metric space $(\mathcal{S},d)$ by defining some distance metric $d$. For some parameter $l\in[1,\infty)$ and two distributions $p,q\in\Delta(\mathcal{S})$, define the $l$-Wasserstein distance between them as 
$W_l(q,p)=\inf_{\mu\in\Gamma(p,q)}\|d\|_{\mu,l}$, where $\Gamma(p,q)$ denotes the distributions over $\mathcal{S}\times\mathcal{S}$ with marginal distributions $p,q$, and $\|d\|_{\mu,l}=\big(\mE_{(X,Y)\sim \mu}\big[d(X,Y)^l\big]\big)^{1/l}$. The Wasserstein distance uncertainty set is then defined as 
\begin{equation}
    \cp^a_s=\left\{q\in\Delta(\mcs): W_l(\tilde{\kp}^a_s,q)\leq \delta \right\}.
\end{equation}
The support function w.r.t. the Wasserstein distance set, can be calculated as follows \cite{gao2023distributionally}:
\begin{equation}\label{eq:wd dual}
    \sigma_{\cp^a_s}(V)=\sup_{\lambda\geq 0}\left(-\lambda\delta^l+\mE_{\tilde{\kp}^a_{s}}\big[\inf_{y}\big(V(y)+\lambda d(S,y)^l \big)\big] \right).
\end{equation}

Our goal is to learn a policy that maximizes the worst-case long-run reward $\pi^\star = \arg\max_{\pi}\, g_\cp^{\pi}$.

In the following two sections, we present two model-free algorithms: Robust $Q$-learning and Robust Actor-Critic, for solving the above problem with $\tilde{\mathcal{O}}(\epsilon^{-2})$ dependence on the target accuracy.

\section{Robust $Q$-learning} \label{Qlearning}

In this section, we formally study the convergence of the $Q$-learning algorithm for robust average-reward RL.

\subsection{Optimal robust Bellman equation}
We begin by characterizing the optimal robust Bellman equation in the $Q$-function space and clarifying its solutions' properties. This foundation will support the contraction analysis and the robust $Q$-learning algorithm presented in the remainder of the section. For any $Q\in\mathbb R^{S\times A}$, write $V_Q(s):=\max_{b\in\mca}Q(s,b)$. We now characterize the optimal robust Bellman operator along with the ergodicity assumption used in this setting.
\begin{lemma}[Lemma 4.1 in \cite{wang2023model}]\label{thm:optimal robust Bellman}
Consider the uncentered optimal robust Bellman operator
\begin{align}\label{eq:H}
{\mathbf H}&[Q](s,a)
=r(s,a) \nonumber\\
&+\min_{p\in\mathcal P_{s}^a}
\sum_{s'}p(s'|s,a)\max_{b\in\mca}Q(s',b).
\end{align}
If $(g,Q)$ is a solution to the optimal robust Bellman equation
\begin{equation}\label{eq:optimal bellman}
g+Q(s,a)={\mathbf H}[Q](s,a), \qquad \forall(s,a)\in\mcs\times\mca,
\end{equation}
then
1) $g=g^*_\cp$ \cite{wang2023robust};
2) the greedy policy w.r.t. $Q$, $\pi_Q(s)=\arg\max_a Q(s,a)$, is an optimal robust policy \cite{wang2023robust};
3) $V_Q=V^{\pi_Q}_\kp+c\mathbf{e}$ for some $\kp\in\Omega^{\pi_Q}_g$ and $c\in\mathbb{R}$.
\end{lemma}
\begin{assumption}\label{ass:Qlearning}
Let $\Pi_{\det}$ be the finite set of deterministic policies. 
For each $\pi\in\Pi_{\det}$, the center kernel induced $\tilde{\kp}^\pi$ is irreducible and aperiodic. 
\end{assumption}

Lemma~\ref{thm:optimal robust Bellman} states that any solution $(g,Q)$ of
\eqref{eq:optimal bellman} yields the optimal robust average reward $g_\cp^*$ and an associated deterministic greedy optimal policy. Thus,
Assumption~\ref{ass:Qlearning} is stated only over deterministic stationary
policies, which matches the policy class produced by $Q$-learning and incurs no loss for the optimal-control equation. For the contraction theorem below, Assumption~\ref{ass:Qlearning} is supplemented by
Assumption~\ref{assump:B} in the appendix. Assumption~\ref{assump:B} is an
$L$-step uniform overlap condition for the nominal center kernels along
arbitrary deterministic policy sequences. It is not needed for the Bellman
characterization itself; it is used in Appendix~\ref{Qcontractionproof} to obtain a contraction modulus that is uniform over the changing greedy policies and admissible robust kernels encountered in the $Q$-learning analysis.

\subsection{Semi-norm contraction of the optimal robust Bellman operator}

Lemma \ref{thm:optimal robust Bellman} characterizes the optimal value function as the unique $Q$ (up to a constant shift) that satisfies the robust Bellman equation, yet it does not by itself tell us how to compute that solution.  
In classical discounted RL, the discounted Bellman operator is a contraction in the sup norm, so fixed-point iteration and Q-learning converge. In the robust average-reward setting no discount factor is present and the Bellman operator is non-linear, so contraction is far from obvious. Prior work \cite{xu2025finite} establishes a semi-norm contraction only for the policy-evaluation operator of a single fixed policy. In contrast, we show that the optimal-control operator $\mathbf{H}$, which acts on the larger space $\mathbb{R}^{S\times A}$ and includes a maximization over actions, is a strict contraction under a carefully constructed semi-norm.

To justify the non-asymptotic convergence of a standard $Q$–learning update, the following theorem illustrates the one-step contraction property of the optimal robust Bellman operator under a suitable semi-norm, which in turn enables stochastic iterations for solving the optimal robust Bellman equation.

\begin{theorem} \label{thm:Q-learningcontraction} 
   
    Let Assumptions \ref{ass:Qlearning} and \ref{assump:B} hold. Then, with certain restrictions on the radius of the uncertainty sets, there exists a semi-norm $\|\cdot\|_H$ and $\gamma_H \in (0,1)$ with kernel being $\{c\e: c\in \mathbb{R}\}$ such that for any $Q_1, Q_2 \in \mathbb{R}^{SA}$, we have that 
    \begin{equation} \label{eq:Q-learningcontraction}
        \| {\mathbf H}Q_1 - {\mathbf H}Q_2 \|_H \leq \gamma_H \|Q_1 -Q_2 \|_H
    \end{equation}
\end{theorem}

\begin{proof}[Proof sketch]
Our argument proceeds in three steps. First, Appendix~\ref{Qcontractionproof}
constructs a semi-norm $\|\cdot\|_H$ on $\mathbb R^{S\times A}$ that quotients out constant shifts and is built from the fluctuation matrices associated with all admissible deterministic policies and robust kernels. Under Assumption~\ref{assump:B} and the corresponding radius restrictions, these matrices contract by a common factor.

Second, for any $Q_1,Q_2$, let $\Delta Q=Q_1-Q_2$ and
$\Delta V=V_{Q_1}-V_{Q_2}$. For each state $s$, the scalar $\Delta V(s)$ lies in the convex hull of the action-wise differences
$\{\Delta Q(s,a):a\in\mca\}$. Hence the constructed $Q$-space semi-norm
controls the value-difference term that appears after taking the statewise
maximum.

Third, Lemma~\ref{lem:Hdiff_representation} shows that, for each action slice,
$\bigl({\mathbf H}[Q_1]-{\mathbf H}[Q_2]\bigr)(\cdot,a)$ can be represented by applying an admissible robust kernel and a deterministic policy to
$\Delta V$. Combining this representation with the uniform fluctuation
contraction yields
\begin{equation*}
\|\mathbf H Q_1-\mathbf H Q_2\|_H
\leq \gamma_H\|Q_1-Q_2\|_H,
\end{equation*} 
for some $\gamma_H<1$.
\end{proof}

The concrete proof of Theorem \ref{thm:Q-learningcontraction} including the detailed construction of the semi-norm $\|\cdot\|_H$ is in Appendix \ref{Qcontractionproof}.

\subsection{Simulation-based robust $Q$-learning}

Equipped with the contraction property of ${\mathbf H}$, we now describe a simulation‐based construction of 
$\widehat {\mathbf H}$ that approximates the robust Q‐Bellman operator as follows
\begin{equation} \label{eq:HQ}
\hat{{\mathbf H}}[Q](s,a)=r(s,a)
+\hat{\sigma}_{\cp_{s}^a}\bigl(\max_{b}Q(\cdot,b)\bigr),
\end{equation}
where $\hat{\sigma}_{\cp_{s}^a}$ is an estimator of ${\sigma}_{\cp_{s}^a}$. Recall that Section \ref{sec:ramdp} characterized the closed form expression of ${\sigma}_{\cp_{s}^a}$ under contamination, TV distance, and Wasserstein distance uncertainty sets. Algorithm \ref{alg:sampling} further \citep{xu2025finite} provides sampling methods for obtaining estimators of ${\sigma}_{\cp_{s}^a}$ under all the above uncertainty sets with finite sampling complexities, bounded variances and decaying biases.

We now formally present our robust average-reward $Q$-learning procedure, described in Algorithm \ref{alg:Qlearning}. Algorithm \ref{alg:Qlearning} uses Algorithm \ref{alg:sampling} to obtain $\hat{{\mathbf H}}[Q]$ in the form of \eqref{eq:HQ}, and adapts a synchronous TD learning approach to solve the optimal robust Bellman equation in \eqref{eq:optimal bellman}. The algorithm assumes generative access to the nominal simulator $\tilde\kp$ and synchronously updates all state-action pairs; this is the sampling model analyzed here rather than a trajectory-only online RL setting. An asynchronous variant would naturally update only visited pairs, but proving the same rate would require additional coverage and Markovian-noise arguments. Because the semi-norm in Theorem~\ref{thm:Q-learningcontraction} ignores constant shifts, Line 10 of Algorithm \ref{alg:Qlearning} subtracts the current value at a fixed anchor state-action pair $(s_0,a_0)$. This keeps the iterates inside the quotient space on which ${\bf H}$ contracts.

\begin{algorithm}[ht]
\caption{Robust Average Reward $Q$-Learning}
\label{alg:Qlearning}
\textbf{Input}: Initial values $Q_0$, Stepsizes $\eta_t$, Anchor state $s_0\in\mcs$, Anchor action $a_0\in\mca$
\begin{algorithmic}[1] 
\For {$t = 0,1,\ldots, T-1$}
\State $V_{Q_t} \leftarrow \max_{b\in\mca}Q_t(\cdot,b)$
\For {each $(s,a)\in\mcs\times\mca$} 
\If {Contamination} Sample $\hat{\sigma}_{\cp^a_s}(V_{Q_t})$ according to \eqref{eq:contaminationquery}
\ElsIf{TV or Wasserstein} Sample $\hat{\sigma}_{\cp^a_s}(V_{Q_t})$ according to Algorithm \ref{alg:sampling}
\EndIf
\EndFor
\State $\hat{\mathbf{H}}[Q_t](s,a) \leftarrow  r(s,a) +  \hat{\sigma}_{\cp^a_s}(V_{Q_t}) ,  \forall (s,a) \in \mathcal{S} \! \times \!\mathcal{A}$
\State  $Q_{t+1}(s,a) \leftarrow Q_t(s,a) + \eta_t \left( \hat{\mathbf{H}}(Q_t)(s,a) - Q_t(s,a) \right), \; \forall (s,a) \in \mathcal{S} \times \mca$
\State  $Q_{t+1}(s,a) = Q_{t+1}(s,a) - Q_{t+1}(s_0,a_0), \; \forall (s,a) \in \mathcal{S}\times \mca$
\EndFor \quad
\Return $Q_T$
\end{algorithmic}
\end{algorithm}

The following theorem quantifies the sample complexity needed for Algorithm~\ref{alg:Qlearning} to return an $\epsilon$-accurate estimate of the robust optimal $Q$ function.

\begin{theorem} \label{thm:QleariningComplexity}
If $Q_t$ is generated by Algorithm~\ref{alg:Qlearning} and Assumptions~\ref{ass:Qlearning} and~\ref{assump:B} hold with the radius restrictions in Appendix~\ref{Qcontractionproof}, then with stepsizes $\eta_t=\Theta(1/t)$ there are fixed problem-dependent constants, independent of $\epsilon$ and of the realized policy sequence, such that $\mathbb E[\|Q_T-Q^*\|_\infty^2]\leq \epsilon^2$ using $\tilde{\mathcal O}(\epsilon^{-2})$ dependence on the target accuracy. The state-action factor is $SA$ for all three uncertainty models; the additional logarithmic factors for TV and Wasserstein come from the MLMC support-function estimator.
\end{theorem}

\begin{proof}[Proof sketch]
We view Algorithm~\ref{alg:Qlearning} as a stochastic approximation to
the fixed point of \eqref{eq:optimal bellman} in the quotient space induced by the semi-norm $\|\cdot\|_H$.
The contraction property of $\mathbf H$ (Theorem \ref{thm:Q-learningcontraction}),
together with the bounded variance and decreasing bias of the estimator
$\hat{\sigma}_{P^a_s}$, implies that the mean update has a negative drift under
the smooth semi-Lyapunov function constructed in Appendix~\ref{QleariningComplexityproof}, and that the noise term satisfies standard martingale-difference conditions with linear growth.
Applying the stochastic approximation bound in Appendix~\ref{QleariningComplexityproof} then yields $\tilde{\cO}(\epsilon^{-2})$ synchronous iterations, after absorbing the contraction and Moreau-smoothing constants into the fixed problem-dependent constants. With the MLMC truncation level chosen logarithmically for the TV and Wasserstein estimators, the anchored mean-square sup-norm error satisfies $\mathbb E[\|Q_T-Q^*\|_\infty^2]\leq \epsilon^2$. Counting the state-action samples used in the synchronous updates gives the sample complexities stated in the theorem.
\end{proof}

The concrete proof of Theorem \ref{thm:QleariningComplexity} is in Appendix \ref{QleariningComplexityproof}, where we also discuss the specific radius restrictions under contamination, TV distance, and Wasserstein distance uncertainty sets.

\section{Actor-Critic} \label{actor-critic}

In this section we develop a robust average-reward actor-critic algorithm.
Unlike the robust policy-gradient methods of \citep{sunpolicy2024,wang2025provable},
which assume access to the exact robust $Q$-function or its gradient,
our method learns a robust critic solely from samples of the nominal kernel $\tilde{\kp}$.
On the critic side, we show how to construct, for any policy $\pi$, an estimate
$\hat Q^\pi$ with $\|\hat Q^\pi - Q^\pi\|_\infty \le \epsilon$ using a
number of nominal samples with $\tilde{\mathcal O}(\epsilon^{-2})$ dependence on the target accuracy, and how to obtain
bounds whose constants are uniform over all stationary policies.
On the actor side, we plug these data-driven critics into a robust mirror-ascent update
and prove that the resulting actor-critic algorithm returns an $\epsilon$-optimal robust policy
with the same $\tilde{\mathcal O}(\epsilon^{-2})$ dependence on the target accuracy, with uncertainty-set-dependent state-action factors stated below.
Specifically, given a policy $\pi_k$ at iteration $k$, we first learn the robust critic by running the robust evaluation subroutine to obtain the robust value function and the robust average reward, and then reconstruct an estimate of the robust $Q$ function, denoted as $\hat{Q}^{\pi_k}$. This design allows us to reuse the value-function machinery without assuming ergodicity of the
Markov chain on the enlarged state-action space, which would be required by a direct
Q-based stochastic approximation scheme.

\begin{algorithm}[htb]
\caption{Robust Average Reward Actor-Critic}
\label{alg:AC}
\textbf{Input}: Initial policy $\pi_0$, step size sequence $\{\zeta_k\}$
\begin{algorithmic}[1] 
\For {$k=0,1,\ldots,K-1$}
\State Obtain $\hat{Q}^{\pi_k}$ via Algorithm \ref{alg:Qsampling}
\For {each $s\in\mcs$} 
\State Update policy: $\pi_{k+1}(\cdot|s)
=
\arg\max_{p\in\Delta(\mca)}
\left\{
\zeta_k
\left\langle \hat{Q}^{\pi_k}(s,\cdot),p \right\rangle
-
\|p-\pi_k(\cdot|s)\|^2
\right\}.$

\EndFor 
\EndFor 
\State \Return $\pi_K$
\end{algorithmic}
\end{algorithm}

\paragraph{Critic analysis: }

We now formally describe the critic subroutine, the robust $Q$-function estimation for a fixed policy $\pi$, which is characterized in Algorithm \ref{alg:Qsampling}. Given a policy $\pi$, 
Algorithm \ref{alg:Qsampling} constructs $\hat{Q}^\pi$ by first estimating the robust value function $\hat{V}^\pi$ and robust average reward $\hat{g}^\pi$, and then calculate $\hat{Q}^\pi$ by the Bellman equation (shown in Line 4 of Algorithm \ref{alg:Qsampling}).

\begin{proposition} \label{thm:QfromV}
Let $V^\pi$ be the anchored robust relative value function solving \eqref{eq:bellman}, and let $Q^\pi$ be the robust one-step-deviation $Q$-function defined by \eqref{eq:robustQdef}. Then $Q^\pi$ satisfies the robust Bellman equation
\begin{equation} \label{eq:Qformula}
Q^\pi(s,a)=r(s,a)-g^\pi_\cp+\sigma_{\cp_s^a}(V^\pi),
\end{equation}
and
\begin{equation}
V^\pi(s)=\sum_a \pi(a|s)Q^\pi(s,a),\qquad \forall s\in\mcs .
\end{equation}
Moreover, every selector $\kp_V^\pi$ satisfying \eqref{eq:rowwise-selector} is a worst-case transition kernel for policy $\pi$, and the expectation representation \eqref{eq:robustQexpectation} holds.
\end{proposition}

Proposition~\ref{thm:QfromV} shows that $Q^\pi$ can be reconstructed from $V^\pi$ and $g^\pi_\cp$, both of which are outputs of the policy evaluation routine.

\begin{theorem}\label{thm:Qestimation}
For any fixed policy $\pi$, under Assumption~\ref{ass:AC}, let $\hat Q^\pi$ be produced by Algorithm~\ref{alg:Qsampling}. Under the radius restrictions in Corollary~\ref{cor:AC-radius-to-JSR-primitive}, there are choices of $T$, $N_{\max}$, and $L_Q$, detailed in Appendix~\ref{Qestimationproof}, such that
\begin{equation}
\mathbb E\!\left[\|\hat Q^\pi-Q^\pi\|_\infty^2\right]\leq \epsilon^2,
\qquad
\mathbb E\!\left[\|\hat Q^\pi-Q^\pi\|_\infty\right]\leq \epsilon .
\end{equation}
The total nominal transition-sample complexity is $\tilde{\mathcal O}(|\mathcal S||\mathcal A|\epsilon^{-2})$ for contamination uncertainty and $\tilde{\mathcal O}((|\mathcal S||\mathcal A|)^2\epsilon^{-2})$ for TV or Wasserstein uncertainty, with constants independent of $\pi$.
\end{theorem}
The proof in Appendix~\ref{Qestimationproof} combines finite-sample error bounds for robust value and average-reward estimation with the Lipschitz stability of the $Q$-Bellman map \eqref{eq:Qformula}. The larger TV and Wasserstein state-action factor comes from controlling the final support-function estimates uniformly over all $(s,a)$ pairs.

\begin{remark}
Appendix~\ref{Qestimationproof} proves policy-uniform critic constants. Thus the same critic parameters can be used along the finite policy sequence generated by Algorithm~\ref{alg:AC}; this is the point that fixed-policy evaluation bounds do not provide by themselves.
\end{remark}

\begin{algorithm}[htb]
\caption{Robust Average Reward $Q$-function Estimation}
\label{alg:Qsampling}
\textbf{Input}: Policy $\pi$, number of critic iterations $T$, max level $N_{\rm max}$, support-estimation batch size $L_Q$
\begin{algorithmic}[1]
\State Obtain $V_T^\pi$ and $g_T^\pi$ via Algorithm~\ref{alg:RobustTD}
\For {each $(s,a)\in\mcs\times\mca$}
    \For {$\ell=1,\ldots,L_Q$}
        \If {Contamination}
            Sample $\hat{\sigma}^{(\ell)}_{\cp_s^a}(V_T^\pi)$ according to \eqref{eq:contaminationquery}
        \ElsIf{TV or Wasserstein}
            Sample $\hat{\sigma}^{(\ell)}_{\cp_s^a}(V_T^\pi)$ according to Algorithm~\ref{alg:sampling}
        \EndIf
    \EndFor
    \State
    $\bar{\sigma}_{\cp_s^a}(V_T^\pi)
    \leftarrow
    \frac1{L_Q}\sum_{\ell=1}^{L_Q}
    \hat{\sigma}^{(\ell)}_{\cp_s^a}(V_T^\pi)$
    \State
    $\hat{Q}^\pi(s,a)
    \leftarrow
    r(s,a)-g_T^\pi+\bar{\sigma}_{\cp_s^a}(V_T^\pi)$
\EndFor
\State \Return $\hat{Q}^\pi$
\end{algorithmic}
\end{algorithm}

\paragraph{Actor-critic sample complexity: }
Equipped with the robust critic with finite-sample guarantees, we now characterize the actor subroutine. The robust reward objective $g_{\mathcal P}^\pi=\min_{\kp\in\cp} g_\kp^\pi$ may be nonsmooth because of the inner minimization over transition kernels. The following ergodicity condition is used for the actor analysis.
\begin{assumption}\label{ass:AC}
    For every policy $\pi \in \Pi \coloneqq \{\pi | \pi : \mcs \rightarrow \Delta(\mca)\} $, the center kernel induced Markov chain $\tilde{\kp}^\pi$ is irreducible and aperiodic.
\end{assumption}
We recall the definition of a Fr\'echet sub-gradient:
\begin{definition}
For any function $ f : \mathcal{X} \subseteq \mathbb{R}^N \rightarrow \mathbb{R} $, the Fr\'echet sub-gradient $ u \in \mathbb{R}^N $ is a vector that satisfies
\begin{equation}
\liminf_{\delta \rightarrow 0,\,\delta \neq 0} \frac{f(x+\delta)-f(x)-\langle u,\delta \rangle}{\|\delta\|}\geq 0.
\end{equation}
A vector $u$ is called a Fr\'echet super-gradient of $f$ at $x$ if $-u$ is a Fr\'echet sub-gradient of $-f$ at $x$.
\end{definition}

The result of \citet{sunpolicy2024} is stated for robust average cost, where the adversary maximizes the cost and the policy performs mirror descent. Applying their result to the cost $c=-r$ gives the reward-side statement used here. Specifically, for the robust reward objective, the direction
\begin{equation}
G^\pi(s,a)=d^\pi_{\mathcal P}(s)Q^\pi(s,a)
\end{equation}
is a Fr\'echet super-gradient of $g^\pi_{\mathcal P}$, where $d^\pi_{\mathcal P}$ is the stationary distribution under a worst-case kernel associated with $\pi$. Equivalently, $-G^\pi$ is a Fr\'echet sub-gradient of $-g^\pi_{\mathcal P}$. Thus the reward maximization update is a mirror-ascent update in the direction $Q^\pi$. The proof in Appendix~\ref{acproof} only uses the corresponding reward-side performance-difference inequalities.

Let $D(\pi(\cdot|s), \pi'(\cdot|s))$ denote a Bregman divergence between policies at state $s$, and define the weighted divergence $D_{d}(\pi, \pi') = \sum_{s} d(s)\, D(\pi(\cdot|s), \pi'(\cdot|s))$. The exact robust policy mirror-ascent step takes the form
\begin{equation}
    \pi_{k+1} = \arg\max_{\pi \in \Pi}
    \Bigl\{ \zeta_k \langle G^{\pi_k}, \pi \rangle
    - D_{d^{\pi_k}_{\mathcal{P}}}(\pi, \pi_k) \Bigr\}.
    \label{eq:rmd}
\end{equation}
Choosing $D$ to be the squared Euclidean distance yields the state-wise update below. Indeed, the factor $d^{\pi_k}_{\mathcal P}(s)$ multiplies both the linear term and the squared-distance term at each state, and all its entries are positive under the radius conditions. Hence it does not change the state-wise maximizer.
\begin{align}
    \pi_{k+1}(\cdot|s)
    &= \arg\max_{p \in \Delta(\mathcal{A})}
    \Bigl\{ \zeta_k \langle Q^{\pi_k}(s,\cdot), p \rangle
    - \|p- \pi_k(\cdot|s)\|^2 \Bigr\},
    \label{eq:rmd-statewise}
\end{align}
for all $s\in\mathcal S$.

The update~\eqref{eq:rmd-statewise} assumes access to the exact robust $Q^{\pi_k}$,
which is the setting analyzed by \citet{sunpolicy2024,wang2025provable}.
In our model-free setting, $Q^{\pi_k}$ is unknown and must be estimated.
Algorithm~\ref{alg:AC} therefore replaces $Q^{\pi_k}$ by $\hat Q^{\pi_k}$ obtained
from Algorithm~\ref{alg:Qsampling}, leading to the approximate update

\begin{align}
    \pi_{k+1}(\cdot|s)
    &= \arg\max_{p \in \Delta(\mathcal{A})}
    \Bigl\{ \zeta_k \langle \hat{Q}^{\pi_k}(s,\cdot), p \rangle
    - \|p- \pi_k(\cdot|s)\|^2 \Bigr\},
    \label{eq:rmd-statewise-est}
\end{align}

for all $s \in \mathcal{S}$.
Our analysis shows that, as long as each critic estimate satisfies
$\E\|\hat Q^{\pi_k}-Q^{\pi_k}\|_\infty \le \epsilon$, the actor still converges
to an $\epsilon$-optimal robust policy.

To characterize the overall sample complexity of Algorithm \ref{alg:AC}, define
\begin{equation}
M \coloneqq \sup_{\pi,\,P \in \mathcal{P}} \left\| \frac{d^{\pi^*}_{P}}{d^{\pi}_{P_\pi}} \right\|_\infty,
\label{eq:distribution-mismatch-M}
\end{equation}
where $d^{\pi^*}_P$ and $d^\pi_{P_\pi}$ denote the stationary distributions under policies $\pi^*$ and $\pi$ with transition kernels $P \in \mathcal{P}$ and $P_\pi \in \Omega_g^\pi$, respectively. This quantity also appears in related analyses such as \cite{sunpolicy2024}. Under Assumption~\ref{ass:AC} and the radius restrictions in Corollary~\ref{cor:AC-radius-to-JSR-primitive}, it is finite; see Lemma~\ref{lem:uniform-stationary-lower-bound}. 

\begin{theorem}
\label{thm:pg-bound}
Let Assumption~\ref{ass:AC} hold and suppose the radius restrictions in Corollary~\ref{cor:AC-radius-to-JSR-primitive} hold. Consider Algorithm \ref{alg:AC} with the number of policy iterations set to $K = \Theta\bigl(\log(1/\epsilon)\bigr)$ and stepsize sequence $\zeta_k \geq \zeta_{k-1}\left(1-\frac{1}{M}\right)^{-1}$. Let $\mathcal F_k$ be the history before the $k$-th critic call. Suppose that the critic used at iteration $k$ satisfies
\begin{equation}
\mathbb E\left[\|\hat{Q}^{\pi_k}-Q^{\pi_k}\|_\infty\mid \mathcal F_k\right]\leq \epsilon,
\qquad \forall k\geq 0 .
\end{equation}
This condition is guaranteed by Theorem~\ref{thm:Qestimation} when Algorithm~\ref{alg:Qsampling} is run with fresh nominal simulator samples and with the parameters stated there. Then the policy returned by Algorithm \ref{alg:AC}, denoted by $\pi_K$, satisfies
\begin{equation}
\mathbb E \left[g^{\pi^*}_\mathcal{P}-g^{\pi_K}_\mathcal{P}\right] \leq \mathcal{O}(\epsilon).
\end{equation}
\end{theorem}

\begin{proof}[Proof sketch]
We view \eqref{eq:rmd-statewise-est} as an inexact mirror-ascent step for maximizing $g^\pi_\mathcal P$, where the exact reward-side supergradient direction $Q^{\pi_k}$ is replaced by the noisy critic $\hat Q^{\pi_k}$. The conditional critic guarantee controls this noise at each random policy generated by the previous actor steps. Combining the reward-side performance-difference inequalities with the first-order optimality condition of the Euclidean mirror-ascent step gives a one-step recursion for $g^{\pi^*}_\mathcal P-g^{\pi_k}_\mathcal P$. The recursion contracts by the factor $1-1/M$ up to the critic error and the mirror-ascent residual $2/(M\zeta_k)$. The geometrically increasing sequence $\{\zeta_k\}$ makes this residual summable at the same geometric rate. After $K=\Theta(\log(1/\epsilon))$ actor steps, the remaining error is $\mathcal O(\epsilon)$. Full details are given in Appendix~\ref{acproof}.
\end{proof}

Combining this actor iteration bound with Theorem~\ref{thm:Qestimation} gives total nominal transition-sample complexity $\widetilde{\mathcal O}(|\mathcal S||\mathcal A|\epsilon^{-2})$ under contamination uncertainty and $\widetilde{\mathcal O}((|\mathcal S||\mathcal A|)^2\epsilon^{-2})$ under TV or Wasserstein uncertainty, up to the logarithmic number of actor iterations. This improves on the oracle-gradient setting of \citet{sunpolicy2024,wang2025provable}, where access to $Q^\pi$ (or its robust gradient) is assumed.

\section{Conclusion}
This paper presents the first $Q$-learning and actor–critic algorithms with finite-sample guarantees for distributionally robust average-reward MDPs.
We leverage a semi-norm contraction of the optimal robust Bellman operator and establish uniform finite-sample convergence for the critic estimates across all policies. Together with existing tools, these results yield end-to-end
$\tilde{\mathcal{O}}(\epsilon^{-2})$ dependence on the target accuracy for obtaining an
$\epsilon$-optimal robust policy, with the state-action dependence determined by the uncertainty model.

\section{Acknowledgement}
We would like to thank Ankur Naskar of Purdue University for helpful discussions regarding Appendix A.

% References
\bibliography{main}

@article{hofer2021sim2real,
  title={Sim2real in robotics and automation: Applications and challenges},
  author={H{\"o}fer, Sebastian and Bekris, Kostas and Handa, Ankur and Gamboa, Juan Camilo and Mozifian, Melissa and Golemo, Florian and Atkeson, Chris and Fox, Dieter and Goldberg, Ken and Leonard, John and others},
  journal={IEEE transactions on automation science and engineering},
  volume={18},
  number={2},
  pages={398--400},
  year={2021},
  publisher={IEEE}
}

@article{sunderhauf2018limits,
  title={The limits and potentials of deep learning for robotics},
  author={S{\"u}nderhauf, Niko and Brock, Oliver and Scheirer, Walter and Hadsell, Raia and Fox, Dieter and Leitner, J{\"u}rgen and Upcroft, Ben and Abbeel, Pieter and Burgard, Wolfram and Milford, Michael and others},
  journal={The International journal of robotics research},
  volume={37},
  number={4-5},
  pages={405--420},
  year={2018},
  publisher={SAGE Publications Sage UK: London, England}
}

@inproceedings{wang2023robust,
  title={Robust average-reward Markov decision processes},
  author={Wang, Yue and Velasquez, Alvaro and Atia, George and Prater-Bennette, Ashley and Zou, Shaofeng},
  booktitle={Proceedings of the AAAI Conference on Artificial Intelligence},
  volume={37},
  pages={15215--15223},
  year={2023}
}

@inproceedings{wang2023model,
  title={Model-free robust average-reward reinforcement learning},
  author={Wang, Yue and Velasquez, Alvaro and Atia, George K and Prater-Bennette, Ashley and Zou, Shaofeng},
  booktitle={International Conference on Machine Learning},
  pages={36431--36469},
  year={2023},
  organization={PMLR}
}

@article{wang2024robust,
  title={Robust Average-Reward Reinforcement Learning},
  author={Wang, Yue and Velasquez, Alvaro and Atia, George and Prater-Bennette, Ashley and Zou, Shaofeng},
  journal={Journal of Artificial Intelligence Research},
  volume={80},
  pages={719--803},
  year={2024}
}

@article{agarwal2021theory,
  title={On the theory of policy gradient methods: Optimality, approximation, and distribution shift},
  author={Agarwal, Alekh and Kakade, Sham M and Lee, Jason D and Mahajan, Gaurav},
  journal={Journal of Machine Learning Research},
  volume={22},
  number={98},
  pages={1--76},
  year={2021}
}

@inproceedings{wei2020model,
  title={Model-free reinforcement learning in infinite-horizon average-reward markov decision processes},
  author={Wei, Chen-Yu and Jahromi, Mehdi Jafarnia and Luo, Haipeng and Sharma, Hiteshi and Jain, Rahul},
  booktitle={International conference on machine learning},
  pages={10170--10180},
  year={2020},
  organization={PMLR}
}

@article{zhou2024natural,
  title={Natural actor-critic for robust reinforcement learning with function approximation},
  author={Zhou, Ruida and Liu, Tao and Cheng, Min and Kalathil, Dileep and Kumar, PR and Tian, Chao},
  journal={Advances in neural information processing systems},
  volume={36},
  year={2024}
}

@inproceedings{wang2022policy,
  title={Policy gradient method for robust reinforcement learning},
  author={Wang, Yue and Zou, Shaofeng},
  booktitle={International conference on machine learning},
  pages={23484--23526},
  year={2022},
  organization={PMLR}
}

@article{zhang2021finite,
  title={Finite Sample Analysis of Average-Reward TD Learning and $ Q $-Learning},
  author={Zhang, Sheng and Zhang, Zhe and Maguluri, Siva Theja},
  journal={Advances in Neural Information Processing Systems},
  volume={34},
  pages={1230--1242},
  year={2021}
}

@article{li2022first,
  title={First-order policy optimization for robust markov decision process},
  author={Li, Yan and Lan, Guanghui and Zhao, Tuo},
  journal={arXiv preprint arXiv:2209.10579},
  year={2022}
}

@article{li2023first,
  title={First-order policy optimization for robust policy evaluation},
  author={Li, Yan and Lan, Guanghui},
  journal={arXiv preprint arXiv:2307.15890},
  year={2023}
}

@article{kumar2023policy,
  title={Policy gradient for rectangular robust markov decision processes},
  author={Kumar, Navdeep and Derman, Esther and Geist, Matthieu and Levy, Kfir Y and Mannor, Shie},
  journal={Advances in Neural Information Processing Systems},
  volume={36},
  pages={59477--59501},
  year={2023}
}

@inproceedings{kuang2022learning,
  title={Learning robust policy against disturbance in transition dynamics via state-conservative policy optimization},
  author={Kuang, Yufei and Lu, Miao and Wang, Jie and Zhou, Qi and Li, Bin and Li, Houqiang},
  booktitle={Proceedings of the AAAI Conference on Artificial Intelligence},
  volume={36},
  pages={7247--7254},
  year={2022}
}

@article{nilim2004robustness,
  title={Robustness in Markov decision problems with uncertain transition matrices},
  author={Nilim, Arnab and Ghaoui, Laurent},
  journal={Advances in neural information processing systems},
  volume={16},
  year={2003}
}

@article{iyengar2005robust,
  title={Robust dynamic programming},
  author={Iyengar, Garud N},
  journal={Mathematics of Operations Research},
  volume={30},
  number={2},
  pages={257--280},
  year={2005},
  publisher={INFORMS}
}

@article{lim2013reinforcement,
  title={Reinforcement learning in robust markov decision processes},
  author={Lim, Shiau Hong and Xu, Huan and Mannor, Shie},
  journal={Advances in Neural Information Processing Systems},
  volume={26},
  year={2013}
}

@article{gao2022distributionally,
  title={Distributionally robust stochastic optimization with {W}asserstein distance},
  author={Gao, Rui and Kleywegt, Anton},
  journal={Mathematics of Operations Research},
  year={2022},
  publisher={INFORMS}
}

@article{hub65,
  title={A robust version of the probability ratio test},
  author={Huber, Peter J},
  journal={The Annals of Mathematical Statistics},
  pages={1753--1758},
  year={1965},
  publisher={JSTOR}
}

@inproceedings{sunpolicy2024,
  title={Policy Optimization for Robust Average Reward MDPs},
  author={Sun, Zhongchang and He, Sihong and Miao, Fei and Zou, Shaofeng},
  booktitle={The Thirty-eighth Annual Conference on Neural Information Processing Systems},
  Year = 2024
}

@article{chen2020finite,
  title={Finite-sample analysis of contractive stochastic approximation using smooth convex envelopes},
  author={Chen, Zaiwei and Maguluri, Siva Theja and Shakkottai, Sanjay and Shanmugam, Karthikeyan},
  journal={Advances in Neural Information Processing Systems},
  volume={33},
  pages={8223--8234},
  year={2020}
}

@article{wang2022near,
  title={Near sample-optimal reduction-based policy learning for average reward MDP},
  author={Wang, Jinghan and Wang, Mengdi and Yang, Lin F},
  journal={arXiv preprint arXiv:2212.00603},
  year={2022}
}

@article{gao2023distributionally,
  title={Distributionally robust stochastic optimization with Wasserstein distance},
  author={Gao, Rui and Kleywegt, Anton},
  journal={Mathematics of Operations Research},
  volume={48},
  number={2},
  pages={603--655},
  year={2023},
  publisher={INFORMS}
}

@article{berger1992bounded,
  title={Bounded semigroups of matrices},
  author={Berger, Marc A and Wang, Yang},
  journal={Linear Algebra and its Applications},
  volume={166},
  pages={21--27},
  year={1992},
  publisher={Elsevier}
}

@article{chen2025nonQ,
  title={Non-asymptotic guarantees for average-reward q-learning with adaptive stepsizes},
  author={Chen, Zaiwei},
  journal={Advances in Neural Information Processing Systems},
  volume={38},
  year={2025},
}

@article{xu2025finite,
  title={Finite-sample analysis of policy evaluation for robust average reward reinforcement learning},
  author={Xu, Yang and Mondal, Washim Uddin and Aggarwal, Vaneet},
  journal={Advances in Neural Information Processing Systems},
  volume={38},
  year={2025},
}

@inproceedings{bai2024regret,
  title={Regret analysis of policy gradient algorithm for infinite horizon average reward markov decision processes},
  author={Bai, Qinbo and Mondal, Washim Uddin and Aggarwal, Vaneet},
  booktitle={Proceedings of the AAAI Conference on Artificial Intelligence},
  volume={38},
  pages={10980--10988},
  year={2024}
}

@article{chen2025sample,
  title={Sample Complexity of Distributionally Robust Average-Reward Reinforcement Learning},
  author={Chen, Zijun and Wang, Shengbo and Si, Nian},
  journal={arXiv preprint arXiv:2505.10007},
  year={2025}
}

@article{roch2025finite,
  title={A Finite-Sample Analysis of Distributionally Robust Average-Reward Reinforcement Learning},
  author={Roch, Zachary and Zhang, Chi and Atia, George and Wang, Yue},
  journal={arXiv preprint arXiv:2505.12462},
  year={2025}
}

@article{jin2018q,
  title={Is Q-learning provably efficient?},
  author={Jin, Chi and Allen-Zhu, Zeyuan and Bubeck, Sebastien and Jordan, Michael I},
  journal={Advances in neural information processing systems},
  volume={31},
  year={2018}
}

@inproceedings{wang2023finite,
  title={A finite sample complexity bound for distributionally robust q-learning},
  author={Wang, Shengbo and Si, Nian and Blanchet, Jose and Zhou, Zhengyuan},
  booktitle={International Conference on Artificial Intelligence and Statistics},
  pages={3370--3398},
  year={2023},
  organization={PMLR}
}

@article{wang2024sample,
  title={Sample complexity of variance-reduced distributionally robust q-learning},
  author={Wang, Shengbo and Si, Nian and Blanchet, Jose and Zhou, Zhengyuan},
  journal={Journal of Machine Learning Research},
  volume={25},
  number={341},
  pages={1--77},
  year={2024}
}

@inproceedings{wang2025provable,
  title={Provable Policy Gradient for Robust Average-Reward MDPs Beyond Rectangularity},
  author={Wang, Qiuhao and Zha, Yuqi and Ho, Chin Pang and Petrik, Marek},
  booktitle={Forty-second International Conference on Machine Learning},
  year={2025}
}

@inproceedings{ganesh2025sharper,
  title={A Sharper Global Convergence Analysis for Average Reward Reinforcement Learning via an Actor-Critic Approach},
  author={Ganesh, Swetha and Mondal, Washim Uddin and Aggarwal, Vaneet},
  booktitle={Forty-second International Conference on Machine Learning},
  year={2025}
}

@article{xu2025global,
  title={Global convergence for average reward constrained mdps with primal-dual actor critic algorithm},
  author={Xu, Yang and Ganesh, Swetha and Mondal, Washim and Bai, Qinbo and Aggarwal, Vaneet},
  journal={Advances in Neural Information Processing Systems},
  volume={38},
  year={2025}
}

@article{ganesh2025regret,
  title={Regret analysis of average-reward unichain mdps via an actor-critic approach},
  author={Ganesh, Swetha and Aggarwal, Vaneet},
  journal={Advances in Neural Information Processing Systems},
  volume={38},
  year={2025}
}

\newpage

\onecolumn

\title{Efficient Q-Learning and Actor-Critic Methods for Robust Average-Reward Reinforcement Learning (Supplementary Material)}
\maketitle
\appendix

\section{Missing Proofs for Section \ref{Qlearning}}

Recall that in the $Q$-learning algorithms, the policies are deterministic policies. Thus, we define $\Pi_{\rm det}$ to be the finite set of all deterministic stationary policies with $|\Pi_{\rm det}|= A^S$.

\begin{algorithm}[htb]
\caption{Truncated MLMC Estimator, Algorithm 1 in \cite{xu2025finite}}
\label{alg:sampling}
\textbf{Input}: $s\in \mathcal{S}$, $a\in\mathcal{A}$,  Max level $N_{\max}$, Value function $V$
\begin{algorithmic}[1] 
\State Sample $N\sim\mathrm{Geom}(1/2)$ on $\{0,1,2,\ldots\}$, i.e.,
$\mathbb P(N=n)=2^{-(n+1)}$
\State $N' \leftarrow \min \{N, N_{\max}\}$
\State Collect $2^{N'+1}$ i.i.d. samples of $\{s'_i\}^{2^{N'+1}}_{i=1}$ with $s'_i \sim \tilde{\kp}^a_s$ for each $i$
\State $\hat{\kp}^{a,E}_{s,N'+1} \leftarrow \frac{1}{2^{N'}}\sum_{i=1}^{2^{N'}} \mathbb{1}_{\{s'_{2i}\}}$
\State $\hat{\kp}^{a,O}_{s,N'+1} \leftarrow \frac{1}{2^{N'}}\sum_{i=1}^{2^{N'}} \mathbb{1}_{\{s'_{2i-1}\}}$
\State $\hat{\kp}^{a}_{s,N'+1}\leftarrow\frac{1}{2^{N'+1}}\sum_{i=1}^{2^{N'+1}} \mathbb{1}_{\{s'_i\}}$
\State$\hat{\kp}^{a,1}_{s,N'+1} \leftarrow \mathbb{1}_{\{s'_1\}}$
\If{TV} Obtain $\sigma_{\hat{\kp}^{a,1}_{s,N'+1}}(V), \sigma_{\hat{\kp}^{a}_{s,N'+1}}(V), \sigma_{\hat{\kp}^{a,E}_{s,N'+1}}(V), \sigma_{\hat{\kp}^{a,O}_{s,N'+1}}(V)$ from \eqref{eq:tv dual}
\ElsIf{Wasserstein} Obtain $\sigma_{\hat{\kp}^{a,1}_{s,N'+1}}(V), \sigma_{\hat{\kp}^{a}_{s,N'+1}}(V), \sigma_{\hat{\kp}^{a,E}_{s,N'+1}}(V), \sigma_{\hat{\kp}^{a,O}_{s,N'+1}}(V)$ from \eqref{eq:wd dual}
\EndIf
\State $\Delta_{N'}(V)\leftarrow \sigma_{\hat{\kp}^{a}_{s,N'+1}}(V)-\frac{1}{2}\Bigl[ \sigma_{\hat{\kp}^{a,E}_{s,N'+1}}(V)+  \sigma_{\hat{\kp}^{a,O}_{s,N'+1}}(V)
\Bigr]$
\State Let
\begin{equation*}
p_{N'}\coloneqq \mathbb P(\min\{N,N_{\max}\}=N')
=
\begin{cases}
2^{-(N'+1)}, & 0\leq N'<N_{\max},\\
2^{-N_{\max}}, & N'=N_{\max}.
\end{cases}
\end{equation*}
\State
\begin{equation*}
\hat{\sigma}_{\cp^a_s}(V)
\leftarrow
\sigma_{\hat{\kp}^{a,1}_{s,N'+1}}(V)
+
\frac{\Delta_{N'}(V)}{p_{N'}} .
\end{equation*}
\Return $\hat{\sigma}_{\cp^a_s}(V)$
\end{algorithmic}
\end{algorithm}

\subsection{Proof of Theorem \ref{thm:Q-learningcontraction}} \label{Qcontractionproof}
 For any $\kp \in \cp$, denote $\kp^\pi$ as the transition matrix under policy $\pi \in \Pi_{\rm det}$ and the unique stationary distribution $d^\pi_\kp$. We define the family of fluctuation matrices to be 
\begin{equation} \label{eq:spectralbound}
    \mathcal{F} \coloneqq \{F^\pi_\kp = \kp^\pi - E_\kp : \pi \in \Pi_{\rm det}, \kp \in \cp\}
\end{equation}
where $E_\kp$ is the matrix with all rows being identical to $d^\pi_\kp$.

To construct the desired one-step semi-norm contraction for $\mathbf H$, one key step is to show that the joint spectral radius of the family $\mathcal{F}$ is strictly less than $1$.
We introduce the Dobrushin’s coefficient for an $n\times n$ Markov matrix $P$ defined as follows:
\begin{equation}\label{eq:Dobrushindef}
\tau(P)\ :=\ 1-\min_{i<j}\sum_{s=1}^n \min \big(P_{is},P_{js}\big).
\end{equation}

By Lemma \ref{lem:xulemA.3}, denote the joint spectral radius of $\mathcal{F}$ as $\hat{\rho}(\mathcal{F})$ we have,
\begin{equation}\label{eq:JSRupperbound}
\hat\rho(\mathcal F) \coloneqq \lim_{k\rightarrow \infty} \sup_{F_i \in \mathcal{F}}\|F_k \ldots F_1\|^{\frac{1}{k}}\leq \inf_{m\geq 1}\left(\ \sup_{\substack{\pi_1,\ldots,\pi_m\in\Pi\\ \kp_1, \ldots, \kp_m \in \mathcal P}}
\tau \big(\kp_m^{\pi_m}\cdots \kp_1^{\pi_1}\big)\ \right)^{1/m}.
\end{equation}
Note that \eqref{eq:JSRupperbound} relates the joint spectral radius of $\mathcal{F}$ to the Dobrushin's coefficient of product of $\kp_i^{\pi_i}$. We now introduce a mild assumption to ensure a uniform “minimal mixing’’ across all deterministic policy sequences, which prevents the induced kernels from drifting arbitrarily far apart under adversarial policy switching. The assumption quantifies over deterministic policy sequences and does not require policy randomization.

\begin{assumption}
\label{assump:B}
Let $\{\tilde \kp^{\pi}:\pi\in\Pi_{\det}\}$ be the centroid family.
There exist an integer $L\ge 1$, a constant $\varepsilon_L\in(0,1]$, and a distribution $\nu\in\Delta(\mathcal S)$ such that for every $L$-length deterministic policy sequence $(\pi_1,\ldots,\pi_L)$, for every $s\in\mathcal S$,
\begin{equation} \label{eq:epsilonL}
(\tilde \kp^{\pi_L}\cdots \tilde \kp^{\pi_1})(s,\cdot) \geq \varepsilon_L \nu(\cdot).
\end{equation}
\end{assumption}

\begin{remark}
\label{rem:practicalB}
Assumption~\ref{assump:B} is a sufficient proof condition for obtaining a policy-sequence-uniform contraction; it is not implied by fixed-policy irreducibility alone. It is, however, easy to check in common smoothed tabular models:
\begin{enumerate}
\item \textbf{Single-step minorization.} If there exist $\varepsilon>0$ and $\nu\in\Delta(\mathcal S)$ such that
$\tilde \kp(\cdot | s,a)\ge \varepsilon \nu(\cdot)$ for all $(s,a)$, then Assumption~\ref{assump:B} holds with $L=1$ and $\varepsilon_L=\varepsilon$.
\item \textbf{Rare reset or leakage.} If over every block of $L$ steps, uniformly over the start state and action sequence, the nominal dynamics have probability at least $\beta>0$ of drawing the state from a fixed distribution $\nu$, then Assumption~\ref{assump:B} holds with $\varepsilon_L=\beta$.
\end{enumerate}
Thus, small policy-independent jitter, smoothing, or rare reset events are sufficient to ensure the required uniform overlap.
\end{remark}

Utilizing the $\varepsilon_L$ defined in \eqref{eq:epsilonL}, we now formally discuss the uncertainty radius restrictions in the settings considered.

\begin{lemma}[Contamination uncertainty radius restrictions] \label{lem:contamination_radius_both}
Under Assumption \ref{ass:Qlearning} and Assumption \ref{assump:B}, consider the contamination uncertainty set
\begin{equation}
\cp \coloneqq \Bigl\{\kp: \forall(s,a), \kp(\cdot | s,a) = (1-\delta) \tilde{\kp}(\cdot | s,a) + \delta q(\cdot | s,a),  q(\cdot | s,a)\in\Delta(\mathcal S)\,\Bigr\},
\quad 0\leq \delta <1,
\end{equation}
 no smallness restriction on $\delta$ is needed beyond $\delta<1$.  Specifically, 
\begin{equation} \label{eq:radiusContamination}
\hat\rho(\mathcal F) \leq  \Big(1-(1-\delta)^L \varepsilon_L\Big)^{1/L}\ <\ 1.
\end{equation}
Moreover, for any $\pi$ and $\kp\in\mathcal P$, the induced kernel $\kp^\pi$ is irreducible and aperiodic.
\begin{proof}
For a fixed policy $\pi$, the induced state–transition matrix $\kp^\pi$ is expressed as 
\begin{equation} \label{eq:contaminationppi}
\kp^\pi(s,s') \coloneqq \sum_{a}\pi(a| s) \kp(s'| s,a) = (1-\delta) \sum_{a}\pi(a| s) \tilde{\kp}(s'| s,a) + \delta \sum_{a}\pi(a| s) q(s'| s,a)
\end{equation}
Define the induced uncertainty set $\mathcal P^\pi \coloneqq \{\kp^\pi:  \kp \in\cp, \pi\in\Pi_{\det}\}$ and define $
\tilde{\kp}^\pi (s,s') \coloneqq \sum_{a}\pi(a| s) \tilde{\kp}(s'| s,a).$
Then \eqref{eq:contaminationppi} can be expressed as  
\begin{equation}
\mathcal P^\pi\ = \Bigl\{ (1-\delta) \tilde{\kp}^\pi + \delta q^\pi :q^\pi\ \text{row–stochastic}, \pi\in\Pi_{\det} \Bigr\}.
\end{equation}

Fix any length-$L$ sequence $(\pi_1,\ldots,\pi_L)$ and admissible contaminated kernels $\kp_t^{\pi_t}=(1-\delta)\tilde \kp^{\pi_t}+\delta q_t$.
By nonnegativity, for each $i,j\in \mcs$
\begin{equation*}
\kp_L^{\pi_L}\cdots \kp_1^{\pi_1} (i,j) \geq (1-\delta)^L \tilde \kp^{\pi_L}\cdots \tilde \kp^{\pi_1} (i,j).
\end{equation*}
Assumption~\ref{assump:B} implies for all $i\neq j$,
\begin{equation}
\sum_{s}\min \Big\{\big( \tilde \kp^{\pi_L}\cdots \tilde \kp^{\pi_1}\big)_{is}, \big( \tilde \kp^{\pi_L}\cdots \tilde \kp^{\pi_1}\big)_{js}\Big\}\geq \varepsilon_L >0.
\end{equation}
By the fact that constant scaling preserves minima, hence
\begin{equation}
\sum_{s}\min\Big\{\big(\kp_L^{\pi_L}\cdots \kp_1^{\pi_1}\big)_{is},\ \big(\kp_L^{\pi_L}\cdots \kp_1^{\pi_1}\big)_{js}\Big\}\geq (1-\delta)^L \varepsilon_L >0,
\end{equation}
which is equivalent to the Dobrushin bound
\begin{equation}
\tau \big(\kp_L^{\pi_L}\cdots \kp_1^{\pi_1} \big)\leq 1-(1-\delta)^L \varepsilon_L <\ 1 .
\end{equation}

Taking the supremum over all sequences and applying \eqref{eq:JSRupperbound} implies
\begin{equation*}
\hat\rho(\mathcal F)\leq  \inf_{m\geq 1}\big(\ \sup_{\substack{\pi_1,\ldots,\pi_m\in\Pi\\ \kp_1, \ldots, \kp_m \in \mathcal P}}
\tau \big(\kp_m^{\pi_m}\cdots \kp_1^{\pi_1}\big)\ \big)^{1/m}\leq \Big(1-(1-\delta)^L\varepsilon_L\Big)^{1/L}< 1.
\end{equation*}
Regarding ergodicity, for each fixed $\pi$, since $\tilde \kp^\pi$ is ergodic, there exists $m_\pi$ with $(\tilde \kp^\pi)^{m_\pi}>0$. Then
$
(\kp^\pi)^{m_\pi}\geq (1-\delta)^{m_\pi}(\tilde \kp^\pi)^{m_\pi}>0,
$
so $\kp_\pi$ is also irreducible and aperiodic.
\end{proof}
\end{lemma}

\begin{lemma}[TV distance uncertainty radius restrictions]
\label{lem:tv_radius_both}
Under Assumption \ref{ass:Qlearning} and Assumption \ref{assump:B}, consider the TV uncertainty set
\begin{equation}
\cp \coloneqq \Bigl\{\kp : \forall(s,a), 
\mathrm{TV}\bigl(\kp(\cdot| s,a), \tilde{\kp}(\cdot| s,a)\bigr)\leq \delta \Bigr\},
\quad \delta \ge0,
\end{equation}
Because $\Pi_{\det}$ is finite, there exists $\tilde m\in\mathbb N$ such that $(\tilde\kp^\pi)^{\tilde m}>0$ for all $\pi\in\Pi_{\det}$, set
$
b_0 = \min_{\pi\in\Pi_{\det}} \min_{i,s\in\mathcal S}\ \bigl((\tilde\kp^\pi)^{\tilde m}\bigr)_{is} \in(0,1],
$
and define the TV radius threshold
$
\delta_{\mathrm{TV}}^\star = \min \left\{\frac{\varepsilon_L}{2L}, \frac{b_0}{2\tilde m} \right\}.
$
Then if $0\le \delta<\delta_{\mathrm{TV}}^\star$, then the joint spectral radius of $\mathcal{F}$ is strictly less than $1$. Specifically, 
\begin{equation} \label{eq:radiusTV}
\hat\rho(\mathcal F) \le \Bigl(1-(\varepsilon_L-2L\delta)\Bigr)^{1/L} < 1.
\end{equation}
Moreover, for any $\pi$ and $\kp\in\mathcal P$, the induced kernel $\kp^\pi$ is irreducible and aperiodic.
\begin{proof}
For a fixed policy $\pi$, the induced state–transition matrix $\kp^\pi$ is expressed as 
\begin{equation} \label{eq:tvppi}
\kp^\pi(s,s') \coloneqq \sum_{a}\pi(a| s) \kp(s'| s,a) .
\end{equation}
Then for each state $s$ we have,
\begin{align}
\mathrm{TV}\bigl(\kp^\pi(s,\cdot), \tilde{\kp}^\pi(s,\cdot)\bigr)
=\mathrm{TV} \Bigl(\sum_a \pi(a| s)\kp(\cdot | s,a), \sum_a \pi(a | s)\tilde{\kp}(\cdot | s,a)\Bigr)
\leq \sum_a \pi(a | s)\mathrm{TV}\bigl(\kp(\cdot | s,a),\tilde{\kp}(\cdot |  s,a)\bigr)  \leq \delta,
\end{align}
by convexity of $\mathrm{TV}(\cdot,\cdot)$ in each argument. Hence
\begin{equation} \label{eq:TVpiuncertainty}
\mathcal P^\pi \coloneqq \{\kp^\pi:\kp\in\cp, \pi\in\Pi_{\det}\} \subseteq  \Bigl\{\,M\ \text{row–stochastic}:  \forall s, \exists  \,\pi\in\Pi_{\det}, \mathrm{TV}\bigl(M(s,\cdot), \tilde{\kp}^\pi(s,\cdot)\bigr)\le \delta \Bigr\}. 
\end{equation}
For any $L$-length policy sequence $(\pi_1,\ldots,\pi_L)$ and for any $\kp_t\in\cp$, by \eqref{eq:TVpiuncertainty}, for $t=1,\ldots,L$, we have 
\begin{equation} \label{eq:supTV}
\sup_{i}\mathrm{TV}\bigl(\kp_t^{\pi_t}(i,\cdot),\tilde{\kp}^{\pi_t}(i,\cdot)\bigr)\ \le\ \delta.
\end{equation}
TV is a nonexpansion under right-multiplication by a stochastic matrix, hence the usual telescoping gives, for each start state $i$, by \eqref{eq:supTV} we have
\begin{equation} \label{eq:Ldelta}
\mathrm{TV}\bigl( \kp_1^{\pi_1}\cdots \kp_L^{\pi_L}(i,\cdot),  \tilde\kp^{\pi_1}\cdots \tilde\kp^{\pi_L}(i,\cdot)\bigr)\leq \sum_{t=1}^L \sup_x \mathrm{TV}\bigl(\kp_t^{\pi_t}(x,\cdot),\tilde{\kp}^{\pi_t}(x,\cdot)\bigr)\ \le\ L\delta.
\end{equation}
Then for $i\neq j$,
\begin{align}
\sum_s \min\{\kp_1^{\pi_1}\cdots \kp_L^{\pi_L}(i,s),\kp_1^{\pi_1}\cdots \kp_L^{\pi_L}(j,s)\}
&\geq \sum_s \min\{ \tilde\kp^{\pi_1}\cdots \tilde\kp^{\pi_L}(i,s), \tilde\kp^{\pi_1}\cdots \tilde\kp^{\pi_L}(j,s)\}\nonumber\\
& -\mathrm{TV}\bigl( \kp_1^{\pi_1}\cdots \kp_L^{\pi_L}(i,\cdot),  \tilde\kp^{\pi_1}\cdots \tilde\kp^{\pi_L}(i,\cdot)\bigr)\nonumber\\
&-\mathrm{TV}\bigl( \kp_1^{\pi_1}\cdots \kp_L^{\pi_L}(j,\cdot),  \tilde\kp^{\pi_1}\cdots \tilde\kp^{\pi_L}(j,\cdot)\bigr).
\end{align}
By Assumption \ref{assump:B} and \eqref{eq:Ldelta},  $\sum_s \min\{ \tilde\kp^{\pi_1}\cdots \tilde\kp^{\pi_L}(i,s), \tilde\kp^{\pi_1}\cdots \tilde\kp^{\pi_L}(j,s)\} \ge \varepsilon_L$, and the two TV terms are each $\le L\delta$, whence
\begin{equation*}
\min_{i<j}\sum_s \min\{\kp_1^{\pi_1}\cdots \kp_L^{\pi_L}(i,s),\kp_1^{\pi_1}\cdots \kp_L^{\pi_L}(j,s)\}\ge\ \varepsilon_L-2L\delta.
\end{equation*}
Using the identity $\tau(M)=1-\min_{i<j}\sum_s \min\{M_{is},M_{js}\}$ for row-stochastic $M$ further yields
\begin{equation}
\tau\big(\kp_1^{\pi_1}\cdots \kp_L^{\pi_L}\big) \leq 1-(\varepsilon_L-2L\delta) < 1,
\end{equation}
when $\delta<\varepsilon_L/(2L)$. Taking the supremum over sequences and applying \eqref{eq:JSRupperbound} proves \eqref{eq:radiusTV}.

Regarding ergodicity, fix $\pi\in\Pi_{\det}$. Consider $(\kp^\pi)^{\tilde m}$ versus $(\tilde\kp^\pi)^{\tilde m}$. By the same telescoping as \eqref{eq:Ldelta}, for each start state $i$,
\begin{equation*}
\mathrm{TV}\bigl( (\kp^\pi)^{\tilde m}(i,\cdot), (\tilde\kp^\pi)^{\tilde m}(i,\cdot)\bigr) \leq \tilde m\delta,
\end{equation*}
hence the $\ell_1$ distance between those two state space distributions is at most $2\tilde m\delta$. Thus, for every $(i,s)$,
\begin{equation*}
(\kp^\pi)^{\tilde m}_{is}\ \ge\ (\tilde\kp^\pi)^{\tilde m}_{is}\ -\ \big\|(\kp^\pi)^{\tilde m}(i,\cdot)-(\tilde\kp^\pi)^{\tilde m}(i,\cdot)\big\|_1
\ \ge\ b_0\ -\ 2\tilde m\,\delta.
\end{equation*}
If $\delta< b_0/(2\tilde m)$, then $(\kp^\pi)^{\tilde m}_{is}>0$ for all $i,s$, i.e., $(\kp^\pi)^{\tilde m}>0$ entrywise, so $\kp^\pi$ is irreducible and aperiodic. Since the bound is uniform in $\pi$, the ergodicity holds for all $\pi\in\Pi_{\det}$.
\end{proof}
\end{lemma}

\begin{lemma}[Wasserstein distance uncertainty radius restrictions]
\label{lem:wass_radius_both}
Let $(\mathcal S,d)$ be a finite metric space with
$\delta_{\min}\ :=\ \min_{x\neq y} d(x,y)\ >\ 0$,
and $p\in[1,\infty)$. Under Assumption~\ref{ass:Qlearning} and Assumption~\ref{assump:B}, consider the Wasserstein uncertainty set
$$
\cp \coloneqq \Bigl\{\kp: \forall(s,a), W_p\bigl(\kp(\cdot|s,a),\tilde\kp(\cdot|s,a)\bigr)\leq \delta\Bigr\},\qquad \delta \ge 0.
$$
Since $\Pi_{\det}$ is finite, there exists $\tilde m\in\mathbb N$ with $(\tilde\kp^\pi)^{\tilde m}>0$ for all $\pi$, and define
$
b_0\ = \min_{\pi\in\Pi_{\det}} \min_{i,s\in\mathcal S} \bigl((\tilde\kp^\pi)^{\tilde m}\bigr)_{is} \in(0,1].
$
Set the Wasserstein radius threshold
$
\delta_{\mathrm{W}}^\star \coloneqq\min\left\{\, \frac{\delta_{\min}\varepsilon_L}{2L} , \frac{\delta_{\min}b_0}{2\tilde m}\right\}.
$
Then if $0\le \delta<\delta_{\mathrm{W}}^\star$, then the joint spectral radius of $\mathcal{F}$ is strictly less than $1$. Specifically, 
\begin{equation} \label{eq:radiusW}
\hat\rho(\mathcal F) \le \Bigl(1-(\varepsilon_L-\frac{2L\delta}{\delta_{\min}})\Bigr)^{1/L} < 1.
\end{equation}
Moreover, for any $\pi$ and $\kp\in\mathcal P$, the induced kernel $\kp^\pi$ is irreducible and aperiodic.

\begin{proof}
For a fixed policy $\pi$, the induced state–transition matrix $\kp^\pi$ is expressed as 
\begin{equation} \label{eq:Wppi}
\kp^\pi(s,s') \coloneqq \sum_{a}\pi(a| s) \kp(s'| s,a) 
\end{equation}
For each state $s$, by joint convexity of $W_p^p(\cdot,\cdot;d)$,
\begin{equation*}
\begin{aligned}
W_p^p \Bigl(\kp^\pi(s,\cdot), \tilde{\kp}^{\pi}(s,\cdot);d\Bigr)
&= W_p^p \Bigl(\sum_a \pi(a | s)\kp(\cdot| s,a), \sum_a \pi(a| s)\tilde{\kp}(\cdot| s,a);d\Bigr)\\
&\le \sum_a \pi(a| s)\, W_p^p\bigl(\kp(\cdot| s,a),\tilde{\kp}(\cdot| s,a);d\bigr)
 \leq \delta^p,
\end{aligned}
\end{equation*}
hence $W_p\bigl(\kp^\pi(s,\cdot),\tilde{\kp}^\pi(s,\cdot);d\bigr)\le\delta$ for all $s$, i.e.
\begin{equation}\label{eq:Wpiuncertainty}
 \mathcal P^\pi\ \subseteq  \bigl\{\,M\ \text{row–stochastic}:\ \forall s,\ W_p(M(s,\cdot),\tilde{\kp}^\pi(s,\cdot);d)\leq \delta \bigr\}.
\end{equation}
We now draw connection between \eqref{eq:Wpiuncertainty} and the TV version in \eqref{eq:TVpiuncertainty}. Since the state space is finite, denote $\delta_{\min}:=\min_{x\neq y} d(x,y)>0$. Then, for any distributions $u,v$, we have
\begin{equation*}
W_1(u,v;d)\ \ge\ \delta_{\min}\,\mathrm{TV}(u,v)\qquad\text{and}\qquad W_p(u,v;d)\ \ge\ W_1(u,v;d),
\end{equation*}
which implies that
\begin{equation} \label{eq:W2TVreduction}
\mathrm{TV}(u,v)\ \le\ \frac{W_1(u,v;d)}{\delta_{\min}}\ \leq \frac{W_p(u,v;d)}{\delta_{\min}}.
\end{equation}
Therefore we can reduce \eqref{eq:Wpiuncertainty} into a TV distance uncertainty set characterized as follows:
\begin{equation} \label{eq:W2TVpiuncertainty}
\quad \mathcal P^\pi \subseteq  \Bigl\{\,M\ \text{row–stochastic}:\ \forall s,\ \mathrm{TV}\bigl(M(s,\cdot), \tilde{\kp}^\pi(s,\cdot)\bigr)\le \frac{\delta}{\delta_{\min}} \Bigr\}.
\end{equation}
Invoking Lemma \ref{lem:tv_radius_both} on \eqref{eq:W2TVpiuncertainty} yields the results.
\end{proof}
\end{lemma}

\begin{algorithm}[htb]
\caption{Robust Average Reward TD, Algorithm 2 in \citep{xu2025finite}}
\label{alg:RobustTD}
\textbf{Input}: Policy $\pi$, Initial values $V_0$, $g_0=0$, Stepsizes $\eta_t$, $\beta_t$, Max level $N_{\max}$, Anchor state $s_0\in\mcs$
\begin{algorithmic}[1] 
\For {$t = 0,1,\ldots, T-1$}
\For {each $(s,a)\in\mcs\times\mca$} 
\If {Contamination} Sample $\hat{\sigma}_{\cp^a_s}(V_t)$ according to \eqref{eq:contaminationquery}
\ElsIf{TV or Wasserstein} Sample $\hat{\sigma}_{\cp^a_s}(V_t)$ according to Algorithm \ref{alg:sampling}
\EndIf
\EndFor
\State $\hat{\mathbf{T}}_{g_0}(V_t)(s) \leftarrow \sum_{a} \pi(a|s) \big[ r(s,a) - g_0 +  \hat{\sigma}_{\cp^a_s}(V_t) \big], \quad \forall s \in \mathcal{S}$
\State  $V_{t+1}(s) \leftarrow V_t(s) + \eta_t \left( \hat{\mathbf{T}}_{g_0}(V_t)(s) - V_t(s) \right), \quad \forall s \in \mathcal{S}$
\State  $V_{t+1}(s) = V_{t+1}(s) - V_{t+1}(s_0), \quad \forall s \in \mathcal{S}$
\EndFor
\For {$t = 0,1,\ldots, T-1$}
\For {each $(s,a)\in\mcs\times\mca$} 
\If {Contamination} Sample $\hat{\sigma}_{\cp^a_s}(V_T)$ according to \eqref{eq:contaminationquery}
\ElsIf{TV or Wasserstein} Sample $\hat{\sigma}_{\cp^a_s}(V_T)$ according to Algorithm \ref{alg:sampling}
\EndIf
\EndFor
\State $\hat{\delta}_t(s) \leftarrow \sum_{a}\pi(a|s) \big[ r(s,a) +  \hat{\sigma}_{\cp^a_s}(V_T) \big]- V_T(s)  , \quad \forall s \in \mathcal{S}$
\State $\bar{\delta}_t \leftarrow \frac{1}{S}\sum_s \hat{\delta}_t(s)$
\State $g_{t+1} \leftarrow g_t + \beta_t(\bar{\delta}_t-g_t)$
\EndFor \quad
\Return $V_T$, $g_T$
\end{algorithmic}
\end{algorithm}

Under the conditions of Lemma \ref{lem:contamination_radius_both}-\ref{lem:wass_radius_both}, denote $r^* = \hat{\rho}(\mathcal{F})$, we construct the extremal norm $\|\cdot\|_{\rm ext}$ as follows:
\begin{equation} \label{eq:extremalnormdef}
    \|x\|_{\rm ext} \coloneqq \sup_{k \geq 0} \sup_{F_1,\ldots, F_k \in \mathcal{F}}  \alpha^{-k} \|F_k F_{k-1} \ldots  F_1 x \|_2 
\end{equation}
while $\alpha$ is chosen arbitrarily from  $ (r^*,1)$ and we follow the convention that $\|F_k F_{k-1} \ldots  F_1 x \|_2 = \|x\|_2$ when $k=0$. 

\begin{lemma} \label{lem:extremalnorm}
    Under the conditions of Lemma \ref{lem:contamination_radius_both}-\ref{lem:wass_radius_both}, the operator $\|\cdot\|_{\rm ext}$ is a valid norm with $\|F_\kp^\pi\|_{\rm ext}\leq \alpha<1$ for all $\kp \in \cp$ and $\pi \in \Pi_{\rm det}$.
    \begin{proof}
        We first prove that $\|\cdot\|_{\rm ext}$ is bounded. Following Lemma \ref{lem:bergerlemmaIV} and choosing $\lambda \in (r^*,\alpha)$, then there exist a positive constant $C < \infty$ such that 
        \begin{equation}
             \|F_k F_{k-1} \ldots  F_1 \|_2 \leq C \lambda^k 
        \end{equation}
        Hence for each $k$ and for all $x \in \mathbb{R}^{S}$,
        \begin{equation}
            \alpha^{-k} \|F_k F_{k-1} \ldots  F_1 x \|_2 \leq \alpha^{-k}C\lambda^k \|x\|_2 = C\left(\frac{\lambda}{\alpha}\right)^k\|x\|_2 \longrightarrow 0\quad \text{as} \quad k \rightarrow \infty
        \end{equation}
        Thus the double supremum in \eqref{eq:extremalnormdef} is over a bounded and vanishing sequence, so $\|\cdot\|_{\rm ext}$ bounded. 

        To check that $\|\cdot\|_{\rm ext}$ is a valid norm, note that if $x = \bf{0}$, $\|x\|_{\rm ext}$ is directly $0$. On the other hand, if $\|x\|_{\rm ext}=0$, we have
        \begin{equation}
            \sup_{k = 0} \sup_{F_1,\ldots, F_k \in \mathcal{F}}  \alpha^{-k} \|F_k F_{k-1} \ldots  F_1 x \|_2 = \|x\|_2 = 0 \Rightarrow x = \bf{0}
        \end{equation}
        Regarding homogeneity, observe that for any $c\in \mathbb{R}$ and $x \in \mathbb{R}^{S}$,
        \begin{equation}
            \|cx\|_{\rm ext} = \sup_{k \geq 0} \sup_{F_1,\ldots, F_k \in \mathcal{F}}  \alpha^{-k} \|F_k F_{k-1} \ldots  F_1 (cx) \|_2 = |c| \|x\|_{\rm ext}
        \end{equation}
        Regarding triangle inequality, using $\|F_k \ldots  F_1 (x+y) \|_2 \leq \|F_k \ldots  F_1 x\|_2+\|F_k \ldots  F_1 y \|_2$ for any $x,y \in \mathbb{R}^S$, we obtain,
        \begin{equation}
            \|x+y\|_{\rm ext} = \sup_{k \geq 0} \sup_{F_1,\ldots, F_k \in \mathcal{F}}  \alpha^{-k} \|F_k F_{k-1} \ldots  F_1 (x+y) \|_2 \leq \|x\|_{\rm ext}+\|y\|_{\rm ext}
        \end{equation}
        For any $F_\kp^\pi \in \mathcal{F}$, we have
        \begin{align} \label{eq:extremalcontraction}
            \|F_\kp^\pi x\|_{\rm ext} &= \sup_{k \geq 0} \sup_{F_1,\ldots, F_k \in \mathcal{F}}  \alpha^{-k} \|F_k F_{k-1} \ldots  F_1 (F_\kp^\pi x) \|_2 \nonumber \\
            &\leq \sup_{k \geq 1} \sup_{F_1,\ldots, F_k \in \mathcal{F}}  \alpha^{-(k-1)} \|F_k F_{k-1} \ldots  F_1 x \|_2 \nonumber \\
            &= \alpha \sup_{k \geq 1} \sup_{F_1,\ldots, F_k \in \mathcal{F}}  \alpha^{-k} \|F_k F_{k-1} \ldots  F_1 x \|_2 \nonumber \\
            &\leq \alpha \sup_{k \geq 0} \sup_{F_1,\ldots, F_k \in \mathcal{F}}  \alpha^{-k} \|F_k F_{k-1} \ldots  F_1 x \|_2 \nonumber \\
            &= \alpha \|x\|_{\rm ext}
        \end{align}
        Since $F_\kp^\pi$ is arbitrary, \eqref{eq:extremalcontraction} implies that for any $F_\kp^\pi \in \mathcal{F}$,
        \begin{equation}
            \|F_\kp^\pi\|_{\rm ext} = \sup_{x\neq \bf{0}}\frac{\|F_\kp^\pi x\|_{\rm ext}}{\|x\|_{\rm ext}} \leq \alpha < 1
        \end{equation}
    \end{proof}
    \end{lemma}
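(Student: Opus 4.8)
The plan is to check that $\|\cdot\|_{\rm ext}$ takes finite values first, then verify the three norm axioms, and finally read the contraction bound off directly from the definition \eqref{eq:extremalnormdef}.

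For finiteness I would combine three facts: the family $\mathcal{F}$ is compact (since $\cp$ is compact and $\Pi_{\rm det}$ is finite), each $F\in\mathcal{F}$ has $\rho(F)<1$ by Lemma~4 of \citep{xu2025finite}, and hence by the Berger--Wang-type Lemma~\ref{lem:bergerlemmaIV} the joint growth of products is geometric: for any $\lambda\in(r^*,\alpha)$ there is a constant $C<\infty$ with $\|F_kF_{k-1}\cdots F_1\|_2\le C\lambda^{k}$ uniformly over $k\ge0$ and $F_1,\dots,F_k\in\mathcal{F}$. Then $\alpha^{-k}\|F_k\cdots F_1 x\|_2\le C(\lambda/\alpha)^{k}\|x\|_2$, which is bounded by $C\|x\|_2$ for all $k$ and tends to $0$ as $k\to\infty$ because $\lambda/\alpha<1$; so the double supremum is a supremum of a bounded family of nonnegative reals, hence finite, and $\|x\|_{\rm ext}\le C\|x\|_2<\infty$. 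Taking $k=0$ in the definition also records the lower bound $\|x\|_{\rm ext}\ge\|x\|_2$.

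The norm axioms are then routine. Positive definiteness follows from $\|x\|_{\rm ext}\ge\|x\|_2$, so $\|x\|_{\rm ext}=0$ forces $x=\mathbf{0}$, and the converse is clear. Absolute homogeneity holds because $\|F_k\cdots F_1(cx)\|_2=|c|\,\|F_k\cdots F_1 x\|_2$ for every product, and $|c|$ pulls out of the supremum. The triangle inequality follows from $\|F_k\cdots F_1(x+y)\|_2\le\|F_k\cdots F_1 x\|_2+\|F_k\cdots F_1 y\|_2$ together with subadditivity of the supremum. For the contraction bound, fix $F=F^\pi_\kp\in\mathcal{F}$; for any length $k$ and any $F_1,\dots,F_k\in\mathcal{F}$ the product $F_k\cdots F_1 F$ is itself a product of $k+1$ elements of $\mathcal{F}$, so $\alpha^{-k}\|F_k\cdots F_1 Fx\|_2=\alpha\cdot\alpha^{-(k+1)}\|F_k\cdots F_1 Fx\|_2\le\alpha\|x\|_{\rm ext}$; taking the supremum over $k$ and over $F_1,\dots,F_k$ gives $\|Fx\|_{\rm ext}\le\alpha\|x\|_{\rm ext}$, whence the induced operator norm satisfies $\|F^\pi_\kp\|_{\rm ext}=\sup_{x\neq\mathbf{0}}\|F^\pi_\kp x\|_{\rm ext}/\|x\|_{\rm ext}\le\alpha<1$.

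The only genuinely substantive step is the finiteness claim: knowing $\rho(F)<1$ for each individual fluctuation matrix does \emph{not} by itself control the growth rate of arbitrary products $F_k\cdots F_1$, so the uniform geometric decay supplied by Lemma~\ref{lem:bergerlemmaIV} applied to the compact family $\mathcal{F}$ is what actually makes the definition well posed. Once that uniform bound is in hand, the norm axioms and the contraction estimate are just bookkeeping with the supremum.
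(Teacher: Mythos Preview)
Your proposal is correct and follows essentially the same route as the paper: both use Lemma~\ref{lem:bergerlemmaIV} to obtain the uniform geometric bound $\|F_k\cdots F_1\|_2\le C\lambda^k$ that makes the supremum finite, then verify the norm axioms via the $k=0$ term and linearity of the products, and finally get the contraction by reindexing $F_k\cdots F_1 F$ as a length-$(k+1)$ product. Your explicit mention of the compactness of $\mathcal{F}$ and the lower bound $\|x\|_{\rm ext}\ge\|x\|_2$ are slightly more polished than the paper's presentation, but the argument is the same.
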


We are now ready to construct our desired $\|\cdot\|_H$. For any
deterministic policy $\pi\in\Pi_{\rm det}$ and any
$Q\in\mathbb R^{S\times A}$, write
\begin{equation*}
Q^\pi(s):=Q(s,\pi(s)),\qquad s\in\mcs .
\end{equation*}
Recall that
\begin{equation}
    \|x\|_{\tilde H}
    \coloneqq
    \sup_{F_\kp^\pi\in\mathcal F}\|F_\kp^\pi x\|_{\rm ext},
    \qquad x\in\mathbb R^S .
\end{equation}
Choose a scalar $\xi\in(0,1-\alpha)$ and define
\begin{equation} \label{eq:Hseminormconstruction}
\|Q\|_H
\coloneqq
\sup_{\pi\in\Pi_{\rm det}}\|Q^\pi\|_{\tilde H}
+
\xi\sup_{\pi\in\Pi_{\rm det}}\inf_{c\in\mathbb R}
\|Q^\pi-c\e\|_{\rm ext}.
\end{equation}

\paragraph{(i) $\|\cdot\|_H$ is a valid semi-norm.}
The first term in \eqref{eq:Hseminormconstruction} is a supremum of semi-norms,
and the second term is a supremum of quotient semi-norms. Hence
$\|\cdot\|_H$ is nonnegative, homogeneous, and satisfies the triangle inequality.

\paragraph{(ii) Kernel of $\|\cdot\|_H$.}
Clearly every constant $Q=c\e$ satisfies $\|Q\|_H=0$. Conversely, if
$\|Q\|_H=0$, then for every deterministic policy $\pi$, the vector $Q^\pi$ is
constant over states. Since deterministic policies can choose arbitrary actions
state by state, this implies that all entries $Q(s,a)$ are equal to the same
constant. Hence the kernel is exactly $\{c\e:c\in\mathbb R\}$.
\paragraph{(iii) One‐Step Contraction}

Recall that the optimal robust Bellman operator is defined as follows
\begin{equation}
{\mathbf H}[Q](s,a)=r(s,a)+\min_{p\in\mathcal P_{s}^a}
\sum_{s'}p(s'| s,a)\,\max_{b\in \mca}Q(s',b)
\end{equation}

Since $V_Q(s)=\max_{b\in\mca}Q(s,b)$. For any $Q_1, Q_2 \in \mathbb{R}^{SA}$ and for each $(s,a)$ pair,
\begin{equation} \label{eq:operatordifferencesa}
{\mathbf H}[Q_1](s,a)-{\mathbf H}[Q_2](s,a)
=
\min_{p\in\mathcal P_{s}^a}p^\top V_{Q_1}
- \min_{p\in\mathcal P_{s}^a}p^\top V_{Q_2} \leq \max_{p\in\mathcal P_{s}^a}p^\top (V_{Q_1}-V_{Q_2})
\end{equation}
Likewise, we also have that 
\begin{align} \label{eq:reversed_operatordifferencesa}
{\mathbf H}[Q_2](s,a)-{\mathbf H}[Q_1](s,a)
&\leq \max_{p\in\mathcal P_{s}^a}p^\top (V_{Q_2}-V_{Q_1})  \nonumber\\
&= \max_{p\in\mathcal P_{s}^a}p^\top (V_{Q_2}-V_{Q_1}) \nonumber\\
&= \max_{p\in\mathcal P_{s}^a}-p^\top (V_{Q_1}-V_{Q_2}) \nonumber\\
& = -\min_{p\in\mathcal P_{s}^a}p^\top (V_{Q_1}-V_{Q_2})
\end{align}
By \eqref{eq:operatordifferencesa}-\eqref{eq:reversed_operatordifferencesa}, we thus have
\begin{equation}
    \min_{p\in\mathcal P_{s}^a}p^\top (V_{Q_1}-V_{Q_2}) \leq {\mathbf H}[Q_1](s,a)-{\mathbf H}[Q_2](s,a)\leq\max_{p\in\mathcal P_{s}^a}p^\top (V_{Q_1}-V_{Q_2})
\end{equation}

\begin{lemma}\label{lem:Hdiff_representation}
Fix $Q_1,Q_2\in\mathbb{R}^{S\times A}$ and let
$\Delta V:=V_{Q_1}-V_{Q_2}\in\mathbb{R}^S$. Assume that each row set
$\mathcal P_s^a$ is convex and compact. Then for every deterministic policy
$\mu\in\Pi_{\rm det}$, there exists a transition kernel
$\kp_\mu\in\cp$ such that
\begin{equation}\label{eq:Hdiff2P}
\bigl({\mathbf H}[Q_1]-{\mathbf H}[Q_2]\bigr)^\mu
=
\kp_\mu^\mu\,\Delta V .
\end{equation}
\end{lemma}

\begin{proof}
Fix $\mu\in\Pi_{\rm det}$. For each $s\in\mcs$, denote
\begin{equation*}
h_s:={\mathbf H}[Q_1](s,\mu(s))-{\mathbf H}[Q_2](s,\mu(s)),
\end{equation*}
and denote
$m_s:=\min_{p\in\mathcal P_s^{\mu(s)}}p^\top\Delta V$, $M_s:=\max_{p\in\mathcal P_s^{\mu(s)}}p^\top\Delta V$. By the elementary inequality for differences of minima, $m_s\le h_s\le M_s$.
Since $\mathcal P_s^{\mu(s)}$ is compact and $p\mapsto p^\top\Delta V$ is
linear, we choose $p_s^{\min}\in\arg\min_{p\in\mathcal P_s^{\mu(s)}}p^\top\Delta V$, and
$p_s^{\max}\in\arg\max_{p\in\mathcal P_s^{\mu(s)}}p^\top\Delta V$. If $M_s=m_s$, we set $\bar p_s=p_s^{\min}$. Otherwise we then define $\lambda_s:=\frac{h_s-m_s}{M_s-m_s}\in[0,1]$, and
$\bar p_s:=(1-\lambda_s)p_s^{\min}+\lambda_s p_s^{\max}$.

By convexity, $\bar p_s\in\mathcal P_s^{\mu(s)}$, and by construction
$\bar p_s^\top\Delta V=h_s$. By $(s,a)$-rectangularity, the rows
$\{\bar p_s:s\in\mcs\}$ can be assembled into a kernel $\kp_\mu\in\cp$ by
setting the $(s,\mu(s))$ row equal to $\bar p_s$ and choosing all other rows
arbitrarily from their row uncertainty sets. Then, for every $s$,
\begin{equation}
[\kp_\mu^\mu\Delta V](s)
=
\sum_{s'}\kp_\mu(s'|s,\mu(s))\Delta V(s')
=
\bar p_s^\top\Delta V
=
h_s .
\end{equation}
This proves \eqref{eq:Hdiff2P}.
\end{proof}

For $x\in\mathbb R^S$, define $N_0(x)\coloneqq \sup_{F\in\mathcal F}\|Fx\|_{\rm ext}$. Then the semi-norm \eqref{eq:Hseminormconstruction} can be written as
\begin{equation*}
\|Q\|_H
=
\sup_{\mu\in\Pi_{\det}}N_0(Q^\mu)
+
\xi
\sup_{\mu\in\Pi_{\det}}
\inf_{c\in\mathbb R}\|Q^\mu-c\mathbf e\|_{\rm ext}.
\end{equation*}

Let $\Delta Q=Q_1-Q_2$ and
$\Delta V=V_{Q_1}-V_{Q_2}$. For every state $s$,
\begin{equation*}
\Delta V(s)
=
\max_a Q_1(s,a)-\max_a Q_2(s,a)
\end{equation*}
belongs to the convex hull of
$\{\Delta Q(s,a):a\in\mathcal A\}$. Hence there exists a randomized
state-wise selector $\theta(\cdot|s)$ such that
\begin{equation*}
\Delta V(s)=\sum_a\theta(a|s)\Delta Q(s,a).
\end{equation*}
Equivalently, $\Delta V$ belongs to the convex hull of the deterministic
policy slices $\{\Delta Q^\mu:\mu\in\Pi_{\det}\}$. Since $N_0$ is a
semi-norm,
\begin{equation}
\label{eq:N0-DeltaV-bound}
N_0(\Delta V)
\leq
\sup_{\mu\in\Pi_{\det}}N_0(\Delta Q^\mu)
\leq
\|\Delta Q\|_H .
\end{equation}

Now fix any deterministic policy $\mu$. By Lemma~\ref{lem:Hdiff_representation},
there exists $\kp_\mu\in\mathcal P$ such that
\begin{equation*}
\bigl({\mathbf H}[Q_1]-{\mathbf H}[Q_2]\bigr)^\mu
=
\kp_\mu^\mu\Delta V .
\end{equation*}
Let $E_\mu$ be the rank-one matrix whose rows are the stationary distribution
of $\kp_\mu^\mu$, and set $F_\mu\coloneqq \kp_\mu^\mu-E_\mu\in\mathcal F$. Since $E_\mu\Delta V$ is a constant vector, $F(E_\mu\Delta V)=0$ for all
$F\in\mathcal F$. Therefore,
\begin{equation}
N_0(\kp_\mu^\mu\Delta V)
=
\sup_{F\in\mathcal F}\|F F_\mu\Delta V\|_{\rm ext}
\leq
\alpha N_0(\Delta V),
\end{equation}
where the last inequality follows from the extremal-norm contraction.

Moreover by $\inf_{c\in\mathbb R}
\|\kp_\mu^\mu\Delta V-c\mathbf e\|_{\rm ext}
\leq
\|F_\mu\Delta V\|_{\rm ext}
\leq
N_0(\Delta V),$ then combining the last two displays and taking the supremum over
$\mu\in\Pi_{\det}$ gives
\begin{equation}
\|{\mathbf H}[Q_1]-{\mathbf H}[Q_2]\|_H
\leq
(\alpha+\xi)N_0(\Delta V).
\end{equation}
Using \eqref{eq:N0-DeltaV-bound}, we obtain $\|{\mathbf H}[Q_1]-{\mathbf H}[Q_2]\|_H
\leq
(\alpha+\xi)\|Q_1-Q_2\|_H $. Thus the contraction holds with $\gamma_H\coloneqq \alpha+\xi<1 $.

\subsection{Proof of Theorem \ref{thm:QleariningComplexity}} \label{QleariningComplexityproof}

Based on the established semi-norm contraction property of the Bellman operator $\mathbf{H}$ in Theorem \ref{thm:Q-learningcontraction}, we can formulate Algorithm \ref{alg:Qlearning} as a stochastic approximation framework as follows:

\begin{equation}\label{eq:SA-update}
   Q_{t+1}=Q_t + \eta_t \bigl[\hat{\mathbf{H}}(Q_t) - Q_t\bigr],
   \quad
   \text{where}\quad
   \hat{\mathbf{H}}(Q_t)={\mathbf{H}}(Q_t) + w^t.
\end{equation}

where $w^t\in\mathbb R^{SA}$ is the support-estimation noise with
\begin{equation*}
w^t(s,a)
=
\hat{\sigma}_{\cp_s^a}(V_{Q_t})
-
\sigma_{\cp_s^a}(V_{Q_t}),
\qquad (s,a)\in\mathcal S\times\mathcal A.
\end{equation*}
The second moment of this noise need not be uniformly bounded over arbitrary
iterates $Q_t$. Instead, Lemma~\ref{lem:xuthm3-5} and norm equivalence on the
quotient space imply a linear-growth bound: there exist constants
$A_0,B_0<\infty$ and a bias level $\varepsilon_{\rm bias}$ such that
\begin{equation}
\label{eq:omegabounded}
\mathbb{E}\!\left[
\|w^t\|_{\mathcal H,\overline E}^2
\mid \mathcal F^t
\right]
\leq
A_0+B_0\|Q_t-Q^*\|_{\mathcal H,\overline E}^2,
\qquad
\left\|
\mathbb{E}[w^t\mid\mathcal F^t]
\right\|_{\mathcal H,\overline E}
\leq
\varepsilon_{\rm bias}
\left(1+\|Q_t-Q^*\|_{\mathcal H,\overline E}\right).
\end{equation}
For contamination uncertainty, $\varepsilon_{\rm bias}=0$. For TV and
Wasserstein uncertainty, $\varepsilon_{\rm bias}=O(2^{-N_{\max}/2})$.

The remaining is to analyze the stochastic approximation of the above setting in \eqref{eq:SA-update}-\eqref{eq:omegabounded}. We first express the semi-norm $\|\cdot\|_H$  as infimal convolution between the norm $\|\cdot\|_\mathcal{H}$ and the indicator function of constant vectors. For any norm $\|\cdot\|_c$ and the constant vector class ${\overline{E}} \coloneqq \{c \mathbf{e} : c \in \mathbb{R}\} $, define the indicator function  $\delta_{\overline{E}}$ as
\begin{equation}
\delta_{\bar{E}}(Q) := 
\begin{cases} 
    0 & Q \in \bar{E} \subseteq \mathbb{R}^{SA}, \\
    \infty & \text{{otherwise}.}
\end{cases}
\end{equation}
 then by \cite{zhang2021finite}, the semi-norm induced by norm  $\|\cdot\|_c$  and $\overline{E}$ can be expressed as the infimal convolution of $\|\cdot\|_c$ and the indicator function $\delta_{\overline{E}}$, defined as follows:
\begin{equation}
    \|Q\|_{c,\overline{E}} \coloneqq (\|\cdot\|_c \ast_{\inf} \delta_{\overline{E}})(Q) = \inf_{Q'}  (\|Q-Q'\|_c + \delta_{\overline{E}}(Q'))= \inf_{e\in\overline{E}} \|Q-e\|_c \quad \forall Q\in\mathbb{R}^{SA}.
\end{equation}
Where $\ast_{\inf}$ denotes the infimal convolution operator. Since $\|\cdot\|_H$ constructed in \eqref{eq:Hseminormconstruction} is a semi-norm with kernel being $\overline{E}$, we can construct a norm $\|\cdot\|_{\mathcal{H}}$ via reverse engineering via Lemma \ref{lem:seminorm2norm} such that 
\begin{equation} \label{eq:normNproperty}
\|Q\|_{\mathcal{H},\overline{E}} \coloneqq  (\|\cdot\|_{\mathcal{H}} \ast_{\inf} \delta_{\overline{E}})(Q)=\|Q\|_H
\end{equation}

We thus restate our problem of analyzing the iteration complexity for solving the fixed equivalent class equation $\hat{\mathbf{H}}(Q^*) - Q^* \in \overline{E}$, with the operator $\mathbf{H}:\mathbb{R}^{SA} \to \mathbb{R}^{SA}$ satisfying the contraction property as follows:
\begin{equation}
  \|\mathbf{H}(Q_1) - \mathbf{H}(Q_2)\|_{\mathcal{H},\overline{E}} \leq \gamma_H\|Q_1 - Q_2\|_{\mathcal{H},\overline{E}},
  \quad
  \gamma_H\in(0,1),
  \quad
  \forall Q_1,Q_2\in \mathbb{R}^{SA}
\end{equation}

Under the above framework, we use a linear-growth noise bound rather than a
uniformly bounded-noise assumption. Denote $D_t\coloneqq \|Q_t-Q^*\|_{\mathcal H,\overline E}$, since both $Q_t$ and $Q^*$ are anchored at $(s_0,a_0)$, Lemma~\ref{lem:normtranslations}
implies
\begin{equation*}
\|Q_t-Q^*\|_\infty
\leq
\frac{1}{c_H}\|Q_t-Q^*\|_{\mathcal H,\overline E}.
\end{equation*}
Moreover, the max operator is $1$-Lipschitz in $\|\cdot\|_\infty$, so
\begin{equation*}
\|V_{Q_t}-V_{Q^*}\|_{\rm sp}
\leq
2\|V_{Q_t}-V_{Q^*}\|_\infty
\leq
2\|Q_t-Q^*\|_\infty
\leq
\frac{2}{c_H}D_t .
\end{equation*}
Let $B_\star\coloneqq \|V_{Q^*}\|_{\rm sp}$, by Lemma~\ref{thm:optimal robust Bellman}, $V_{Q^*}$ is an average-reward relative value function up to an additive constant for some optimal greedy policy and some worst-case kernel in $\mathcal P$. Therefore, by Lemma~\ref{lem:wanglemma9}, we have
\begin{equation*}
B_\star \leq 4R_{\rm sp}t_{\mathrm{mix}}.
\end{equation*}
Under the reward-span normalization $R_{\rm sp}\le 1$, this gives $B_\star\le 4t_{\mathrm{mix}}$.
Then
\begin{equation}
\label{eq:VQt-span-linear-growth}
\|V_{Q_t}\|_{\rm sp}
\leq
B_\star+\frac{2}{c_H}D_t .
\end{equation}
Thus Lemma~\ref{lem:xuthm3-5}, together with norm equivalence, implies that
for each uncertainty model $U\in\{\mathrm{Cont},\mathrm{TV},\mathrm{Wass}\}$
there exist constants $A_0^U,B_0^U<\infty$ and a truncation-bias level
$\varepsilon_N^U$ such that
\begin{equation}
\label{eq:noise-linear-growth-final}
\mathbb E\!\left[
\|w^t\|_{\mathcal H,\overline E}^2
\,\middle|\,\mathcal F^t
\right]
\leq
A_0^U+B_0^U D_t^2,
\end{equation}
and
\begin{equation}
\label{eq:bias-linear-growth-final}
\left\|
\mathbb E[w^t\mid\mathcal F^t]
\right\|_{\mathcal H,\overline E}
\leq
\varepsilon_N^U(1+D_t).
\end{equation}
Here
\begin{equation*}
\varepsilon_N^{\mathrm{Cont}}=0,
\qquad
\varepsilon_N^{\mathrm{TV}}=O(2^{-N_{\max}/2}),
\qquad
\varepsilon_N^{\mathrm{Wass}}=O(2^{-N_{\max}/2}),
\end{equation*}
where the hidden constants depend on the uncertainty radius, $B_\star$, and
the norm-equivalence constants, but not on $t$.
Moreover,
\begin{equation*}
A_0^{\mathrm{Cont}}=O(B_\star^2),
\qquad
B_0^{\mathrm{Cont}}=O(1),
\end{equation*}
and
\begin{equation*}
A_0^{\mathrm{TV}}=O(B_\star^2N_{\max}),
\qquad
B_0^{\mathrm{TV}}=O(N_{\max}),
\end{equation*}
with the same orders for Wasserstein uncertainty:
\begin{equation*}
A_0^{\mathrm{Wass}}=O(B_\star^2N_{\max}),
\qquad
B_0^{\mathrm{Wass}}=O(N_{\max}).
\end{equation*}

\subsubsection{Construction of Semi-Lyapunov $M_{\overline{E}}(\cdot)$, Smoothness, and Choice of the Moreau Parameter.}

We make explicit the smoothing parameter used in the generalized Moreau envelope. For each $\mu>0$, let $M_\mu(\cdot)$ be the Moreau envelope function in Definition 2.2 of \cite{chen2020finite}, applied with smoothing parameter $\mu$. Following \cite[Propositions~1--2]{zhang2021finite}, we quotient out the constant-vector class $\overline E$ by defining
\begin{equation}
\label{eq:moreau-quotient-mu}
M_{\overline E,\mu}(Q)=\bigl(M_\mu\ast_{\inf}\delta_{\overline E}\bigr)(Q).
\end{equation}
For each $\mu>0$, there exist constants $c_l(\mu),c_u(\mu)>0$ such that
\begin{equation}
\label{eq:M2span-mu}
c_l(\mu)M_{\overline E,\mu}(Q)
\leq
\frac12\|Q\|_{\mathcal H,\overline E}^2
\leq
c_u(\mu)M_{\overline E,\mu}(Q),
\quad \forall Q\in\mathbb R^{SA}.
\end{equation}
Moreover, $M_{\overline E,\mu}$ is $L_\mu$-smooth w.r.t. another semi-norm $\|\cdot\|_{s,\overline E,\mu}$, and the Moreau approximation ratio satisfies
\begin{equation}
\label{eq:moreau-ratio-limit}
\lim_{\mu\downarrow0}\frac{c_u(\mu)}{c_l(\mu)}=1.
\end{equation}
Since Theorem~\ref{thm:Q-learningcontraction} gives $\gamma_H<1$, we choose $\mu_H>0$ sufficiently small so that
\begin{equation}
\label{eq:moreau-mu-choice-qlearning}
\gamma_H\sqrt{\frac{c_u(\mu_H)}{c_l(\mu_H)}}
\leq
\frac{1+\gamma_H}{2}<1.
\end{equation}
This is only a choice of the smooth Lyapunov function and imposes no additional assumption on the MDP or on the uncertainty set. In the rest of Appendix~\ref{appendix4biasedSA}, we fix this $\mu_H$ and write $M_{\overline E}$, $c_l$, $c_u$, $L$, and $\|\cdot\|_{s,\overline E}$ for $M_{\overline E,\mu_H}$, $c_l(\mu_H)$, $c_u(\mu_H)$, $L_{\mu_H}$, and $\|\cdot\|_{s,\overline E,\mu_H}$, respectively. Thus
\begin{equation} \label{eq:M2span}
c_lM_{\overline E}(Q)
\leq
\frac12\|Q\|_{\mathcal H,\overline E}^2
\leq
c_uM_{\overline E}(Q),
\quad \forall Q\in\mathbb R^{SA}.
\end{equation}
Concretely, $L$-smoothness means
\begin{equation} \label{eq:lsmooth}
M_{\overline E}(Q_2)
\leq
M_{\overline E}(Q_1)+
\langle\nabla M_{\overline E}(Q_1),Q_2-Q_1\rangle+
\frac{L}{2}\|Q_2-Q_1\|_{s,\overline E}^2,
\quad \forall Q_1,Q_2\in\mathbb R^{SA}.
\end{equation}
Furthermore, the gradient of $M_{\overline E}$ satisfies $\langle\nabla M_{\overline E}(Q),c\mathbf e\rangle=0$ for all $Q$ and all $c\in\mathbb R$, and
\begin{equation}  \label{eq:duallsmooth}
\|\nabla M_{\overline E}(Q_1)-\nabla M_{\overline E}(Q_2)\|_{*,s,\overline E}
\leq
L\|Q_2-Q_1\|_{s,\overline E},
\quad \forall Q_1,Q_2\in\mathbb R^{SA}.
\end{equation}
Define
\begin{equation}
\label{eq:alpha2-positive-choice}
\vartheta_H\coloneqq \gamma_H\sqrt{c_u/c_l},
\qquad
\alpha_2\coloneqq \frac{1-\vartheta_H}{2}.
\end{equation}
By \eqref{eq:moreau-mu-choice-qlearning},
\begin{equation}
\label{eq:alpha2-lower-bound-qlearning}
\alpha_2\geq \frac{1-\gamma_H}{4}>0.
\end{equation}
This is the positive drift parameter used in the stepsize below.

Note that since $\|\cdot\|_{s,\overline E}$ and $\|\cdot\|_{\mathcal H,\overline E}$ are semi-norms on a finite-dimensional space with the same kernel, there exist $\rho_1,\rho_2>0$ such that
\begin{equation} \label{eq:normequivalence}
\rho_1\|Q\|_{\mathcal H,\overline E}
\leq
\|Q\|_{s,\overline E}
\leq
\rho_2\|Q\|_{\mathcal H,\overline E},
\quad \forall Q\in\mathbb R^{SA}.
\end{equation}
Likewise, their dual norms, denoted $\|\cdot\|_{*,s,\overline E}$ and $\|\cdot\|_{*,\mathcal H,\overline E}$, satisfy
\begin{equation} \label{eq:dualnormequivalence}
\frac{1}{\rho_2}\|Q\|_{*,s,\overline E}
\leq
\|Q\|_{*,\mathcal H,\overline E}
\leq
\frac{1}{\rho_1}\|Q\|_{*,s,\overline E},
\quad \forall Q\in\mathbb R^{SA}.
\end{equation}

\subsubsection{Formal Derivation of Theorem \ref{thm:QleariningComplexity}} \label{appendix4biasedSA}

By $L$‐smoothness w.r.t.\ $\|\cdot\|_{s,\overline{E}}$ in \eqref{eq:lsmooth}, for each $t$,
\begin{equation} \label{eq:Mt+1decomposition}
  M_{\overline{E}}(Q_{t+1} - Q^*) \leq M_{\overline{E}}(Q_t - Q^*)+\bigl\langle \nabla M_{\overline{E}}(Q_t - Q^*),Q_{t+1}-Q_t \bigr\rangle+\tfrac{L}{2}\|\,Q_{t+1}-Q_t\|_{s,\overline{E}}^2
\end{equation}
where $Q_{t+1}-Q_t = \eta_t \bigl[\hat{\mathbf{H}}(Q_t) - Q_t\bigr] = \eta_t[\mathbf{H}(Q_t) + w^t - Q_t]$. Taking expectation of the second term of the RHS of \eqref{eq:Mt+1decomposition} conditioned on the filtration $\mathcal{F}^t$ we obtain,
\begin{align} \label{eq:nablaMt+1decomposition}
    \mathbb{E}&[\langle \nabla M_{\overline{E}}(Q_t - Q^*), Q_{t+1} - Q_t \rangle | \mathcal{F}^t] \nonumber\\
    &= \eta_t \mathbb{E}[\langle \nabla M_{\overline{E}}(Q_t - Q^*), \mathbf{H}(Q_t) - Q_t + w^t \rangle| \mathcal{F}^t] \nonumber\\
    &= \eta_t\langle \nabla M_{\overline{E}}(Q_t - Q^*),\mathbf{H}(Q_t) - Q_t \rangle + \eta_t\mathbb{E}[\langle \nabla M_{\overline{E}}(Q_t - Q^*),w^t \rangle | \mathcal{F}^t] \nonumber\\
     &= \eta_t\langle \nabla M_{\overline{E}}(Q_t - Q^*), \mathbf{H}(Q_t) - Q_t \rangle + \eta_t\langle \nabla M_{\overline{E}}(Q_t - Q^*), \mathbb{E}[w^t | \mathcal{F}^t] \rangle .
\end{align}

To analyze the additional bias term $\langle \nabla M_{\overline{E}}(Q_t - Q^*), \mathbb{E}[w^t | \mathcal{F}^t] \rangle$, we use the Holder's inequality. Specifically, for any (semi-)norm $\|\cdot\|$ with dual (semi-)norm $\|\cdot\|_*$ (defined by 
$\|u\|_* = \sup\{\langle u,v\rangle : \|v\|\le1\}$),
we have the general inequality
\begin{equation}\label{eq:dualNormIneq}
  \bigl\langle u,\,v\bigr\rangle\leq\|u\|_{*}\,\|v\|,
  \quad
  \forall\,u,v.
\end{equation}
In the biased noise setting, $u=\nabla M_{\overline{E}}(Q_t - Q^*)$ and $v=\mathbb{E}[w^t| \mathcal{F}^t]$, with $\|\cdot\|=\|\cdot\|_{\mathcal{H},\overline{E}}$.  So
\begin{equation} \label{eq:inner_product_bound}
  \bigl\langle 
    \nabla M_{\overline{E}}(Q_t - Q^*),
    \mathbb{E}[w^t| \mathcal{F}^t]
  \bigr\rangle \leq \bigl\|\nabla M_{\overline{E}}(Q_t - Q^*)\bigr\|_{*,\,\mathcal{H},\overline{E}}\cdot \bigl\|\mathbb{E}[w^t| \mathcal{F}^t]\bigr\|_{\mathcal{H},\overline{E}}.   
\end{equation}
Using the linear-growth bias bound \eqref{eq:bias-linear-growth-final}, it remains to bound
$\|\nabla M_{\overline{E}}(Q_t - Q^*)\|_{*,\mathcal{H},\overline{E}}$. By setting 
 $Q_2=0$ in \eqref{eq:duallsmooth}, we get
\begin{equation}  
  \| \nabla M_{\overline{E}}(Q) - \nabla M_{\overline{E}}(0)\|_{*,s,\overline{E}} \leq 
   L\|  Q \|_{s,\overline{E}},
  \quad
  \forall\,Q\in\mathbb{R}^{SA}.
\end{equation}
Thus,
\begin{equation} \label{eq:step1}
     \| \nabla M_{\overline{E}}(Q) \|_{*,s,\overline{E}} \leq  \| \nabla M_{\overline{E}}(0) \|_{*,s,\overline{E}} +
   L\| Q \|_{s,\overline{E}},
  \quad
  \forall\,Q\in\mathbb{R}^{SA}.
\end{equation}
By \eqref{eq:dualnormequivalence}, we know that there exists $\frac{1}{\rho_2}\leq \alpha \leq \frac{1}{ \rho_1}$ such that
\begin{equation} \label{eq:step2}
    \|  \nabla M_{\overline{E}}(Q) \|_{*,\mathcal{H},\overline{E}} \leq \alpha   \|  \nabla M_{\overline{E}}(Q) \|_{*,s,\overline{E}} 
\end{equation}
Thus, combining \eqref{eq:step1} and \eqref{eq:step2} would give:
\begin{equation} 
     \| \nabla M_{\overline{E}}(Q) \|_{*,\mathcal{H},\overline{E}} \leq  \alpha\big(\| \nabla M_{\overline{E}}(0) \|_{*,s,\overline{E}} +
   L\| Q \|_{s,\overline{E}}\big),
  \quad
  \forall\,Q\in\mathbb{R}^{SA}.
\end{equation}
By \eqref{eq:normequivalence}, $\| Q \|_{s,\overline{E}} \leq \rho_2\| Q \|_{\mathcal{H},\overline{E}}$, thus we have:
\begin{equation} 
     \| \nabla M_{\overline{E}}(Q) \|_{*,\mathcal{H},\overline{E}} \leq  \alpha\big(\| \nabla M_{\overline{E}}(0) \|_{*,s,\overline{E}} +
   L\rho_2\| Q \|_{\mathcal{H},\overline{E}}\big),
  \quad
  \forall\,Q\in\mathbb{R}^{SA}.
\end{equation}
Hence, combining the above with \eqref{eq:inner_product_bound}, there exist some 
\begin{equation} \label{eq:G_value}
G := \frac{1}{\rho_1}\max\Big\{L\rho_2,\ \|\nabla M_{\overline{E}}(0)\|_{*,s,\overline{E}}\Big\}=
\mathcal{O}\big(\frac{1}{\rho_1} \max\{L\rho_2, \| \nabla M_{\overline{E}}(0) \|_{*,s,\overline{E}}\} \big)
\end{equation}
such that
\begin{equation}  \label{eq:bias-inner-product-identity}
  \mathbb{E}
  \Bigl[\langle 
    \nabla M_{\overline{E}}(Q_t - Q^*),\,w^t
  \rangle|\mathcal{F}^t\Bigr]=\bigl\langle 
    \nabla M_{\overline{E}}(Q_t - Q^*),
    \mathbb{E}[w^t|\mathcal{F}^t]
  \bigr\rangle.
\end{equation}
Using \eqref{eq:bias-linear-growth-final}, we obtain
\begin{align}
\mathbb{E}
\Bigl[
\langle 
\nabla M_{\overline{E}}(Q_t - Q^*),\,w^t
\rangle
\,|\,\mathcal{F}^t
\Bigr]
&=
\left\langle 
\nabla M_{\overline{E}}(Q_t - Q^*),
\mathbb{E}[w^t|\mathcal{F}^t]
\right\rangle
\nonumber\\
&\leq
G\,\varepsilon_N^U
\bigl(1+D_t\bigr)^2 .
\end{align}
By \eqref{eq:M2span}, $D_t^2\leq 2c_uM_{\overline E}(Q_t-Q^*)$. Hence there exists a constant
$C_{\rm drift}<\infty$ such that
\begin{equation}
\label{eq:Gepsilonbias}
\mathbb{E}
\Bigl[
\langle 
\nabla M_{\overline{E}}(Q_t - Q^*),\,w^t
\rangle
\,|\,\mathcal{F}^t
\Bigr]
\leq
C_{\rm drift}\varepsilon_N^U
\left(
1+M_{\overline E}(Q_t-Q^*)
\right).
\end{equation}
Combining \eqref{eq:Gepsilonbias} with \eqref{eq:nablaMt+1decomposition} we obtain
\begin{align}
\label{eq:nablaMt+1decomposition2}
\mathbb{E}\!\left[
\left\langle
\nabla M_{\overline E}(Q_t-Q^*), Q_{t+1}-Q_t
\right\rangle
\,\middle|\,\mathcal F^t
\right]
&\leq
\eta_t
\left\langle
\nabla M_{\overline E}(Q_t-Q^*),
\mathbf H(Q_t)-Q_t
\right\rangle
\nonumber\\
&\quad+
\eta_t C_{\rm drift}\varepsilon_N^U
\left(
1+M_{\overline E}(Q_t-Q^*)
\right).
\end{align}
To bound the first term in the RHS of \eqref{eq:nablaMt+1decomposition2}, set $X_t:=Q_t-Q^*$ and $Y_t:=\mathbf H(Q_t)-Q^*$. Since $Q^*$ solves the optimal Bellman equation up to an element of $\overline E$, we have $\mathbf H(Q^*)-Q^*\in\overline E$. Hence, because $\|\cdot\|_{\mathcal H,\overline E}$ quotients out $\overline E$, Theorem~\ref{thm:Q-learningcontraction} gives
\begin{equation}
\label{eq:Yt-contraction-H}
\|Y_t\|_{\mathcal H,\overline E}
=
\|\mathbf H(Q_t)-\mathbf H(Q^*)\|_{\mathcal H,\overline E}
\leq
\gamma_H\|X_t\|_{\mathcal H,\overline E}.
\end{equation}
By convexity of $M_{\overline E}$ and by \eqref{eq:M2span},
\begin{align}
\label{eq:nablaMtdecomposition}
\left\langle\nabla M_{\overline E}(X_t),\mathbf H(Q_t)-Q_t\right\rangle
&=
\left\langle\nabla M_{\overline E}(X_t),Y_t-X_t\right\rangle \nonumber\\
&\leq
M_{\overline E}(Y_t)-M_{\overline E}(X_t) \nonumber\\
&\leq
\frac{1}{2c_l}\|Y_t\|_{\mathcal H,\overline E}^2-M_{\overline E}(X_t) \nonumber\\
&\leq
\frac{\gamma_H^2}{2c_l}\|X_t\|_{\mathcal H,\overline E}^2-M_{\overline E}(X_t) \nonumber\\
&\leq
\left(\gamma_H^2\frac{c_u}{c_l}-1\right)M_{\overline E}(X_t) \nonumber\\
&=
-(1-\vartheta_H)(1+\vartheta_H)M_{\overline E}(X_t) \nonumber\\
&\leq
-2\alpha_2M_{\overline E}(X_t).
\end{align}
The last inequality uses $\vartheta_H<1$ and the definition $2\alpha_2=1-\vartheta_H$ in \eqref{eq:alpha2-positive-choice}. Combining \eqref{eq:nablaMtdecomposition}, \eqref{eq:nablaMt+1decomposition2}, and Lemma~\ref{lem:zhanglemma6} with \eqref{eq:Mt+1decomposition}, we arrive as follows:
\begin{align}
\label{eq:modified_proposition3}
\mathbb{E}\!\left[
M_{\overline{E}}(Q_{t+1} - Q^*)
\,\middle|\,
\mathcal{F}^t
\right]
&\leq
\left(
1-2\alpha_2\eta_t+\alpha_3^U\eta_t^2
+
C_{\rm drift}\varepsilon_N^U\eta_t
\right)
M_{\overline{E}}(Q_t-Q^*) \nonumber\\
&\quad+
A_0^U\alpha_4\eta_t^2
+
C_{\rm drift}\varepsilon_N^U\eta_t .
\end{align}
Here $\alpha_2$ is the positive constant from \eqref{eq:alpha2-positive-choice}, and
\begin{align}
\ell_s&\coloneqq \rho_2^2, &
\alpha_3^U&\coloneqq (8+2B_0^U)c_u\ell_sL, &
\alpha_4&\coloneqq \ell_sL .
\end{align}
Choose
\begin{align}
\eta_t&\coloneqq \frac{2}{\alpha_2(t+K_U)}, &
K_U&\coloneqq \max\left\{\frac{4\alpha_3^U}{\alpha_2^2},3\right\}.
\end{align}
Then $\alpha_3^U\eta_t^2\leq \frac{1}{t+K_U}=\frac{\alpha_2}{2}\eta_t$.
For TV and Wasserstein uncertainty, choose $N_{\max}$ large enough so that $C_{\rm drift}\varepsilon_N^U\leq \frac{\alpha_2}{2}$. This condition is used only for the drift inequality. For a target accuracy $\epsilon$, we also choose $N_{\max}$ so that the final bias term in \eqref{eq:biasedSAformal} is at most a constant multiple of $\epsilon^2$. For contamination uncertainty, both conditions are automatic because $\varepsilon_N^{\rm Cont}=0$. Therefore $C_{\rm drift}\varepsilon_N^U\eta_t\leq \frac{1}{t+K_U}$, and \eqref{eq:modified_proposition3} implies
\begin{equation}
\label{eq:linear-growth-SA-recursion}
\mathbb{E}\!\left[
M_{\overline{E}}(Q_{t+1}-Q^*)
\,\middle|\,
\mathcal{F}^t
\right]
\leq
\left(1-\frac{2}{t+K_U}\right)M_{\overline{E}}(Q_t-Q^*)
+
A_0^U\alpha_4\eta_t^2
+
C_{\rm drift}\varepsilon_N^U\eta_t .
\end{equation}

Taking total expectation in \eqref{eq:linear-growth-SA-recursion} and unrolling the resulting recursion gives
\begin{equation}
\label{eq:recursion-unroll-alpha2-fixed}
\mathbb E[M_{\overline E}(Q_T-Q^*)]
\leq
\Gamma_T M_{\overline E}(Q_0-Q^*)
+
\Gamma_T\sum_{t=0}^{T-1}
\frac{A_0^U\alpha_4\eta_t^2+C_{\rm drift}\varepsilon_N^U\eta_t}{\Gamma_{t+1}},
\end{equation}
where $\Gamma_t\coloneqq \prod_{i=0}^{t-1}\left(1-\frac{2}{i+K_U}\right)$. Since $K_U\geq3$,
\begin{equation}
\label{eq:gamma-product-alpha2-fixed}
\Gamma_t=
\frac{(K_U-2)(K_U-1)}{(t+K_U-2)(t+K_U-1)}.
\end{equation}
Using $\eta_t=\frac{2}{\alpha_2(t+K_U)}$, a direct calculation gives
\begin{equation}
\label{eq:weighted-sums-alpha2-fixed}
\Gamma_T\sum_{t=0}^{T-1}\frac{\eta_t^2}{\Gamma_{t+1}}
\leq
\frac{4}{\alpha_2^2(T+K_U-1)},
\qquad
\Gamma_T\sum_{t=0}^{T-1}\frac{\eta_t}{\Gamma_{t+1}}
\leq
\frac{2}{\alpha_2}.
\end{equation}
Combining \eqref{eq:recursion-unroll-alpha2-fixed}--\eqref{eq:weighted-sums-alpha2-fixed} with \eqref{eq:M2span} yields
\begin{equation}
\label{eq:biasedSAformal}
\mathbb{E}\!\left[
\|Q_T-Q^*\|_{\mathcal H,\overline E}^2
\right]
\leq
\frac{K_U^2c_u}{(T+K_U-2)^2c_l}
\|Q_0-Q^*\|_{\mathcal H,\overline E}^2
+
\frac{8A_0^U\alpha_4c_u}{(T+K_U-1)\alpha_2^2}
+
\frac{4c_uC_{\rm drift}\varepsilon_N^U}{\alpha_2}.
\end{equation}

\begin{theorem}[Formal version of Theorem~\ref{thm:QleariningComplexity}]
\label{thm:formalVresult}
Let $Q_t$ be generated by Algorithm~\ref{alg:Qlearning}, and let $Q^*$ be the anchored robust optimal $Q$-function satisfying \eqref{eq:optimal bellman} with $Q^*(s_0,a_0)=0$. Suppose Assumptions~\ref{ass:Qlearning} and~\ref{assump:B} hold, and suppose the radius conditions of Lemmas~\ref{lem:contamination_radius_both}--\ref{lem:wass_radius_both} hold.

For each uncertainty model $U\in\{\mathrm{Cont},\mathrm{TV},\mathrm{Wass}\}$, let $A_0^U$, $B_0^U$, and $\varepsilon_N^U$ be the constants in \eqref{eq:noise-linear-growth-final} and \eqref{eq:bias-linear-growth-final}. Choose the Moreau parameter as in \eqref{eq:moreau-mu-choice-qlearning}, and define $\vartheta_H$ and $\alpha_2$ as in \eqref{eq:alpha2-positive-choice}. In particular, $\alpha_2\geq \frac{1-\gamma_H}{4}>0$. Define
\begin{align}
\ell_s&\coloneqq \rho_2^2, &
\alpha_3^U&\coloneqq (8+2B_0^U)c_u\ell_sL, &
\alpha_4&\coloneqq \ell_sL,
\end{align}
and
\begin{equation}
K_U\coloneqq \max\left\{\frac{4\alpha_3^U}{\alpha_2^2},3\right\}.
\end{equation}
We choose $\eta_t=\frac{2}{\alpha_2(t+K_U)}$. For TV and Wasserstein uncertainty, first choose $N_{\max}$ large enough that $C_{\rm drift}\varepsilon_N^U\leq \frac{\alpha_2}{2}$. For a target accuracy $\epsilon$, choose it large enough also that $\frac{4c_uC_{\rm drift}\varepsilon_N^U}{\alpha_2c_H^2}\leq \epsilon^2/3$. For contamination uncertainty these conditions are automatic because $\varepsilon_N^{\rm Cont}=0$. Then we have
\begin{align}
\mathbb E\!\left[\|Q_T-Q^*\|_\infty^2\right]
&\leq
\frac{K_U^2c_uC_H^2}{(T+K_U-2)^2c_lc_H^2}
\|Q_0-Q^*\|_\infty^2
+
\frac{8A_0^U\alpha_4c_u}{(T+K_U-1)\alpha_2^2c_H^2}
\nonumber\\
&\quad+
\frac{4c_uC_{\rm drift}\varepsilon_N^U}{\alpha_2c_H^2}.
\label{eq:formal-qlearning-bound-clean}
\end{align}
Consequently, choosing $T=\Theta(\epsilon^{-2})$ for contamination uncertainty and choosing $T=\widetilde{\Theta}(\epsilon^{-2})$ with $N_{\max}=\Theta(\log(1/\epsilon))$ for TV and Wasserstein uncertainty yields $\mathbb E\!\left[\|Q_T-Q^*\|_\infty^2\right]\leq \epsilon^2$ after increasing the hidden constants if necessary.
\end{theorem}

\begin{proof}
The bound follows from \eqref{eq:biasedSAformal} and the norm-translation Lemma~\ref{lem:normtranslations}. The stated choices of $T$ and $N_{\max}$ use $\varepsilon_N^{\rm Cont}=0$ and $\varepsilon_N^{\rm TV},\varepsilon_N^{\rm Wass}=O(2^{-N_{\max}/2})$.
\end{proof}

\subsubsection{Sample Complexities for robust $Q$-Learning }

To derive the sample complexities, we use the above results and set $\mathbb{E}\Bigl[\|Q_T - Q^*\|^2_{\infty}\Bigr] \leq \epsilon^2$. Let $C_Q^U$ denote the finite problem-dependent constant obtained from \eqref{eq:formal-qlearning-bound-clean} after absorbing $\alpha_2,\alpha_4,c_l,c_u,c_H,C_H,K_U,C_{\rm drift}$ and the fixed initialization. This constant may depend on the contraction and Moreau-smoothing construction, but it is independent of $\epsilon$, $T$, $N_{\max}$, and the realized policy sequence.

For contamination uncertainty, $\varepsilon_N^{\rm Cont}=0$ and $A_0^{\rm Cont}=\cO(t_{\mathrm{mix}}^2)$. Hence it is enough to take $T=\cO(C_Q^{\rm Cont}t_{\mathrm{mix}}^2\epsilon^{-2})$, resulting in $\cO(SA C_Q^{\rm Cont}t_{\mathrm{mix}}^2\epsilon^{-2})$ nominal transition samples. For TV and Wasserstein uncertainty, take $N_{\max}=\cO(\log(1/\epsilon))$ large enough so that the bias term in \eqref{eq:formal-qlearning-bound-clean} is at most $\epsilon^2$, and use $A_0^U=\cO(t_{\mathrm{mix}}^2N_{\max})$. It is then enough to take $T=\cO(C_Q^{U}t_{\mathrm{mix}}^2N_{\max}\epsilon^{-2})$. Since one call of Algorithm~\ref{alg:sampling} has expected cost $\cO(N_{\max})$ by Lemma~\ref{lem:xuthm3-5}, the resulting sample complexity is $\tilde{\cO}(SA C_Q^{U}t_{\mathrm{mix}}^2\epsilon^{-2})$ for $U\in\{\mathrm{TV},\mathrm{Wass}\}$.

\section{Missing Proofs for Section \ref{actor-critic}} \label{actor-critic-proof}

\subsection{Proof of Proposition \ref{thm:QfromV}}
Let $V^\pi$ be the anchored solution of the robust Bellman equation \eqref{eq:bellman}. For each $(s,a)$, compactness of $\cp_s^a$ and continuity of $p\mapsto p^\top V^\pi$ imply that the set $\mathcal M_{s,a}(V^\pi)$ in \eqref{eq:rowwise-minimizer-set} is nonempty. By rectangularity, choose $\kp_V^\pi\in\cp$ satisfying \eqref{eq:rowwise-selector} for all $(s,a)$.

By the definition of $\kp_V^\pi$, for every $(s,a)$,
\begin{equation}
\sigma_{\cp_s^a}(V^\pi)=\big(\kp_V^\pi(\cdot|s,a)\big)^\top V^\pi .
\label{eq:selector-achieves-sigma}
\end{equation}
Substituting \eqref{eq:selector-achieves-sigma} into the robust Bellman equation gives, for every $s$,
\begin{align}
V^\pi(s)
&=\sum_a \pi(a|s)\left(r(s,a)-g^\pi_\cp+\big(\kp_V^\pi(\cdot|s,a)\big)^\top V^\pi\right) \nonumber\\
&=\sum_a\pi(a|s)r(s,a)-g^\pi_\cp+\sum_{s'}\kp_V^{\pi,\pi}(s,s')V^\pi(s'),
\label{eq:poisson-under-selector}
\end{align}
where $\kp_V^{\pi,\pi}$ is the state transition matrix induced by policy $\pi$ and kernel $\kp_V^\pi$.

Under the radius restrictions used in this section, $\kp_V^{\pi,\pi}$ is irreducible and aperiodic. Let $d^\pi_{\kp_V^\pi}$ be its stationary distribution. Multiplying \eqref{eq:poisson-under-selector} by $d^\pi_{\kp_V^\pi}$ and summing over $s$ cancels the value terms, since $(d^\pi_{\kp_V^\pi})^\top \kp_V^{\pi,\pi}=(d^\pi_{\kp_V^\pi})^\top$. Hence
\begin{equation}
g^\pi_\cp=\sum_s d^\pi_{\kp_V^\pi}(s)\sum_a\pi(a|s)r(s,a)=g^\pi_{\kp_V^\pi}.
\end{equation}
Thus $\kp_V^\pi\in\Omega_g^\pi$.

Now define $Q^\pi$ by \eqref{eq:robustQdef}. The identity \eqref{eq:Qformula} is exactly the first line of \eqref{eq:robustQdef}. Moreover, using the Bellman equation,
\begin{align}
\sum_a\pi(a|s)Q^\pi(s,a)
&=\sum_a\pi(a|s)\left(r(s,a)-g^\pi_\cp+\sigma_{\cp_s^a}(V^\pi)\right) \nonumber\\
&=\mathbf T_g^\pi(V^\pi)(s)
=V^\pi(s).
\end{align}
Finally, since $V^\pi$ solves the Poisson equation \eqref{eq:poisson-under-selector} for policy $\pi$ and kernel $\kp_V^\pi$, the usual one-step decomposition of the relative value gives
\begin{align}
\mE_{\pi,\kp_V^\pi}\bigg[\sum_{t=0}^\infty(r_t-g^\pi_\cp)\,\bigg|\,S_0=s,A_0=a\bigg]
&=r(s,a)-g^\pi_\cp+\sum_{s'}\kp_V^\pi(s'|s,a)V^\pi(s') \nonumber\\
&=r(s,a)-g^\pi_\cp+\sigma_{\cp_s^a}(V^\pi)
=Q^\pi(s,a),
\end{align}
where the second equality uses \eqref{eq:selector-achieves-sigma}. This proves the proposition.

\subsection{Proof of Theorem \ref{thm:pg-bound}}
\label{acproof}
Regarding the detailed radius restrictions for the uncertainty sets, we use Corollary~\ref{cor:AC-radius-to-JSR-primitive}. For ease of notation, we write $g_\mathcal P^\pi$ as $g^\pi$ in this proof.

The robust performance-difference inequalities used below are the reward-side version of the inequalities in \citet{sunpolicy2024}. Their paper states the result for robust average cost. Applying it to $c=-r$ changes the robust cost objective into $-g^\pi$ and changes the cost $Q$-function into $-Q^\pi$. Therefore, for any policies $\pi$ and $\pi'$, and for rowwise Bellman-minimizing worst-case kernels $P_\pi\in\Omega_g^\pi$ and $P_{\pi'}\in\Omega_g^{\pi'}$, we have
\begin{align}
g^{\pi'}-g^\pi
&\geq
\mathbb E_{s\sim d^{\pi'}_{P_{\pi'}}}
\left[
\left\langle Q^\pi(s,\cdot),\pi'(\cdot|s)-\pi(\cdot|s)\right\rangle
\right],
\label{eq:robust-pdl-lower}\\
g^{\pi^\star}-g^\pi
&\leq
\mathbb E_{s\sim d^{\pi^\star}_{P_\pi}}
\left[
\left\langle Q^\pi(s,\cdot),\pi^\star(\cdot|s)-\pi(\cdot|s)\right\rangle
\right].
\label{eq:robust-pdl-upper}
\end{align}

Recall that Algorithm~\ref{alg:AC} uses the update
\begin{align}
\label{eq:update-app}
    \pi_{k+1}(\cdot|s)=\arg\max_{p\in\Delta(\mca)}\{\zeta_k \langle \hat{Q}^{\pi_k}(s,\cdot),p \rangle - \|p - \pi_k(\cdot |s)\|^2\}.
\end{align}
For a fixed state $s$, the objective in \eqref{eq:update-app} is concave in $p$ over the simplex. Hence the first-order optimality condition implies that for any reference distribution $p(\cdot|s)\in\Delta(\mca)$,
\begin{align}
\label{eq:1}
&\zeta_k \left\langle \hat{Q}^{\pi_k}(s, \cdot), p(\cdot|s)-\pi_{k+1}(\cdot|s) \right\rangle
+\left\|\pi_{k+1}(\cdot|s)-\pi_k(\cdot|s)\right\|^2 \nonumber\\
&\quad\leq \left\|p(\cdot|s)-\pi_k(\cdot|s)\right\|^2-
\left\|p(\cdot|s)-\pi_{k+1}(\cdot|s)\right\|^2 .
\end{align}
Taking $p=\pi_k$ in \eqref{eq:1} gives
\begin{align}
\label{eq:105}
\zeta_k \left\langle \hat{Q}^{\pi_k}(s,\cdot),\pi_k(\cdot|s)-\pi_{k+1}(\cdot|s)\right\rangle
\leq -2\left\|\pi_{k+1}(\cdot|s)-\pi_k(\cdot|s)\right\|^2\leq 0 .
\end{align}
Taking $p=\pi^\star$ in \eqref{eq:1} gives
\begin{align}
\label{eq:107}
\left\langle \hat{Q}^{\pi_k}(s,\cdot),\pi^\star(\cdot|s)-\pi_{k+1}(\cdot|s)\right\rangle
&\leq \frac{1}{\zeta_k}\Big[\left\|\pi^\star(\cdot|s)-\pi_k(\cdot|s)\right\|^2 \nonumber\\
&\quad-\left\|\pi^\star(\cdot|s)-\pi_{k+1}(\cdot|s)\right\|^2\Big].
\end{align}

Applying \eqref{eq:robust-pdl-lower} with $(\pi,\pi')=(\pi_k,\pi_{k+1})$ gives
\begin{align}
g^{\pi_k}-g^{\pi_{k+1}}
&\leq
\mathbb E_{s\sim d^{\pi_{k+1}}_{P_{\pi_{k+1}}}}
\left[
\left\langle Q^{\pi_k}(s,\cdot),\pi_k(\cdot|s)-\pi_{k+1}(\cdot|s)\right\rangle
\right] \nonumber\\
&=
\mathbb E_{s\sim d^{\pi_{k+1}}_{P_{\pi_{k+1}}}}
\left[
\left\langle \hat Q^{\pi_k}(s,\cdot),\pi_k(\cdot|s)-\pi_{k+1}(\cdot|s)\right\rangle
\right] \nonumber\\
&\quad+
\mathbb E_{s\sim d^{\pi_{k+1}}_{P_{\pi_{k+1}}}}
\left[
\left\langle Q^{\pi_k}(s,\cdot)-\hat Q^{\pi_k}(s,\cdot),\pi_k(\cdot|s)-\pi_{k+1}(\cdot|s)\right\rangle
\right].
\label{eq:actor-lower-applied}
\end{align}
The first inner product on the right side of \eqref{eq:actor-lower-applied} is nonpositive for every state by \eqref{eq:105}. From the definition of $M$ in \eqref{eq:distribution-mismatch-M},
\begin{equation}
\frac{d^{\pi^\star}_{P_{\pi_k}}(s)}{d^{\pi_{k+1}}_{P_{\pi_{k+1}}}(s)}\leq M,
\qquad \forall s\in\mcs .
\end{equation}
Since multiplying a nonpositive quantity by a larger nonnegative weight makes it smaller, we have
\begin{align}
g^{\pi_k}-g^{\pi_{k+1}}
&\leq
\frac{1}{M}
\mathbb E_{s\sim d^{\pi^\star}_{P_{\pi_k}}}
\left[
\left\langle \hat Q^{\pi_k}(s,\cdot),\pi_k(\cdot|s)-\pi_{k+1}(\cdot|s)\right\rangle
\right] \nonumber\\
&\quad+
\mathbb E_{s\sim d^{\pi_{k+1}}_{P_{\pi_{k+1}}}}
\left[
\left\langle Q^{\pi_k}(s,\cdot)-\hat Q^{\pi_k}(s,\cdot),\pi_k(\cdot|s)-\pi_{k+1}(\cdot|s)\right\rangle
\right].
\label{eq:108}
\end{align}
Applying \eqref{eq:robust-pdl-upper} with $\pi=\pi_k$ gives
\begin{align}
g^{\pi^\star}-g^{\pi_k}
&\leq
\mathbb E_{s\sim d^{\pi^\star}_{P_{\pi_k}}}
\left[
\left\langle \hat Q^{\pi_k}(s,\cdot),\pi^\star(\cdot|s)-\pi_k(\cdot|s)\right\rangle
\right] \nonumber\\
&\quad+
\mathbb E_{s\sim d^{\pi^\star}_{P_{\pi_k}}}
\left[
\left\langle Q^{\pi_k}(s,\cdot)-\hat Q^{\pi_k}(s,\cdot),\pi^\star(\cdot|s)-\pi_k(\cdot|s)\right\rangle
\right].
\label{eq:actor-upper-applied}
\end{align}
Multiplying \eqref{eq:108} by $M$ and adding \eqref{eq:actor-upper-applied}, we obtain
\begin{align}
&\left(g^{\pi^\star}-g^{\pi_k}\right)+M\left(g^{\pi_k}-g^{\pi_{k+1}}\right) \nonumber\\
&\leq
\mathbb E_{s\sim d^{\pi^\star}_{P_{\pi_k}}}
\left[
\left\langle \hat Q^{\pi_k}(s,\cdot),\pi^\star(\cdot|s)-\pi_{k+1}(\cdot|s)\right\rangle
\right] \nonumber\\
&\quad+
\mathbb E_{s\sim d^{\pi^\star}_{P_{\pi_k}}}
\left[
\left\langle Q^{\pi_k}(s,\cdot)-\hat Q^{\pi_k}(s,\cdot),\pi^\star(\cdot|s)-\pi_k(\cdot|s)\right\rangle
\right] \nonumber\\
&\quad+M\mathbb E_{s\sim d^{\pi_{k+1}}_{P_{\pi_{k+1}}}}
\left[
\left\langle Q^{\pi_k}(s,\cdot)-\hat Q^{\pi_k}(s,\cdot),\pi_k(\cdot|s)-\pi_{k+1}(\cdot|s)\right\rangle
\right].
\label{eq:actor-combined-before-error}
\end{align}
For any two distributions $u,v\in\Delta(\mca)$, $\|u-v\|_1\leq 2$. Therefore H\"older's inequality gives
\begin{equation}
\left|\left\langle Q^{\pi_k}(s,\cdot)-\hat Q^{\pi_k}(s,\cdot),u-v\right\rangle\right|
\leq 2\|Q^{\pi_k}-\hat Q^{\pi_k}\|_\infty .
\end{equation}
Using this bound in the two error terms of \eqref{eq:actor-combined-before-error}, and then using \eqref{eq:107}, yields
\begin{align}
&\left(g^{\pi^\star}-g^{\pi_k}\right)+M\left(g^{\pi_k}-g^{\pi_{k+1}}\right) \nonumber\\
&\leq
\frac{1}{\zeta_k}
\mathbb E_{s\sim d^{\pi^\star}_{P_{\pi_k}}}
\left[
\left\|\pi^\star(\cdot|s)-\pi_k(\cdot|s)\right\|^2-
\left\|\pi^\star(\cdot|s)-\pi_{k+1}(\cdot|s)\right\|^2
\right] \nonumber\\
&\quad+2(M+1)\|Q^{\pi_k}-\hat Q^{\pi_k}\|_\infty .
\label{eq:actor-combined-after-error}
\end{align}
The left side can be rewritten as
\begin{equation}
\left(g^{\pi^\star}-g^{\pi_k}\right)+M\left(g^{\pi_k}-g^{\pi_{k+1}}\right)
=M\left(g^{\pi^\star}-g^{\pi_{k+1}}\right)-(M-1)\left(g^{\pi^\star}-g^{\pi_k}\right).
\label{eq:109}
\end{equation}
Combining \eqref{eq:actor-combined-after-error} and \eqref{eq:109} gives the samplewise recursion
\begin{align}
g^{\pi^\star}-g^{\pi_{k+1}}
&\leq \left(1-\frac{1}{M}\right)\left(g^{\pi^\star}-g^{\pi_k}\right)
+\frac{1}{M\zeta_k}
\mathbb E_{s\sim d^{\pi^\star}_{P_{\pi_k}}}
\left[
\left\|\pi^\star(\cdot|s)-\pi_k(\cdot|s)\right\|^2\right. \nonumber\\
&\quad\left.-\left\|\pi^\star(\cdot|s)-\pi_{k+1}(\cdot|s)\right\|^2
\right]
+2\left(1+\frac{1}{M}\right)\|Q^{\pi_k}-\hat Q^{\pi_k}\|_\infty \nonumber\\
&\leq \left(1-\frac{1}{M}\right)\left(g^{\pi^\star}-g^{\pi_k}\right)
+\frac{2}{M\zeta_k}
+2\left(1+\frac{1}{M}\right)\|Q^{\pi_k}-\hat Q^{\pi_k}\|_\infty .
\label{eq:actor-samplewise-recursion}
\end{align}
The last inequality uses $\|u-v\|^2\leq 2$ for probability vectors under the Euclidean norm.

Let $\mathcal F_k$ be the history before the $k$-th critic call. The policy $\pi_k$ is $\mathcal F_k$-measurable, and Algorithm~\ref{alg:Qsampling} uses fresh nominal simulator samples. By the assumed conditional critic guarantee,
\begin{equation}
\mathbb E\left[\|Q^{\pi_k}-\hat Q^{\pi_k}\|_\infty\mid\mathcal F_k\right]\leq \epsilon.
\end{equation}
Taking conditional expectation in \eqref{eq:actor-samplewise-recursion} and then total expectation gives
\begin{align}
\mathbb E[g^{\pi^\star}-g^{\pi_{k+1}}]
&\leq \left(1-\frac{1}{M}\right)\mathbb E[g^{\pi^\star}-g^{\pi_k}]
+\frac{2}{M\zeta_k}
+2\left(1+\frac{1}{M}\right)\epsilon .
\label{eq:actor-expected-recursion}
\end{align}
Unrolling \eqref{eq:actor-expected-recursion} gives
\begin{align}
\mathbb E[g^{\pi^\star}-g^{\pi_K}]
&\leq \left(1-\frac{1}{M}\right)^K\mathbb E[g^{\pi^\star}-g^{\pi_0}] \nonumber\\
&\quad+\sum_{k=0}^{K-1}\left(1-\frac{1}{M}\right)^{K-1-k}
\left[\frac{2}{M\zeta_k}+2\left(1+\frac{1}{M}\right)\epsilon\right].
\label{eq:actor-unrolled}
\end{align}
Since $\zeta_k\geq \zeta_{k-1}(1-1/M)^{-1}$, we have $\zeta_k\geq \zeta_0(1-1/M)^{-k}$ and hence $1/\zeta_k\leq (1/\zeta_0)(1-1/M)^k$. Substituting this into \eqref{eq:actor-unrolled} yields
\begin{align}
\mathbb E[g^{\pi^\star}-g^{\pi_K}]
&\leq \left(1-\frac{1}{M}\right)^K\mathbb E[g^{\pi^\star}-g^{\pi_0}]
+\frac{2K}{M\zeta_0}\left(1-\frac{1}{M}\right)^{K-1}
+2(M+1)\epsilon .
\label{eq:actor-final-bound}
\end{align}
Choosing the hidden constant in $K=\Theta(\log(1/\epsilon))$ large enough makes the first two terms in \eqref{eq:actor-final-bound} at most a constant multiple of $\epsilon$. Therefore $\mathbb E[g^{\pi^\star}-g^{\pi_K}]\leq \mathcal O(\epsilon)$.

\begin{remark}
\label{rem:fixed-zeta-appendix}
We note that the geometrically increasing actor parameter in Theorem~\ref{thm:pg-bound} is not essential for the stated sample complexity. Starting from the one-step recursion \eqref{eq:actor-expected-recursion}, one may instead choose a fixed actor parameter $\zeta_k\equiv \zeta$. If $\mathbb E[\|\hat Q^{\pi_k}-Q^{\pi_k}\|_\infty\mid\mathcal F_k]\leq \delta$ for every generated policy, then unrolling the recursion gives
\begin{equation}
\mathbb E[g^{\pi^\star}-g^{\pi_K}]
\leq
\left(1-\frac1M\right)^K
\mathbb E[g^{\pi^\star}-g^{\pi_0}]
+\frac2\zeta
+2(M+1)\delta .
\end{equation}
Thus, for a prescribed target accuracy $\epsilon$, taking $\zeta=\Theta(\epsilon^{-1})$, $\delta=\Theta(\epsilon)$, and $K=\Theta(\log(1/\epsilon))$ again yields $\mathbb E[g^{\pi^\star}-g^{\pi_K}]\leq \mathcal O(\epsilon)$.

Since Theorem~\ref{thm:Qestimation} achieves $\delta=\Theta(\epsilon)$ with $\widetilde O(|\mathcal S||\mathcal A|\epsilon^{-2})$ samples under contamination uncertainty and $\widetilde O((|\mathcal S||\mathcal A|)^2\epsilon^{-2})$ samples under TV or Wasserstein uncertainty, the same uncertainty-set-dependent sample complexities apply to the actor-critic method. Here the actor parameter is fixed across actor iterations but depends on the target accuracy; an accuracy-independent constant $\zeta=O(1)$ would leave an $O(1/\zeta)$ residual term under this proof.
\end{remark}

\subsection{Proof of Theorem \ref{thm:Qestimation}} \label{Qestimationproof}
We first show the contraction property of the Bellman operator for the policy evaluation in the following lemma, different from the semi-norm used in \cite{xu2025finite}, our setting requires the contraction to hold uniformly for all $\pi \in \Pi$. We first construct the family fluctuation matrices for each fixed $\pi$ as follows: 
\begin{equation} \label{eq:spectralboundpi}
    \mathcal{F}^\pi \coloneqq \{F^\pi_\kp = \kp^\pi - E_\kp : \kp \in \cp\}
\end{equation}
where $E_\kp$ and $\kp^\pi$ are defined in \eqref{eq:spectralbound}.

To establish uniform contraction for all $\pi\in\Pi$, we need to find the radius conditions such that $\sup_{\pi\in\Pi} \hat{\rho}(\mathcal{F}^\pi)<1$. We start by characterizing the ergodicity behavior of $\kp^\pi$ for all $\pi$:

\begin{lemma}\label{lem:uniform-m-a0}
Under Assumption \ref{ass:AC}, there exist a finite integer $\bar m\in\mathbb N$ and a constant $\bar a_0\in(0,1]$ such that, for all $\pi\in\Pi$,
\begin{equation}
(\tilde \kp^\pi)^{\bar m} > 0 \quad\text{(for each entry)},\qquad
\min_{i<j}\ \sum_{s\in\mathcal S}\min\Big\{(\tilde \kp^\pi)^{\bar m}_{is},(\tilde \kp^\pi)^{\bar m}_{js}\Big\}\geq \bar a_0.
\end{equation}
Equivalently, $\tau\big((\tilde \kp^\pi)^{\bar m}\big)\le 1-\bar a_0$ for all $\pi\in\Pi$, where $\tau$ is the Dobrushin’s coefficient defined in \eqref{eq:Dobrushindef}.

\begin{proof}
For each fixed $\pi\in\Pi$, since $\tilde \kp^\pi$ is irreducible and aperiodic, there exists an integer $m_\pi\in\mathbb N$ with $(\tilde \kp^\pi)^{m_\pi}>0$ for each entry.
Define $\varepsilon_\pi(\pi') =\min_{i,j}(\tilde\kp^{\pi'})^{m_\pi}_{ij}$. Since the map $\pi'\mapsto (\tilde\kp^{\pi'})^{m_\pi}$ is continuous due to each entry being a polynomial in the entries of $\tilde\kp^{\pi'}$, $\varepsilon_\pi$ is continuous; denote $\varepsilon_\pi(\pi)=:\eta_\pi>0$. Therefore, there exists an open neighborhood $U_\pi\subset\Pi$ such that for each $s,s'\in\mcs$,
\begin{equation}
(\tilde \kp^{\pi'})^{m_\pi}(s,s') \geq \tfrac{\eta_\pi}{2}>0\qquad\forall\,\pi'\in U_\pi.
\end{equation}
For each simplex $\Delta_{|\mathcal A|}=\{x\in\mathbb{R}^{|\mathcal A|}: x\ge 0,\ \mathbf 1^\top x=1\}$, it is closed and bounded. Thus, $\Pi$ is compact and the open cover $\{U_\pi:\pi\in\Pi\}$ admits a finite subcover. We can accordingly
choose $\pi^1,\ldots,\pi^K$ with $\Pi\subset \bigcup_{k=1}^K U_{\pi^k}$.
Define the common exponent $\bar m = \max_{1\le k\le K} m_{\pi^k} <\infty.$ Then for any $\pi\in\Pi$, pick $k$ such that $\pi\in U_{\pi^k}$. Then $(\tilde \kp^\pi)^{m_{\pi^k}}\ge \eta_{\pi^k}/2 >0$ entrywise. Since all later powers of a stochastic matrix with one positive power remain positive, we thus obtain $(\tilde \kp^\pi)^{\bar m}>0$ entrywise.

Define the continuous function
\begin{equation}
f(\pi) \coloneqq \min_{i<j}\ \sum_{s\in\mathcal S}\min\Big\{(\tilde \kp^\pi)^{\bar m}_{is},(\tilde \kp^\pi)^{\bar m}_{js}\Big\}.
\end{equation}
We then have $f(\pi)>0$ for every $\pi\in\Pi$. Since $f$ is continuous and $\Pi$ is compact, the minimum $\bar a_0\coloneqq \min_{\pi\in\Pi} f(\pi)$ is attained and satisfies $\bar a_0>0$. The stated Dobrushin bound follows from the identity $\tau(P)=1-\min_{i<j}\sum_{s}\min\{P_{is},P_{js}\}$.
\end{proof}
\end{lemma}

Equipped with Lemma \ref{lem:uniform-m-a0}, we are now ready to quantify the radius restrictions in the critic settings as follows:

\begin{corollary}[Uniform $\bar m$-step contraction for the critic]
\label{cor:AC-radius-to-JSR-primitive}
Let Assumption~\ref{ass:AC} hold and let $\bar m,\bar a_0$ be given by Lemma~\ref{lem:uniform-m-a0}. Define also the uniform $\bar m$-step entrywise lower bound at the center,
\begin{equation}
\bar b_0\coloneqq \min_{\pi\in\Pi}\min_{i,s\in\mathcal S}\big((\tilde \kp^\pi)^{\bar m}\big)_{is}\in(0,1].
\end{equation}
Fix any policy $\pi\in\Pi$ and the fluctuation family $\mathcal F^\pi=\{F^\pi_\kp=\kp^\pi-E^\pi_\kp:\kp\in\mathcal P\}$. Under the following radius conditions on $\mathcal P$, there exists $\beta\in(0,1)$, independent of $\pi$, such that for every $\pi\in\Pi$ and every $\kp_1,\ldots,\kp_{\bar m}\in\cp$,
\begin{equation}\label{eq:mstep-gap}
\tau\big(\kp_{\bar m}^\pi\kp_{\bar m-1}^\pi\cdots\kp_1^\pi\big)\leq 1-\beta.
\end{equation}
Consequently,
\begin{equation}\label{eq:JSR-uniform}
\sup_{\pi\in\Pi}\hat\rho(\mathcal F^\pi)\leq (1-\beta)^{1/\bar m}<1.
\end{equation}
Moreover, for each admissible $\kp\in\cp$ and $\pi\in\Pi$, the matrix $\kp^\pi$ is irreducible and aperiodic. The radius and the resulting $\beta$ are:
\begin{itemize}\itemsep2pt
\item \textbf{Contamination uncertainty:} Let $\delta\in[0,1)$, then \eqref{eq:mstep-gap} holds with $\beta=(1-\delta)^{\bar m}\bar a_0$.
\item \textbf{TV distance uncertainty:} Let $\delta\in[0,\min\{\bar a_0/(2\bar m),\bar b_0/(2\bar m)\})$, then \eqref{eq:mstep-gap} holds with $\beta=\bar a_0-2\bar m\delta$.
\item \textbf{Wasserstein-$p$ uncertainty set:} Let $\delta\in[0,\min\{d_{\min}\bar a_0/(2\bar m),d_{\min}\bar b_0/(2\bar m)\})$, where $d_{\min}:=\min_{x\neq y}d(x,y)>0$. Then \eqref{eq:mstep-gap} holds with $\beta=\bar a_0-2\bar m\delta/d_{\min}$.
\end{itemize}

\begin{proof}
Fix $\pi\in\Pi$. For notational simplicity write $M_i=\kp_i^\pi$ and $\widetilde M=\tilde\kp^\pi$. We first prove the $\bar m$-step product bound in \eqref{eq:mstep-gap}.

For contamination uncertainty, each $M_i$ can be written as $M_i=(1-\delta)\widetilde M+\delta Q_i$, where $Q_i$ is row-stochastic. Therefore, by nonnegativity,
\begin{equation}
M_{\bar m}M_{\bar m-1}\cdots M_1\geq (1-\delta)^{\bar m}\widetilde M^{\bar m}
\end{equation}
entrywise. Hence, for any two rows $r,r'$,
\begin{equation}
\sum_{s\in\mcs}\min\{(M_{\bar m}\cdots M_1)_{rs},(M_{\bar m}\cdots M_1)_{r's}\}\geq (1-\delta)^{\bar m}\bar a_0.
\end{equation}
This proves \eqref{eq:mstep-gap} with $\beta=(1-\delta)^{\bar m}\bar a_0$.

For TV uncertainty, convexity of TV gives $TV(M_i(r,\cdot),\widetilde M(r,\cdot))\leq\delta$ for every row $r$ and every $i$. By a standard telescoping argument and the fact that TV distance is non-expansive under multiplication by a stochastic matrix, for each $1\leq\ell\leq\bar m$,
\begin{equation}
\sup_r TV\big((M_\ell\cdots M_1)(r,\cdot),\widetilde M^\ell(r,\cdot)\big)\leq \ell\delta.
\end{equation}
Using the identity $\sum_s\min\{p_s,q_s\}=1-TV(p,q)$ and the triangle inequality for TV, for any two rows $r,r'$,
\begin{align}
&\sum_{s\in\mcs}\min\{(M_{\bar m}\cdots M_1)_{rs},(M_{\bar m}\cdots M_1)_{r's}\} \nonumber\\
&\quad\geq \sum_{s\in\mcs}\min\{(\widetilde M^{\bar m})_{rs},(\widetilde M^{\bar m})_{r's}\}-2\bar m\delta \nonumber\\
&\quad\geq \bar a_0-2\bar m\delta.
\end{align}
Thus \eqref{eq:mstep-gap} holds with $\beta=\bar a_0-2\bar m\delta$ when $\delta<\bar a_0/(2\bar m)$. The same telescoping bound also gives, for every $r,s$,
\begin{equation}
(M_{\bar m}\cdots M_1)_{rs}\geq (\widetilde M^{\bar m})_{rs}-2\bar m\delta\geq \bar b_0-2\bar m\delta>0
\end{equation}
when $\delta<\bar b_0/(2\bar m)$. Applying this last display with the constant sequence $M_1=\cdots=M_{\bar m}=\kp^\pi$ shows that $(\kp^\pi)^{\bar m}>0$ for every $\kp\in\cp$ and $\pi\in\Pi$.

For Wasserstein uncertainty, the finite-state metric satisfies $TV(u,v)\leq W_p(u,v)/d_{\min}$ for all distributions $u,v$ on $\mcs$. Hence the Wasserstein case reduces to the TV case with $\delta$ replaced by $\delta/d_{\min}$, giving the stated radius condition and $\beta=\bar a_0-2\bar m\delta/d_{\min}$.

Finally, Lemma~\ref{lem:xulemA.3} applied to the family $\{\kp^\pi:\kp\in\cp\}$ gives, for each fixed $\pi$,
\begin{equation}
\hat\rho(\mathcal F^\pi)\leq \left(\sup_{\kp_1,\ldots,\kp_{\bar m}\in\cp}\tau(\kp_{\bar m}^\pi\cdots\kp_1^\pi)\right)^{1/\bar m}\leq (1-\beta)^{1/\bar m}.
\end{equation}
Taking the supremum over $\pi\in\Pi$ proves \eqref{eq:JSR-uniform}. The entrywise positivity above implies irreducibility and aperiodicity of each induced kernel.
\end{proof}
\end{corollary}

\begin{lemma}\label{lem:uniform-stationary-lower-bound}
Under Assumption~\ref{ass:AC} and the radius restrictions in Corollary~\ref{cor:AC-radius-to-JSR-primitive}, there exists $b_\star>0$ such that for every policy $\pi\in\Pi$, every admissible kernel $\kp\in\cp$, and every state $s\in\mcs$,
\begin{equation}
d^\pi_\kp(s)\geq b_\star .
\end{equation}
Consequently, the mismatch constant $M$ in \eqref{eq:distribution-mismatch-M} is finite.
\begin{proof}
By Corollary~\ref{cor:AC-radius-to-JSR-primitive}, there are $\bar m\in\mathbb N$ and $b>0$, independent of $\pi$ and $\kp$, such that
\begin{equation}
(\kp^\pi)^{\bar m}(i,j)\geq b,
\qquad \forall i,j\in\mcs .
\end{equation}
For contamination uncertainty, one can take $b=(1-\delta)^{\bar m}\bar b_0$. For TV uncertainty, one can take $b=\bar b_0-2\bar m\delta$. For Wasserstein uncertainty, one can take $b=\bar b_0-2\bar m\delta/d_{\min}$.
Let $d^\pi_\kp$ be the stationary distribution of $\kp^\pi$. Then, for any $j\in\mcs$,
\begin{align}
d^\pi_\kp(j)
&=\sum_i d^\pi_\kp(i)(\kp^\pi)^{\bar m}(i,j) \nonumber\\
&\geq \sum_i d^\pi_\kp(i)b=b .
\end{align}
Thus $b_\star=b$ works. Since every numerator in \eqref{eq:distribution-mismatch-M} is at most $1$ and every denominator is at least $b_\star$, we have $M\leq 1/b_\star<\infty$.
\end{proof}
\end{lemma}

\begin{lemma} \label{lem:robust_seminorm-contraction}
Under Assumption \ref{ass:AC}, and the radius conditions in Corollary~\ref{cor:AC-radius-to-JSR-primitive}, for any fixed policy $\pi\in\Pi$, there exists a corresponding semi-norm $\|\cdot\|_{\pi,\cp}$ with kernel $\{c \e : c \in \mathbb{R}\}$ such that the robust Bellman operator $\mathbf{T}^\pi_g$ under $\pi$ is a one-step contraction with a uniform contraction factor. Specifically, there exist $\gamma\in(0,1)$ independent of $\pi$ such that
\begin{equation} \label{eq:contractiongamma}
\|\mathbf{T}^\pi_g(V_1) - \mathbf{T}^\pi_g(V_2)\|_{\pi,\cp}
\leq\gamma \|V_1 - V_2\|_{\pi,\cp},
\forall \;  V_1,V_2\in\mathbb R^{ S},\,g\in\mathbb R.
\end{equation}
\begin{proof}
By Corollary~\ref{cor:AC-radius-to-JSR-primitive}, for every $\pi$ the fluctuation family $\mathcal F^\pi=\{F_\kp^\pi=\kp^\pi-E_\kp^\pi: \kp\in\mathcal P\}$ has joint spectral radius
\begin{equation}
\widehat\rho(\mathcal F^\pi)\leq \gamma_* \coloneqq (1-\beta)^{1/\bar m}<1,
\end{equation}
where $\gamma_*$ is independent of $\pi$. Choose $\alpha$ such that $\gamma_*<\alpha<1$. Define
\begin{equation} \label{eq:extremalnorm_pi_def}
    \|x\|_{\pi,\rm ext} \coloneqq \sup_{k \geq 0} \sup_{F_1,\ldots, F_k \in \mathcal{F}^\pi}  \alpha^{-k} \|F_k F_{k-1} \ldots  F_1 x \|_2.
\end{equation}
As in Lemma~\ref{lem:extremalnorm}, this is a norm and it satisfies
\begin{equation}
\|Fx\|_{\pi,\rm ext}\leq \alpha\|x\|_{\pi,\rm ext},
\qquad \forall F\in\mathcal F^\pi .
\label{eq:pi-ext-contraction}
\end{equation}
Choose $\xi\in(0,1-\alpha)$ and define
\begin{equation} \label{eq:robustseminorm_pi_construction}
    \|x\|_{\pi,\cp} \coloneqq \sup_{F\in\mathcal{F}^\pi}\bigl\|F x\bigr\|_{\pi,\rm ext}+\xi \inf_{c\in\mathbb R}\bigl\|x - c \e\bigr\|_{\pi,\rm ext}.
\end{equation}
This is a semi-norm with kernel $\{c\e:c\in\mathbb R\}$.

Fix $V_1,V_2\in\mathbb R^S$ and set $\Delta=V_1-V_2$. The scalar $g$ cancels in the difference $\mathbf T_g^\pi(V_1)-\mathbf T_g^\pi(V_2)$. For every $(s,a)$, the elementary inequality for differences of minima gives
\begin{equation}
\min_{p\in\cp_s^a}p^\top\Delta
\leq
\sigma_{\cp_s^a}(V_1)-\sigma_{\cp_s^a}(V_2)
\leq
\max_{p\in\cp_s^a}p^\top\Delta .
\label{eq:sigma-difference-sandwich}
\end{equation}
Since $\cp_s^a$ is compact and convex, and since $p\mapsto p^\top\Delta$ is linear, there exists $p_{s,a}\in\cp_s^a$ such that
\begin{equation}
p_{s,a}^\top\Delta=\sigma_{\cp_s^a}(V_1)-\sigma_{\cp_s^a}(V_2).
\label{eq:interpolated-row-policy-eval}
\end{equation}
Indeed, choose minimizer and maximizer rows in \eqref{eq:sigma-difference-sandwich}, and take the convex combination that produces the middle value. By rectangularity, the rows $p_{s,a}$ can be assembled into a kernel $\kp_\Delta\in\cp$. Therefore,
\begin{align}
\mathbf T_g^\pi(V_1)(s)-\mathbf T_g^\pi(V_2)(s)
&=\sum_a\pi(a|s)\left(\sigma_{\cp_s^a}(V_1)-\sigma_{\cp_s^a}(V_2)\right) \nonumber\\
&=\sum_a\pi(a|s)p_{s,a}^\top\Delta
=(\kp_\Delta^\pi\Delta)(s).
\label{eq:policy-eval-induced-kernel-representation}
\end{align}
Let $E_\Delta$ be the rank-one matrix whose rows equal the stationary distribution of $\kp_\Delta^\pi$, and let $F_\Delta=\kp_\Delta^\pi-E_\Delta\in\mathcal F^\pi$. Since $E_\Delta\Delta$ is a constant vector and every $F\in\mathcal F^\pi$ annihilates constant vectors,
\begin{align}
\sup_{F\in\mathcal F^\pi}\|F\kp_\Delta^\pi\Delta\|_{\pi,\rm ext}
&=\sup_{F\in\mathcal F^\pi}\|F F_\Delta\Delta\|_{\pi,\rm ext} \nonumber\\
&\leq \alpha\sup_{F\in\mathcal F^\pi}\|F\Delta\|_{\pi,\rm ext}.
\label{eq:first-term-policy-contraction}
\end{align}
For the quotient term, choose the constant vector $E_\Delta\Delta$. Then
\begin{align}
\inf_{c\in\mathbb R}\|\kp_\Delta^\pi\Delta-c\e\|_{\pi,\rm ext}
&\leq \|\kp_\Delta^\pi\Delta-E_\Delta\Delta\|_{\pi,\rm ext} \nonumber\\
&=\|F_\Delta\Delta\|_{\pi,\rm ext}
\leq \sup_{F\in\mathcal F^\pi}\|F\Delta\|_{\pi,\rm ext}.
\label{eq:second-term-policy-contraction}
\end{align}
Combining \eqref{eq:first-term-policy-contraction} and \eqref{eq:second-term-policy-contraction} with the definition \eqref{eq:robustseminorm_pi_construction} gives
\begin{align}
\|\mathbf T_g^\pi(V_1)-\mathbf T_g^\pi(V_2)\|_{\pi,\cp}
&=\|\kp_\Delta^\pi\Delta\|_{\pi,\cp} \nonumber\\
&\leq (\alpha+\xi)\sup_{F\in\mathcal F^\pi}\|F\Delta\|_{\pi,\rm ext} \nonumber\\
&\leq (\alpha+\xi)\|\Delta\|_{\pi,\cp}.
\end{align}
Thus \eqref{eq:contractiongamma} holds with $\gamma=\alpha+\xi<1$, and this $\gamma$ is independent of $\pi$.
\end{proof}
\end{lemma}

Unlike the policy-evaluation settings in \citep{xu2025finite}, the actor–critic procedure repeatedly re-solves the critic at changing policies. Hence all constants used in the finite-sample analysis must be independent of the $\pi$. Lemma \ref{lem:uniform-m-a0}-\ref{lem:robust_seminorm-contraction} provide uniform $\bar m\in\mathbb N$ and $\bar a_0,\bar b_0>0$ such there exist a semi-norm $\|\cdot\|_{\pi,\cp}$ for each $\pi$ in which the robust Bellman operator contracts with the same factor $\gamma_*$. In order to carry out the remaining analysis of the critic, we now provide the uniform sample complexity bound (independent of the policy) for estimating the robust value function $V^\pi_\cp$ and robust average reward $g^\pi_\cp$ for any $\pi$.

\begin{lemma}\label{lem:uniform-ext-envelope}
Under the norm $\|\cdot\|_{\pi, \rm ext}$ constructed in \eqref{eq:extremalnorm_pi_def}, there exists a constant $\bar B<\infty$, independent of $\pi$, such that for all $\pi\in\Pi, x\in\mathbb R^{S}$
\begin{equation}
\|x\|_{\pi,\rm ext}\ \le\ \bar B\|x\|_2.
\end{equation}

\begin{proof}
Fix $\pi\in\Pi$ and a product $F_k\cdots F_1$ with $F_i=F_{\kp_i}^\pi\in\mathcal F^\pi$. Write $M_i=\kp_i^\pi$. Since $F_i\e=0$ and $E_{\kp_i}^\pi z$ is a constant vector for every $z$, an induction gives
\begin{equation}
F_k\cdots F_1x=M_k\cdots M_1x-c_k\e
\end{equation}
for some scalar $c_k$. Hence $\|F_k\cdots F_1x\|_{\mathrm{sp}}=\|M_k\cdots M_1x\|_{\mathrm{sp}}$. By Corollary~\ref{cor:AC-radius-to-JSR-primitive}, every product of $\bar m$ admissible induced kernels has Dobrushin coefficient at most $1-\beta$. Since Dobrushin coefficients are submultiplicative and at most one, if $k=q\bar m+r$ with $0\leq r<\bar m$, then
\begin{equation}
\|M_k\cdots M_1x\|_{\mathrm{sp}}\leq (1-\beta)^q\|x\|_{\mathrm{sp}}.
\end{equation}
For $k\geq1$, the vector $F_k\cdots F_1x$ has zero mean under the stationary distribution of $M_k$, which has strictly positive entries by Corollary~\ref{cor:AC-radius-to-JSR-primitive}. Therefore $0$ lies between its minimum and maximum, so $\|F_k\cdots F_1x\|_2\leq \sqrt S\|F_k\cdots F_1x\|_{\mathrm{sp}}$. Using $\|x\|_{\mathrm{sp}}\leq 2\|x\|_2$, for $k\geq1$ we obtain
\begin{equation}
\|F_k\cdots F_1x\|_2\leq 2\sqrt S(1-\beta)^q\|x\|_2.
\end{equation}
Since $\alpha>(1-\beta)^{1/\bar m}$, we have $(1-\beta)/\alpha^{\bar m}<1$. Thus, for $k=q\bar m+r\geq1$,
\begin{equation}
\alpha^{-k}\|F_k\cdots F_1x\|_2\leq 2\sqrt S\alpha^{-r}\left(\frac{1-\beta}{\alpha^{\bar m}}\right)^q\|x\|_2\leq 2\sqrt S\alpha^{-(\bar m-1)}\|x\|_2.
\end{equation}
For $k=0$, the term in \eqref{eq:extremalnorm_pi_def} equals $\|x\|_2$. Taking the supremum over $k$ and over all admissible products gives
\begin{equation}
\|x\|_{\pi,\rm ext}\leq \max\{1,2\sqrt S\alpha^{-(\bar m-1)}\}\|x\|_2.
\end{equation}
The constant $\bar B:=\max\{1,2\sqrt S\alpha^{-(\bar m-1)}\}$ is finite and independent of $\pi$.
\end{proof}

\end{lemma}

\begin{lemma}\label{lem:uniform-bridge-linf}
Let $\bar B<\infty$ be the policy-uniform constant from Lemma~\ref{lem:uniform-ext-envelope}, i.e.
$\|x\|_{\pi,\rm ext}\leq \bar B\|x\|_2$ for all $\pi,x$. Then, for all $\pi\in\Pi$ and $x\in\mathbb R^{\mathcal S}$,
\begin{equation}\label{eq:bridge-centered-linf}
\xi \|x\|_{\infty,0}
\leq
\|x\|_{\pi,\mathcal P}
\leq
(\alpha+\xi)\bar B\sqrt{S}\|x\|_{\infty,0},
\end{equation}
where $\|x\|_{\infty,0}:=\min_{c\in\mathbb R}\|x-c\mathbf e\|_\infty$ is the centered $\ell_\infty$ semi-norm.
Equivalently,
\begin{equation}\label{eq:bridge-span}
\frac{\xi}{2}\|x\|_{\mathrm{sp}}
\leq
\|x\|_{\pi,\mathcal P}
\leq
\frac{(\alpha+\xi)\bar B\sqrt{S}}{2}\|x\|_{\mathrm{sp}}.
\end{equation}
Here $\xi$ and $\alpha$ are chosen in Lemma~\ref{lem:robust_seminorm-contraction} and are independent of $\pi$.
\end{lemma}

\begin{proof}
Let $c_\infty\in\arg\min_{c\in\mathbb R}\|x-c\mathbf e\|_\infty$ and write $z:=x-c_\infty\mathbf e$. Since $F\mathbf e=\mathbf 0$ for all $F\in\mathcal F^\pi$, we have $Fx=Fz$. By the definition of $\|x\|_{\pi,\mathcal P}$ in \eqref{eq:robustseminorm_pi_construction}, the extremal norm contraction, and the choice of $c_\infty$,
\begin{align}
\|x\|_{\pi,\mathcal P}
&\leq \sup_{F\in\mathcal F^\pi}\|Fz\|_{\pi,\rm ext}+\xi\|z\|_{\pi,\rm ext} \nonumber\\
&\leq (\alpha+\xi)\|z\|_{\pi,\rm ext} \nonumber\\
&\leq (\alpha+\xi)\bar B\|z\|_2 \nonumber\\
&\leq (\alpha+\xi)\bar B\sqrt S\|z\|_\infty \nonumber\\
&= (\alpha+\xi)\bar B\sqrt S\|x\|_{\infty,0}.
\end{align}
This proves the upper bound in \eqref{eq:bridge-centered-linf}. For the lower bound, drop the first term in \eqref{eq:robustseminorm_pi_construction} and use the $k=0$ term in $\|\cdot\|_{\pi,\rm ext}$:
\begin{align}
\|x\|_{\pi,\mathcal P}
&\geq \xi\inf_{c\in\mathbb R}\|x-c\mathbf e\|_{\pi,\rm ext} \nonumber\\
&\geq \xi\inf_{c\in\mathbb R}\|x-c\mathbf e\|_2 \nonumber\\
&\geq \xi\inf_{c\in\mathbb R}\|x-c\mathbf e\|_\infty \nonumber\\
&= \xi\|x\|_{\infty,0}.
\end{align}
Finally, $\|x\|_{\infty,0}=\tfrac12(\max_i x_i-\min_i x_i)=\tfrac12\|x\|_{\mathrm{sp}}$ yields \eqref{eq:bridge-span}.
\end{proof}

\begin{remark}\label{rem:uniform-ergodicity}
By the above lemmas, the sampling chains are geometrically ergodic with a {policy-uniform} mixing time
\begin{equation}\label{eq:bar-tmix}
\bar t_{\text{mix}} := \bar m \left\lceil \frac{\log 4}{-\log(1-\beta)}\right\rceil.
\end{equation}
and stationary distributions admit a policy-uniform lower bound. Consequently, the semi--Lyapunov constants (Section B.1.2 in \citep{xu2025finite}) in the policy evaluation analysis now admit {policy-uniform} versions. As in Appendix~\ref{appendix4biasedSA}, choose the Moreau smoothing parameter in this policy-evaluation Lyapunov construction sufficiently small, using the uniform contraction factor $\gamma<1$, so that the resulting policy-uniform constants satisfy
\begin{equation}
\label{eq:bar-alpha2-positive-choice}
\bar\vartheta\coloneqq \gamma\sqrt{\bar c_u/\bar c_\ell}
\leq
\frac{1+\gamma}{2}<1,
\qquad
\bar\alpha_2\coloneqq \frac{1-\bar\vartheta}{2}\geq \frac{1-\gamma}{4}>0.
\end{equation}
This is again only a choice of the smooth Lyapunov function. With this choice, there exist $\bar c_\ell>0$ and $\bar c_u<\infty$ such that the inequalities of the form
$
\bar c_\ell\,M_{\overline{E}}(x)\ \le\ \tfrac12\|x\|^2_{\mathcal N,\overline{E}}\ \le\ \bar c_u\,M_{\overline{E}}(x)
$
hold uniformly over $\pi$ (notations borrowed from Section B.1.2 of \cite{xu2025finite}). In addition, by setting $ \bar c_{\mathcal P}:=\xi $ and $
\bar C_{\mathcal P}:=(\alpha+\xi)\bar B\,\sqrt{|\mathcal S|}$, the coefficients $ \bar c_{\mathcal P} $ and $ \bar C_{\mathcal P}$ also recover the coefficients  $  c_{\mathcal P} $ and $  C_{\mathcal P}$ from Section B.1.2 of \cite{xu2025finite} uniformly independent of $\pi$.
\end{remark}

We are now able to present the formal results for the critic estimation bounds.

\begin{theorem}\label{thm:uniform-V}
Let $\pi \in \Pi$ and $\kp \in \cp$. Consider the iterates $(V^\pi_t,g^\pi_t)$ from Algorithm~\ref{alg:RobustTD} with stepsizes $\beta_t := \frac{1}{t+1}$ and $\eta_t := \frac{2}{\bar\alpha_2(t+K)}$. Under Assumption~\ref{ass:AC} and the radii of Corollary~\ref{cor:AC-radius-to-JSR-primitive}, define the policy-uniform constants as follows.
Let $\gamma := \alpha+\xi$ be the contraction factor from Lemma~\ref{lem:robust_seminorm-contraction}. Let $\bar t_{\mathrm{mix}}$ be as in \eqref{eq:bar-tmix}. Set
\begin{equation}
\bar c_{\mathcal P}:=\xi,\qquad
\bar C_{\mathcal P}:=(\alpha+\xi)\bar B\,\sqrt{|\mathcal S|}.
\end{equation}
where $\bar B$ is the policy-uniform constant in Lemma~\ref{lem:uniform-ext-envelope}. Let $\bar c_\ell$ and $\bar c_u$ denote the policy-uniform semi--Lyapunov constants from Remark~\ref{rem:uniform-ergodicity} (notations borrowed from Section B.1.2 of \cite{xu2025finite}). Let $\bar\rho_2,\bar L,\bar G$ denote the corresponding policy-uniform constants in Section B.1.2 of \cite{xu2025finite}. Define the following
\begin{align}
\bar\vartheta &:= \gamma\sqrt{\bar c_u/\bar c_\ell}, &
\bar\alpha_2 &:= \frac{1-\bar\vartheta}{2}, &
\bar\alpha_4 &:= \bar\rho_2\,\bar L, \nonumber\\
\bar\alpha_3 &:= 8\,\bar c_u\,\bar\alpha_4, &
K &:= \max\{4\bar\alpha_3/\bar\alpha_2^2,3\}, &
\bar C_2 &:= \frac{1}{K}+\log\Big(\frac{T-1+K}{K}\Big), \nonumber\\
\bar C_3 &:= \bar G\big(1+8\bar C_{\mathcal P}\bar t_{\mathrm{mix}}\big). \label{eq:uniform-constants}
\end{align}
By Remark~\ref{rem:uniform-ergodicity}, the Moreau smoothing parameter is chosen so that $\bar\vartheta\leq (1+\gamma)/2$, hence $\bar\alpha_2\geq (1-\gamma)/4>0$.
Finally, let $(\bar\varepsilon_{\mathrm{bias}},\bar A)$ be the (policy-uniform) bias and second-moment constants of the support-function estimator from Algorithm~\ref{alg:sampling} (cf.\ Lemma D.1 of \cite{xu2025finite}), i.e., under contamination uncertainty,

\begin{equation}
    (\bar A,\bar\varepsilon_{\mathrm{bias}})=\big(32\bar C_{\mathcal P}^2\bar t_{\mathrm{mix}}^2,\ 0\big),
\end{equation}

under TV uncertainty,
\begin{equation}
    \bar A=2\bar C_{\mathcal P}^2\big(24(1+\tfrac1\delta)\sqrt{|\mathcal S|\,2^{-N_{\max}}}\,\bar t_{\mathrm{mix}}\big)^2 + 96\bar C_{\mathcal P}^2\bar t_{\mathrm{mix}}^2 + 4608\bar C_{\mathcal P}^2(1+\tfrac1\delta)^2|\mathcal S|\,\bar t_{\mathrm{mix}}^2N_{\max},
\end{equation}
\begin{equation}
    \bar\varepsilon_{\mathrm{bias}}=
48\bar C_{\mathcal P}(1+\tfrac1\delta)\sqrt{|\mathcal S|\,2^{-N_{\max}}}\,\bar t_{\mathrm{mix}},
\end{equation}

and under Wasserstein uncertainty, 

\begin{equation}
    (\bar A,\bar\varepsilon_{\mathrm{bias}})=\Big(2\bar C_{\mathcal P}^2\big(24\sqrt{|\mathcal S|\,2^{-N_{\max}}}\,\bar t_{\mathrm{mix}}\big)^2 + 96\bar C_{\mathcal P}^2\bar t_{\mathrm{mix}}^2 + 4608\bar C_{\mathcal P}^2|\mathcal S|\,\bar t_{\mathrm{mix}}^2N_{\max},\ 
48\bar C_{\mathcal P}\sqrt{|\mathcal S|\,2^{-N_{\max}}}\,\bar t_{\mathrm{mix}}\Big).
\end{equation}

Then, for all $T\ge1$,
\begin{align*}
\mathbb{E}\!\left[\|V_T^\pi - V^\pi_\cp\|_{\infty,0}^2\right]
&\le
\frac{4K^2\bar c_u\bar C_{\mathcal P}^2}{(T+K)^2\bar c_\ell \bar c_{\mathcal P}^2}\|V_0^\pi-V^\pi_\cp\|_{\infty,0}^2 +\frac{8 \bar A \bar\alpha_4 \bar c_u}{(T+K)\bar\alpha_2^2 \bar c_{\mathcal P}^2}+ \frac{\bar c_u \bar C_3 \bar C_2 \bar\varepsilon_{\mathrm{bias}}}{\bar\alpha_2 \bar c_{\mathcal P}^2},\\
\mathbb{E}\!\left[ |g_T^\pi - g^\pi_\cp|^2 \right]
&\le
\frac{\bar B_1}{T+K}+\frac{\bar B_2 \log^2(T)}{(T+K)^2}+\frac{\bar B_3\bar t_{\mathrm{mix}}^{2} N_{\max}\log^2(T)}{(T+K)(1-\gamma)^2}+\frac{\bar B_4\bar t_{\mathrm{mix}}^{2}2^{-N_{\max}/2}\log^3(T)}{1-\gamma}+\bar B_5 \bar t_{\mathrm{mix}}^{2}2^{-N_{\max}}\log^2(T),
\end{align*}
for some absolute constants $\bar B_1,\bar B_2,\bar B_3,\bar B_4,\bar B_5>0$ depending only on the policy-uniform quantities above.
\end{theorem}

\begin{proof}
The proof is identical to the robust value bound and robust average reward bound in the policy evaluation analysis (Theorem D.2 and D.3 of \cite{xu2025finite}): use Lemma~\ref{lem:robust_seminorm-contraction} for the one-step contraction in $\|\cdot\|_{\pi,\mathcal P}$ with the uniform modulus $\gamma=\alpha+\xi$; translate all norm occurrences to the centered $\ell_\infty$ semi-norm using Lemma~\ref{lem:uniform-bridge-linf} and Remark~\ref{rem:uniform-ergodicity} (uniform $\bar c_{\mathcal P},\bar C_{\mathcal P}$); and replace $(c_\ell,c_u,t_{\mathrm{mix}})$ by $(\bar c_\ell,\bar c_u,\bar t_{\mathrm{mix}})$ per Remark~\ref{rem:uniform-ergodicity}.
\end{proof}

We now start the formal proof of Theorem \ref{thm:Qestimation}. Define the error decomposition
\begin{equation}
\|\hat Q^\pi - Q^\pi\|_\infty \leq |g_T^\pi - g^\pi_\cp|+\max_{s,a}\bigl|\bar\sigma_{\cp_s^a}(V_T^\pi) - \sigma_{\cp_s^a}(V^\pi_\cp)\bigr|.
\end{equation}
Squaring and taking expectation gives
\begin{equation} \label{eq:Qestimationerror}
\E\bigl[\|\hat Q^\pi - Q^\pi\|_\infty^2\bigr]
\leq
2\E\bigl[|g_T^\pi-g^\pi_\cp|^2\bigr]
+
2\E\Bigl[\max_{s,a}\bigl(\bar\sigma_{\cp_s^a}(V_T^\pi) - \sigma_{\cp_s^a}(V^\pi_\cp)\bigr)^2\Bigr].
\end{equation}
By Theorem~\ref{thm:uniform-V} (the second inequality), we have the following direct second-moment bound:
\begin{align}
\mathbb{E}\!\left[ \left| g_T^\pi - g^\pi_{\mathcal P} \right|^2 \right]
\le\;&
\frac{\bar B_1}{T+K}
+\frac{\bar B_2 \log^2(T+K)}{(T+K)^2}
+\frac{\bar B_3 \,\bar t_{\mathrm{mix}}^{2} N_{\max}\log^2(T+K)}{(T+K)(1-\gamma)^2} \nonumber\\
&\quad+
\frac{\bar B_4 \,\bar t_{\mathrm{mix}}^{2}2^{-N_{\max}/2}\log^3(T+K)}{1-\gamma}
+\bar B_5 \,\bar t_{\mathrm{mix}}^{2}2^{-N_{\max}}\log^2(T+K),
\label{eq:g-second-moment-AC}
\end{align}
where $\bar B_1,\dots,\bar B_5>0$ are absolute constants independent of $\pi$.

Regarding the second term on the RHS of \eqref{eq:Qestimationerror}, write
\begin{equation}
\bar\sigma_{\cp_s^a}(V_T^\pi)
=
\frac1{L_Q}
\sum_{\ell=1}^{L_Q}
\hat\sigma^{(\ell)}_{\cp_s^a}(V_T^\pi).
\end{equation}
For any $(s,a)$,
\begin{align}
\left|
\bar\sigma_{\cp_s^a}(V_T^\pi)
-
\sigma_{\cp_s^a}(V^\pi_\cp)
\right|^2
&\leq
2\left|
\bar\sigma_{\cp_s^a}(V_T^\pi)
-
\sigma_{\cp_s^a}(V_T^\pi)
\right|^2  \nonumber\\
&\quad+
2\left|
\sigma_{\cp_s^a}(V_T^\pi)
-
\sigma_{\cp_s^a}(V^\pi_\cp)
\right|^2 .
\end{align}
The support map is $1$-Lipschitz in $\|\cdot\|_\infty$, hence
\begin{equation}
\left|
\sigma_{\cp_s^a}(V_T^\pi)
-
\sigma_{\cp_s^a}(V^\pi_\cp)
\right|
\leq
\|V_T^\pi-V^\pi_\cp\|_\infty .
\end{equation}
Both $V_T^\pi$ and $V^\pi_\cp$ are anchored at $s_0$, so $\|V_T^\pi-V^\pi_\cp\|_\infty\leq \|V_T^\pi-V^\pi_\cp\|_{\mathrm{sp}}=2\|V_T^\pi-V^\pi_\cp\|_{\infty,0}$.
Conditional on $V_T^\pi$, the samples $\{\hat\sigma^{(\ell)}_{\cp_s^a}(V_T^\pi)\}_{\ell=1}^{L_Q}$ are independent. Therefore Lemma~\ref{lem:xuthm3-5} gives
\begin{equation}
\mathbb E\!\left[
\left|
\bar\sigma_{\cp_s^a}(V_T^\pi)
-
\sigma_{\cp_s^a}(V_T^\pi)
\right|^2
\,\middle|\,V_T^\pi
\right]
\leq
\frac{\bar A_\sigma}{L_Q}
+
\bar\varepsilon_\sigma^2,
\end{equation}
where $\bar A_\sigma=O(1)$ for contamination uncertainty and
$\bar A_\sigma=O(N_{\max})$ for TV and Wasserstein uncertainty, while
$\bar\varepsilon_\sigma=0$ for contamination and
$\bar\varepsilon_\sigma=O(2^{-N_{\max}/2})$ for TV and Wasserstein.

We now bound the maximum over all state-action pairs. Let $n=|\mathcal S||\mathcal A|$. Under contamination uncertainty, conditional on $V_T^\pi$, the centered variables $\hat\sigma^{(\ell)}_{\cp_s^a}(V_T^\pi)-\sigma_{\cp_s^a}(V_T^\pi)$ are mean-zero and bounded in absolute value by $\|V_T^\pi\|_{\mathrm{sp}}$. Therefore, the standard maximal bound for bounded empirical averages gives
\begin{equation}
\label{eq:contamination-maximal-support-bound}
\mathbb E\!\left[
\max_{s,a}
\left|
\bar\sigma_{\cp_s^a}(V_T^\pi)
-
\sigma_{\cp_s^a}(V_T^\pi)
\right|^2
\,\middle|\,V_T^\pi
\right]
\leq
\frac{C\log(2n)}{L_Q}\|V_T^\pi\|_{\mathrm{sp}}^2,
\end{equation}
for a universal constant $C>0$. Also, $\|V_T^\pi\|_{\mathrm{sp}}^2\leq 2\|V_T^\pi-V^\pi_\cp\|_{\mathrm{sp}}^2+2\|V^\pi_\cp\|_{\mathrm{sp}}^2$, and Lemma~\ref{lem:wanglemma9} gives $\|V^\pi_\cp\|_{\mathrm{sp}}\leq 4\bar t_{\mathrm{mix}}$ under the reward-span normalization. Since $\|x\|_{\mathrm{sp}}=2\|x\|_{\infty,0}$, \eqref{eq:contamination-maximal-support-bound}, Theorem~\ref{thm:uniform-V}, and the Lipschitz bound above imply that $L_Q=\widetilde{\Theta}(\epsilon^{-2})$ is sufficient for the contamination case.

For TV and Wasserstein uncertainty, Lemma~\ref{lem:xuthm3-5} gives the second-moment and bias bound above, and we use the deterministic inequality $\max_i z_i^2\leq \sum_i z_i^2$. This yields
\begin{align}
\mathbb E\!\left[
\max_{s,a}
\left|
\bar\sigma_{\cp_s^a}(V_T^\pi)
-
\sigma_{\cp_s^a}(V^\pi_\cp)
\right|^2
\right]
&\leq
2n
\left(
\frac{\bar A_\sigma}{L_Q}
+
\bar\varepsilon_\sigma^2
\right)
+
2\mathbb E\!\left[
\|V_T^\pi-V^\pi_\cp\|_\infty^2
\right].
\label{eq:Q-estimation-support-avg-bound}
\end{align}
For TV and Wasserstein uncertainty, $\bar A_\sigma=\cO(N_{\max})$ and $\bar\varepsilon_\sigma=\cO(2^{-N_{\max}/2})$. Thus, choosing $L_Q=\Theta(nN_{\max}\epsilon^{-2})$ and $N_{\max}=\Theta(\log(n/\epsilon))$ makes the RHS of \eqref{eq:Q-estimation-support-avg-bound} $\widetilde O(\epsilon^2)$.

Now set $T=\widetilde{\Theta}(\epsilon^{-2})$ large enough to absorb the logarithmic factors in \eqref{eq:g-second-moment-AC}. For TV and Wasserstein uncertainty, choose $N_{\max}=\Theta(\log(n/\epsilon))$, with a sufficiently large absolute constant so that $2^{-N_{\max}/2}\lesssim \epsilon^2$ and $n2^{-N_{\max}}\lesssim \epsilon^2$. Then \eqref{eq:g-second-moment-AC} gives
\begin{equation}
\mathbb{E}\!\left[ \left| g_T^\pi - g^\pi_{\mathcal P} \right|^2 \right]
\leq \cO(\epsilon^2)
\end{equation}
after increasing the hidden constants if necessary.

Combining \eqref{eq:Qestimationerror}, \eqref{eq:g-second-moment-AC}, the contamination bound \eqref{eq:contamination-maximal-support-bound} or the TV/Wasserstein bound \eqref{eq:Q-estimation-support-avg-bound}, and the value-function bound from
Theorem~\ref{thm:uniform-V}, we obtain
\begin{equation*}
\mathbb E\!\left[\|\hat Q^\pi-Q^\pi\|_\infty^2\right]
=
\widetilde O(\epsilon^2).
\end{equation*}
Increasing the constants in the choices of $T$, $L_Q$, and $N_{\max}$ when applicable gives
\begin{equation*}
\mathbb E\!\left[\|\hat Q^\pi-Q^\pi\|_\infty^2\right]\leq \epsilon^2 .
\end{equation*}
The sample-complexity claim follows because Algorithm~\ref{alg:RobustTD}
uses $\widetilde O(|\mathcal S||\mathcal A|\epsilon^{-2})$ samples under contamination uncertainty and $\widetilde O(|\mathcal S||\mathcal A|N_{\max}\epsilon^{-2})$ samples under TV or Wasserstein uncertainty. The final batched support-estimation step uses $nL_Q$ nominal samples under contamination uncertainty, which is $\widetilde O(|\mathcal S||\mathcal A|\epsilon^{-2})$, and has expected cost $\cO(nL_QN_{\max})$ under TV or Wasserstein uncertainty, which is $\widetilde O((|\mathcal S||\mathcal A|)^2\epsilon^{-2})$.

% Since Algorithm \ref{alg:Qsampling} use $T=\Theta(\epsilon^{-2})$ samples and there are $|\mcs||\mca| \E[N_{\rm max}]$ calls each costing $\widetilde O(\epsilon^{-2})$, the total sample complexity is $\tilde O(\epsilon^{-2})$.

%%%%%%%%%%% Helper Lemma %%%%%%%%%%%%%%
\section{Some Auxiliary Lemmas for the Proofs}

\begin{lemma}[Theorem IV in \cite{berger1992bounded}]\label{lem:bergerlemmaIV}
    Let $\mathcal{Q}$ be a bounded set of square matrix such that $\rho(Q) < \infty$ for all $Q\in \mathcal{Q}$ where $\rho(\cdot)$ denotes the spectral radius. Then the joint spectral radius of $\mathcal{Q}$ can be defined as 
    \begin{equation}
        \hat{\rho}(\mathcal{Q}) \coloneqq \lim_{k\rightarrow \infty} \sup_{Q_i \in \mathcal{Q}}\rho(Q_k \ldots Q_1)^{\frac{1}{k}} = \lim_{k\rightarrow \infty} \sup_{Q_i \in \mathcal{Q}}\|Q_k \ldots Q_1\|^{\frac{1}{k}}
    \end{equation}
    where $\|\cdot\|$ is an arbitrary norm.
\end{lemma}
% \begin{lemma}[Lemma 6 in \cite{zhang2021finite}] \label{lem:zhanglemma6}
%     Under the setup and notation in Section D.1.1 of \citep{xu2025finite}, if assuming the noise has bounded variance of $\mathbb{E}[\,\|w^t\|_{\infty,\overline{E}}^2 | \mathcal{F}^t] \le A + B\|x^t - x^*\|_{\infty,\overline{E}}^2$, we have
%     \begin{equation}
%         \E\left[\|x^{t+1}-x^t\|^2_{s,\overline{E}} | \mathcal{F}^t\right] \leq (16+4B)c_u\rho_2\eta_t^2M_{\overline{E}}(x^t-x^*) + 2A\rho_2\eta_t^2.
%     \end{equation}
% \end{lemma}

% \begin{lemma}[Theorem D.1 in \cite{wang2023model}] \label{lem:wangthmD1}
%     The estimator $\hat{\sigma}_{\cp^a_s}(V)$ obtained by \eqref{eq:contaminationquery} for contamination uncertainty sets is unbiased and has bounded variance as follows:
%     \begin{equation}
%         \E\left[\hat{\sigma}_{\cp^a_s}(V)\right] = {\sigma}_{\cp^a_s}(V), \quad \text{and} \quad \mathrm{Var}(\hat{\sigma}_{\cp^a_s}(V)) \leq  \|V\|^2
%     \end{equation}
% \end{lemma}

\begin{lemma}[Ergodic case of Lemma 9 in \cite{wang2022near}]
\label{lem:wanglemma9}
Consider a finite average-reward Markov reward process induced by a stationary policy $\pi$ and transition kernel $P$, with transition matrix $P_\pi$ and stationary distribution $\nu$. Define
\begin{equation} \label{eq:tmix}
\tau_{\mathrm{mix}}\coloneqq
\arg\min_{t \geq 1}
\left\{
\max_{\mu_0}
\left\|(\mu_0 P_{\pi}^{t})^{\top}-\nu^{\top}\right\|_1
\leq \frac{1}{2}
\right\}.
\end{equation}
Let $\bar r_\pi(s)\coloneqq \sum_{a\in\mca}\pi(a|s)r(s,a)$, $g^\pi_P\coloneqq \nu^\top \bar r_\pi $, and let $R_{\rm sp}\coloneqq \max_{s,a}r(s,a)-\min_{s,a}r(s,a)$. If $P_\pi$ is irreducible and aperiodic, then $\tau_{\mathrm{mix}}<+\infty$. Moreover, for every $T\geq 1$, the centered finite-horizon value defined as
\begin{equation}
h_T^{\pi,P}(s)
\coloneqq
\mathbb E_{\pi,P}\left[
\sum_{t=0}^{T-1}
\big(\bar r_\pi(S_t)-g^\pi_P\big)
\,\middle|\, S_0=s
\right]
\end{equation}
satisfies
\begin{equation}
\|h_T^{\pi,P}\|_{\mathrm{sp}}
\leq
4R_{\rm sp}\tau_{\mathrm{mix}}.
\end{equation}
Consequently, any average-reward relative value function $h^{\pi,P}$ obtained as the limit of $h_T^{\pi,P}$, equivalently any solution to the Poisson equation up to an additive constant, satisfies
\begin{equation}
\|h^{\pi,P}\|_{\mathrm{sp}}
\leq
4R_{\rm sp}\tau_{\mathrm{mix}}.
\end{equation}
In particular, under the normalization $R_{\rm sp}\leq 1$, we have $\|h^{\pi,P}\|_{\mathrm{sp}}\leq 4\tau_{\mathrm{mix}}$.
\end{lemma}

\begin{lemma}[Linear-growth version of Lemma 6 in \cite{zhang2021finite}]
\label{lem:zhanglemma6}
Under the setup and notation in Appendix~\ref{QleariningComplexityproof}, suppose we have
\begin{equation*}
\mathbb E\!\left[
\|w^t\|_{\mathcal H,\overline E}^2
\,\middle|\,\mathcal F^t
\right]
\leq
A_0+B_0\|x^t-x^*\|_{\mathcal H,\overline E}^2 .
\end{equation*}
Let $\ell_s\coloneqq \rho_2^2$, so that $\|x\|_{s,\overline E}^2
\leq
\ell_s\|x\|_{\mathcal H,\overline E}^2$. Then we have
\begin{equation}
\E\!\left[
\|x^{t+1}-x^t\|^2_{s,\overline E}
\,\middle|\,\mathcal F^t
\right]
\leq
(16+4B_0)c_u\ell_s\eta_t^2
M_{\overline E}(x^t-x^*)
+
2A_0\ell_s\eta_t^2 .
\end{equation}
\end{lemma}

\begin{lemma}[Theorem 5.2-5.4 in \cite{xu2025finite}, Lemma 12 in \cite{wang2023model}] \label{lem:xuthm3-5}
    Let $\hat{\sigma}_{\cp^a_s}(V)$ be the estimator of ${\sigma}_{\cp^a_s}(V)$ obtained from Algorithm \ref{alg:sampling}, then the following properties hold:
    \begin{enumerate}
        \item Denote $M$ as the number of state-action pair samples needed. Then $M=1$ under contamination uncertainty set. Furthermore, under TV and Wasserstein distance uncertainty sets, $M = 2^{N'+1}$ (where $N'=\min\{N,N_{\max}\}$), which then implies,
            \begin{equation}
            \mathbb{E}[M]=N_{\max}+2=\mathcal{O}(N_{\max}).
            \end{equation}
        \item Under contamination uncertainty set, $\hat{\sigma}_{\cp^a_s}(V)$ is unbiased satisfying 
        \begin{equation}
            \E\left[\hat{\sigma}_{\cp^a_s}(V)\right] = {\sigma}_{\cp^a_s}(V).
        \end{equation}
         Under TV uncertainty set, we have a bias that is exponentially decaying with respect to the maximum level $N_{\mathrm {max}}$ as follows:
            \begin{equation}
            \abs{\mathbb{E}\bigl[\hat{\sigma}_{\cp^a_s}(V) - {\sigma}_{\cp^a_s}(V)\bigr] } \leq
            6(1+\frac{1}{\delta}) 2^{-\frac{N_{\max}}{2}}\|V\|_{\mathrm{sp}}
            \end{equation}
            where $\delta$ denotes the radius of TV distance and $\|\cdot\|_{\rm sp}$ denotes the span semi-norm. Similarly, under Wasserstein uncertainty set, we have:
            \begin{equation}
            \abs{\mathbb{E}\bigl[\hat{\sigma}_{\cp^a_s}(V) - {\sigma}_{\cp^a_s}(V)\bigr] } \leq
            6\cdot 2^{-\frac{N_{\max}}{2}}\|V\|_{\mathrm{sp}}
            \end{equation}
        \item Under contamination uncertainty set, the variance of $\hat{\sigma}_{\cp^a_s}(V)$ is bounded by the $l_2$ norm of $V$ as 
        \begin{equation}
            \mathrm{Var}(\hat{\sigma}_{\cp^a_s}(V)) \leq  \|V\|^2.
        \end{equation}
         Under TV uncertainty set, the  variance of $\hat{\sigma}_{\cp^a_s}(V)$ is bounded by is as follows:
            \begin{equation}
            \mathrm{Var}(\hat{\sigma}_{\cp^a_s}(V)) \leq  3\|V\|_{\mathrm{sp}}^2 + 144(1+\frac{1}{\delta})^2\|V\|_{\mathrm{sp}}^2 N_{\max}
            \end{equation}
        Similarly, under Wasserstein uncertainty set, we have:
            \begin{equation}
             \mathrm{Var}(\hat{\sigma}_{\cp^a_s}(V)) \leq  3\|V\|_{\mathrm{sp}}^2 + 144\|V\|_{\mathrm{sp}}^2 N_{\max}
            \end{equation}
    \end{enumerate}
\end{lemma}

\begin{lemma}[Lemma A.3 in \cite{xu2025finite}] \label{lem:xulemA.3}
    Define the Dobrushin’s coefficient of an $n$ dimensional Markov matrix $P$ as 
\begin{equation} 
\tau(P)\coloneqq 1 - \min_{i<j}\sum_{s=1}^n \min(P_{is},P_{js}),
\end{equation}
Let $\cp$ be a family of Markov matrix with the same dimension with each having a unique stationary distribution. We define the family of fluctuation matrices to be 
\begin{equation} 
    \mathcal{F} \coloneqq \{\kp - E_\kp : \kp \in \cp\}
\end{equation}
where $E_\kp$ is the matrix with all rows being identical to the stationary distribution of $\kp$. Then the joint spectral radius of the family $\mathcal{F}$ is upper bounded by the following:
\begin{equation} 
    \hat{\rho}(\mathcal{F}) \leq \inf_{m \geq 1}\left(\sup_{\kp_i \in \cp}\tau(\kp_1 \cdot \ldots \cdot \kp_m)\right)^{\frac{1}{m}}
\end{equation}
\end{lemma}

\begin{lemma} \label{lem:seminorm2norm}
For any $n\in \mathbb{N}$, let $X=\mathbb{R}^n$, let $\overline{E}=\{c\e:c\in\mathbb{R}\}$, and let $\|\cdot\|_{sn}$
be a semi-norm with kernel being $\overline{E}$.  Define the linear functional $\ell(x)=\frac{1}{n}\sum_{i=1}^n x_i$, and then define $\|x\|_n = \|x-\ell(x)\e \|_{sn} +\bigl|\ell(x)\bigr|$. Then $\|\cdot\|_n$ is a norm on $X$, and moreover
\begin{equation}
\bigl(\|\cdot\|_n \ast_{\inf}\delta_{\overline{E}}\bigr)(x)
=\inf_{c\in \mathbb{R}}\|x-c\e\|_n
= \|x\|_{sn}
\quad\text{for all }x\in X.
\end{equation}
\end{lemma}

\begin{proof}
We first show that $\|\cdot\|_n$ is a genuine norm. 
\begin{itemize}
  \item Positive homogeneity.  For any $\lambda\in\mathbb{R}$,
  \begin{equation*}
    \|\lambda x\|_n
    = \|\lambda x-\ell(\lambda x)\e\|_{sn}
      + \bigl|\ell(\lambda x)\bigr|
    = |\lambda| \|x-\ell(x)\e\|_{sn}
      + |\lambda||\ell(x)|
    = |\lambda|\|x\|_n.
  \end{equation*}
  \item Triangle inequality.  For $x,y\in X$,
  \begin{align*}
    \|x+y\|_n
    &= \|(x+y)-\ell(x+y)\e\|_{sn}
        + \bigl|\ell(x+y)\bigr|\\
    &= \|(x-\ell(x)\e)+(y-\ell(y)\e)\|_{sn}
        + \bigl|\ell(x)+\ell(y)\bigr|\\
    &\le \|x-\ell(x)\e\|_{sn}+\|y-\ell(y)\e\|_{sn}
         + |\ell(x)|+|\ell(y)|\\
    &= \|x\|_n + \|y\|_n.
  \end{align*}
  \item Positive definiteness.  If $\|x\|_n=0$, then $|\ell(x)|=0$ and
  $\|x-\ell(x)\e\|_{sn}=0$. Hence $\ell(x)=0$ and
  $x-\ell(x)\e\in\overline E$. Thus $x\in\overline E$, so $x=c\e$ for some
  $c\in\mathbb R$. Since $\ell(x)=c=0$, we obtain $x=0$.
\end{itemize}

Regarding the infimal convolution, by definition,
\begin{equation*}
(\|\cdot\|_n\ast_{\inf}\delta_{\overline{E}})(x)
=\inf_{c\in \mathbb{R}}\|x-c\e\|_n.
\end{equation*}
Then
\begin{equation*}
\|x - c\e\|_n
= \|(x-c\e)-\ell(x-c\e)\e\|_{sn}
  + \bigl|\ell(x-c\e)\bigr|.
\end{equation*}
But $\ell(x-c\e)=\ell(x)-c$ and
\begin{equation*}
(x-c\e)-\ell(x-c\e)\e
= x-\ell(x)\e
\end{equation*}
independent of $c$.  Hence
\begin{equation*}
\|x - c\e\|_n
= \|x-\ell(x)\e\|_{sn} + |\ell(x)-c|.
\end{equation*}
Minimizing over $c\in\mathbb{R}$ by taking $c=\ell(x)$ gives
\begin{equation*}
\inf_{c\in\mathbb{R}}\|x - c\e\|_n
= \|x-\ell(x)\e\|_{sn}
= \|x\|_{sn},
\end{equation*}
since $\ell(x)\e\in\ker(\|\cdot\|_{sn})$.  This completes the proof.
\end{proof}

\begin{lemma} \label{lem:normtranslations}
Let $x\in \mathbb{R}^{SA}$ satisfy $x_i=0$ for some fixed index $i$.  Then
\begin{equation*}
\|x\|_\infty\leq \|x\|_{\mathrm{sp}}
=\max_{1\le j\le n}x_j -\min_{1\le j\le n}x_j
\leq 2\|x\|_\infty.
\end{equation*}
Moreover, since all semi-norms with the same kernel spaces are equivalent, there are constants $c_{H},C_{H}>0$ such that for the semi-norm $\|\cdot\|_H$ defined in \eqref{eq:Q-learningcontraction},
\begin{equation*}
c_H\|x\|_{\mathrm{sp}}\leq \|x\|_{H}\le\;C_H\|x\|_{\mathrm{sp}}
\quad\forall\,x\in\mathbb{R}^n,
\end{equation*}
then
\begin{equation} 
c_H\|x\|_{\infty}\leq c_H\|x\|_{\mathrm{sp}}\leq\|x\|_{H}\leq C_H\|x\|_{\mathrm{sp}}\leq 2C_H \|x\|_\infty.
\end{equation}

\begin{proof}
Since $x_i=0$, for every $j$ we have $-\|x\|_\infty \le x_j\le\|x\|_\infty$.  Hence
\begin{equation*}
\max_j x_j\;\le\;\|x\|_\infty,
\quad
\min_j x_j\;\ge\;-\,\|x\|_\infty,
\end{equation*}
and so
\begin{equation*}
\|x\|_\infty = \max \{\max_j x_j - 0,0-\min_j x_j\} \leq \|x\|_{\mathrm{sp}}
=\max_j x_j - \min_j x_j
\le \|x\|_\infty -(-\|x\|_\infty)
=2\,\|x\|_\infty.
\end{equation*}
Since $\|\cdot\|_{\mathrm{sp}}$ and $\|\cdot\|_{H}$ both have the same kernel of $\{c\e : c\in\mathbb{R}\}$, by the equivalence of semi-norms, it follows that there exists $c_H$ and $C_H$ such that
\begin{equation}
c_H\|x\|_{\infty}\leq c_H\|x\|_{\mathrm{sp}}\leq\|x\|_{H}\leq C_H\|x\|_{\mathrm{sp}}\leq 2C_H \|x\|_\infty \nonumber
\end{equation}
as claimed.
\end{proof}
\end{lemma}

\section{Numerical Evaluations}

In this section, we provide some numerical evaluation results to demonstrate the performance of our algorithms. We evaluate our Algorithm \ref{alg:Qlearning} on a ride-hailing domain with five zones and Algorithm \ref{alg:AC} on a minimal three-state control loop. Both environments are average-reward problems with two actions and ergodic dynamics. We apply robust evaluation with contamination, TV distance, and Wasserstein-$1$ distance uncertainty sets around the nominal transition models; the corresponding robustness parameters are reported with each experiment.

We note that Figures \ref{fig:Q}-\ref{fig:AC} show convergence end-to-end across contamination, TV distance and Wasserstein-1 uncertainty sets with various radius. In the figure, the anchored sup-norm error $\|Q_t - Q^*\|_\infty$ decays for the $Q$-learning algorithm , while the robust average reward $g_t$ rises toward $g^*$ and stabilizes for the actor-critic algorithm. We note that the $Q^*$ and $g^*$ are computed via a coarse grid search over stationary policies with a discretized uncertainty set, so any small terminal gap may reflect {reference discretization} rather than algorithmic bias. Overall, the trends align with our $\tilde{O}(\varepsilon^{-2})$ dependence on the target accuracy. The remaining section provides the detailed settings of the numerical experiments.

\begin{figure*}[t]
  \centering   \subfloat{\includegraphics[width=.3\textwidth]{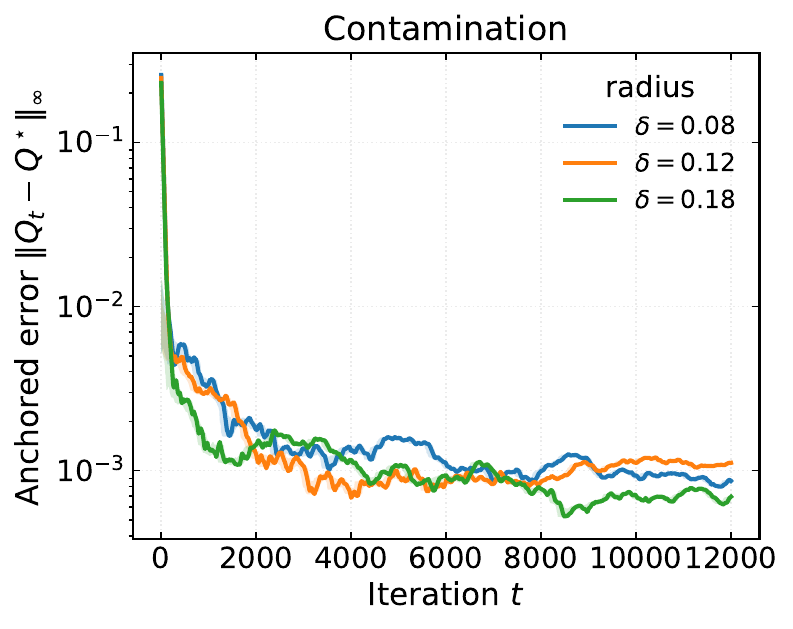}\label{fig:sub1_ride}}\hfil
  \subfloat{\includegraphics[width=.3\textwidth]{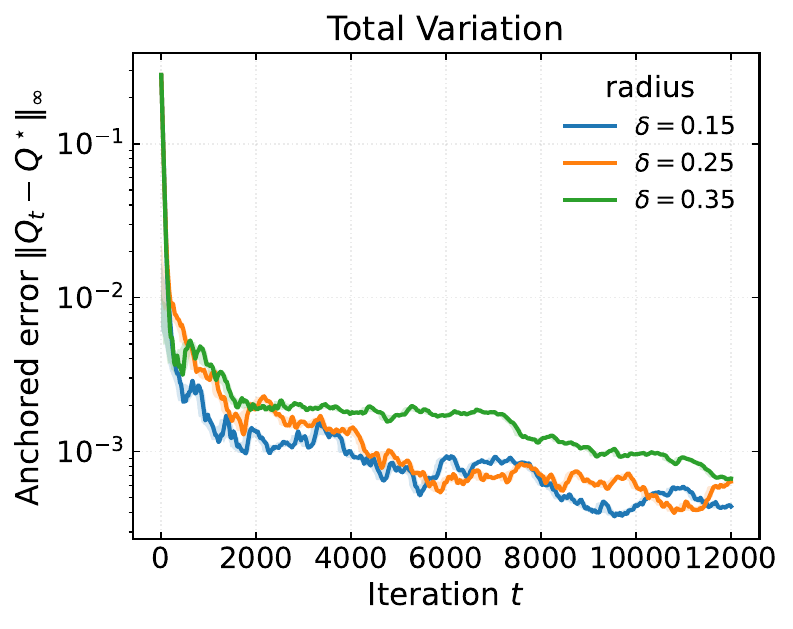}\label{fig:sub2_ride}}\hfil
  \subfloat{\includegraphics[width=.3\textwidth]{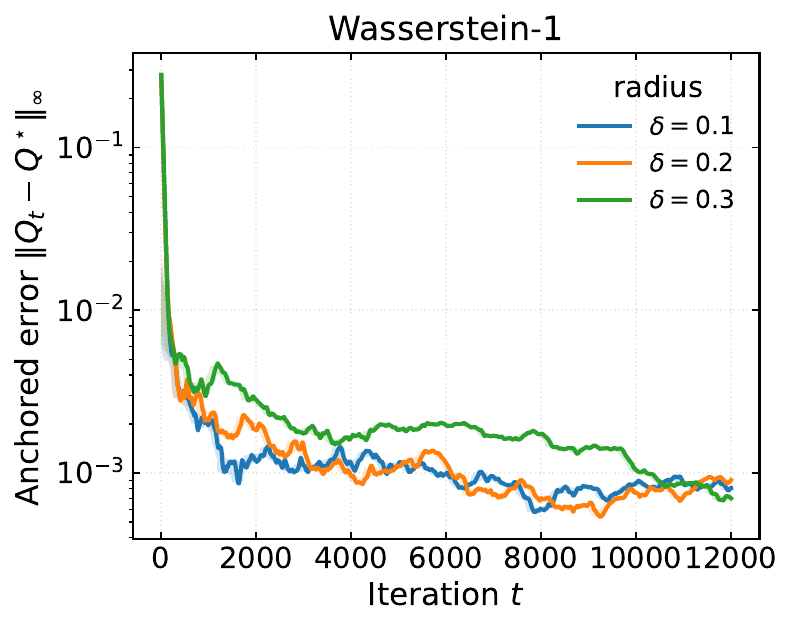}\label{fig:sub3_ride}}
  \caption{Ride-hailing via robust $Q$-learning} \label{fig:Q} 
\end{figure*}

\begin{figure*}[t] 
  \centering 
  \subfloat{\includegraphics[width=.3\textwidth]{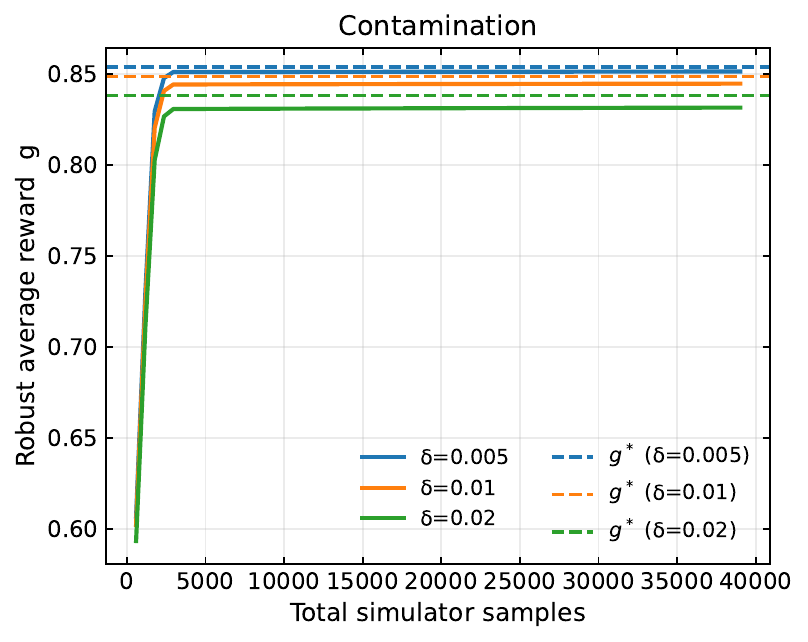}\label{fig:sub1_loop}}\hfil
  \subfloat{\includegraphics[width=.3\textwidth]{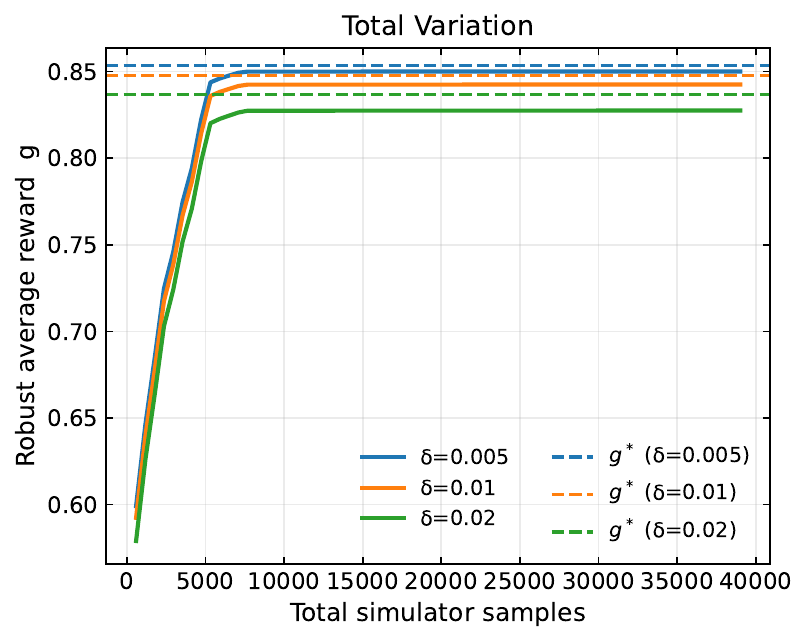}\label{fig:sub2_loop}}\hfil
  \subfloat{\includegraphics[width=.3\textwidth]{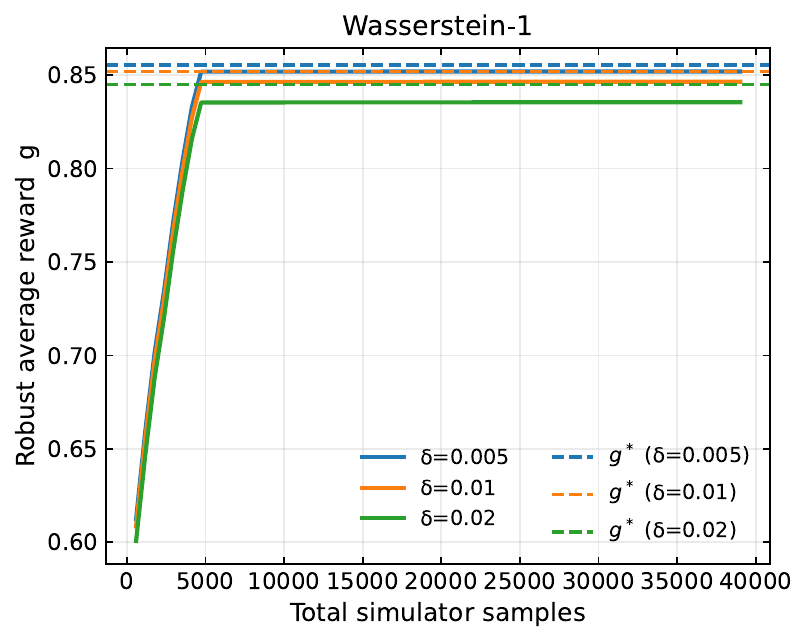}\label{fig:sub3_loop}}
  \caption{Control loop via robust actor-critic}\label{fig:AC}
\end{figure*}

\subsection{Three-state “operate vs.\ protect” loop}
This environment abstracts a system that trades off performance against resource protection (e.g., server throughput vs.\ thermal throttling, or battery discharge vs.\ charge). The state is a coarse operating level with three values: \textit{low}, \textit{medium}, and \textit{high}. Action \textit{boost} drives performance upward; action \textit{conserve} protects resources and favors staying or moving left. Rewards increase with the operating level; \textit{conserve} adds a small immediate penalty.

\paragraph{Rewards.} Let the base per-step utilities by level be $(0.20, 0.55, 0.90)$. The two actions incur penalties $(0.00, 0.05)$ respectively, so the realized rewards are:
\begin{center}
\begin{tabular}{lcc}
\toprule
State & Boost & Conserve\\
\midrule
Low    & 0.20 & 0.15\\
Medium & 0.55 & 0.50\\
High   & 0.90 & 0.85\\
\bottomrule
\end{tabular}
\end{center}

\paragraph{Transitions.} Rows index the current state; columns index the next state in the order Low, Medium, High. The \textit{boost} action pushes right ($\approx$80\% right, 17\% self, 3\% left in the interior), while \textit{conserve} pushes left (symmetrically). At the boundaries we use the exact probabilities induced by the code’s normalization.\footnote{At the edges, the “left” or “right” assignment can coincide with the self index; rows are normalized afterwards.}
\begin{center}
\textbf{Boost}
\quad
\begin{tabular}{lccc}
\toprule
 & Low & Medium & High\\
\midrule
Low    & 0.036 & 0.964 & 0.000\\
Medium & 0.030 & 0.170 & 0.800\\
High   & 0.000 & 0.100 & 0.900\\
\bottomrule
\end{tabular}

\textbf{Conserve}
\quad
\begin{tabular}{lccc}
\toprule
 & Low & Medium & High\\
\midrule
Low    & 0.900 & 0.100 & 0.000\\
Medium & 0.800 & 0.170 & 0.030\\
High   & 0.000 & 0.964 & 0.036\\
\bottomrule
\end{tabular}
\end{center}

\subsection{Five-state ride-hailing domain}
This environment models a single driver operating across five zones: Downtown, University, Airport, Suburbs, and a Depot (garage/charging). The state is the driver’s current zone. The action is an operating mode: \textit{aggressive} (accept surge/long hauls; proactive repositioning toward hubs) or \textit{conservative} (prefer short local rides; stay nearby). Immediate rewards represent net earnings by zone; hubs pay more, Depot is low, and aggressive typically yields higher per-step revenue. The nominal transition model captures destination mixes induced by each mode: aggressive drifts the driver toward major hubs and the Depot, whereas conservative increases self-loops and nearby moves. Zones are arranged along a line to define geographic closeness for Wasserstein robustness.

\paragraph{Rewards.}
\begin{center}
\begin{tabular}{lcc}
\toprule
Zone & Aggressive & Conservative\\
\midrule
Downtown  & 2.0 & 1.2\\
University& 1.6 & 1.1\\
Airport   & 2.2 & 1.4\\
Suburbs   & 1.2 & 0.9\\
Depot     & 0.8 & 0.5\\
\bottomrule
\end{tabular}
\end{center}

\paragraph{Transitions.} Rows index the current zone; columns index the next zone in the order Downtown, University, Airport, Suburbs, Depot.
\begin{center}
\textbf{Aggressive}
\quad
\begin{tabular}{lccccc}
\toprule
 & DT & Univ & Airp & Sub & Depot\\
\midrule
Downtown   & 0.20 & 0.20 & 0.30 & 0.10 & 0.20\\
University & 0.30 & 0.15 & 0.25 & 0.10 & 0.20\\
Airport    & 0.35 & 0.10 & 0.20 & 0.10 & 0.25\\
Suburbs    & 0.25 & 0.15 & 0.20 & 0.20 & 0.20\\
Depot      & 0.30 & 0.20 & 0.25 & 0.10 & 0.15\\
\bottomrule
\end{tabular}

\textbf{Conservative}
\quad
\begin{tabular}{lccccc}
\toprule
 & DT & Univ & Airp & Sub & Depot\\
\midrule
Downtown   & 0.40 & 0.20 & 0.10 & 0.20 & 0.10\\
University & 0.20 & 0.45 & 0.05 & 0.20 & 0.10\\
Airport    & 0.15 & 0.10 & 0.45 & 0.15 & 0.15\\
Suburbs    & 0.10 & 0.25 & 0.10 & 0.45 & 0.10\\
Depot      & 0.15 & 0.15 & 0.15 & 0.15 & 0.40\\
\bottomrule
\end{tabular}
\end{center}

\noindent
In both environments, the learning objective is to find a stationary policy that maximizes long-run average performance while remaining reliable under plausible misspecification of the nominal transition model. The concrete rewards and transitions above yield ergodic chains and cleanly expose the effects of contamination, total-variation, and geography-aware Wasserstein uncertainty.

\end{document}